\documentclass{article}

\usepackage{microtype}
\usepackage{graphicx}
\usepackage{subcaption}
\usepackage{tcolorbox}
\usepackage{booktabs} 
\usepackage{natbib} 

\usepackage{hyperref}

\usepackage[accepted]{icml2026}

\usepackage{amsmath}
\usepackage{amssymb}
\usepackage{mathtools}
\usepackage{amsthm}
\usepackage{titletoc}
\usepackage{hyperref}

\usepackage[capitalize,noabbrev]{cleveref}

\theoremstyle{plain}
\newtheorem{theorem}{Theorem}
\newtheorem{lemma}{Lemma}
\newtheorem{assumption}{Assumption}
\newtheorem{proposition}[theorem]{Proposition}
\newtheorem{corollary}[theorem]{Corollary}
\theoremstyle{definition}
\newtheorem{definition}{Definition}
\newtheorem{remark}[theorem]{Remark}
\newcommand{\E}{\mathbb{E}}
\newcommand{\Var}{\text{Var}}

\usepackage[textsize=tiny]{todonotes}

\icmltitlerunning{Curated Synthetic Data Doesn’t Have to Collapse}

\begin{document}

\twocolumn[
  \icmltitle{Curated Synthetic Data Doesn’t Have to Collapse: A Theoretical Study of Generative Retraining with Pluralistic Preferences}



  \icmlsetsymbol{equal}{*}

  \begin{icmlauthorlist}
\icmlauthor{Ali Falahati}{yyy}
\icmlauthor{Mohammad Mohammadi Amiri}{xxx}
\icmlauthor{Kate Larson}{yyy}
\icmlauthor{Lukasz Golab}{yyy}
  \end{icmlauthorlist}

\icmlaffiliation{yyy}{University of Waterloo}
\icmlaffiliation{xxx}{Rensselaer Polytechnic Institute}

\icmlcorrespondingauthor{Ali Falahati}{afalahat@uwaterloo.ca}

  \icmlkeywords{Machine Learning, ICML}

  \vskip 0.3in
]



\printAffiliationsAndNotice{}  

\begin{abstract}
Recursive retraining of generative models poses a critical representation challenge: when synthetic outputs are curated based on a fixed reward signal, the model tends to collapse onto a narrow set of outputs that over-optimize this objective.
Prior work suggests that such collapse is unavoidable without adding real data into the mix. We revisit this conclusion from an alignment perspective and show that collapse can be mitigated through curation based on multiple reward functions. We formalize the dynamics of recursive training under heterogeneous preferences and prove that, under certain conditions, the model converges to a stable distribution that allocates probability mass across competing high-reward regions. The limiting distribution preserves diversity and provably satisfies a weighted Nash bargaining solution, offering a formal interpretation of value aggregation in synthetic retraining loops. 
\end{abstract}

\begin{figure*}[t!]
\centering
\includegraphics[width=0.98\textwidth]{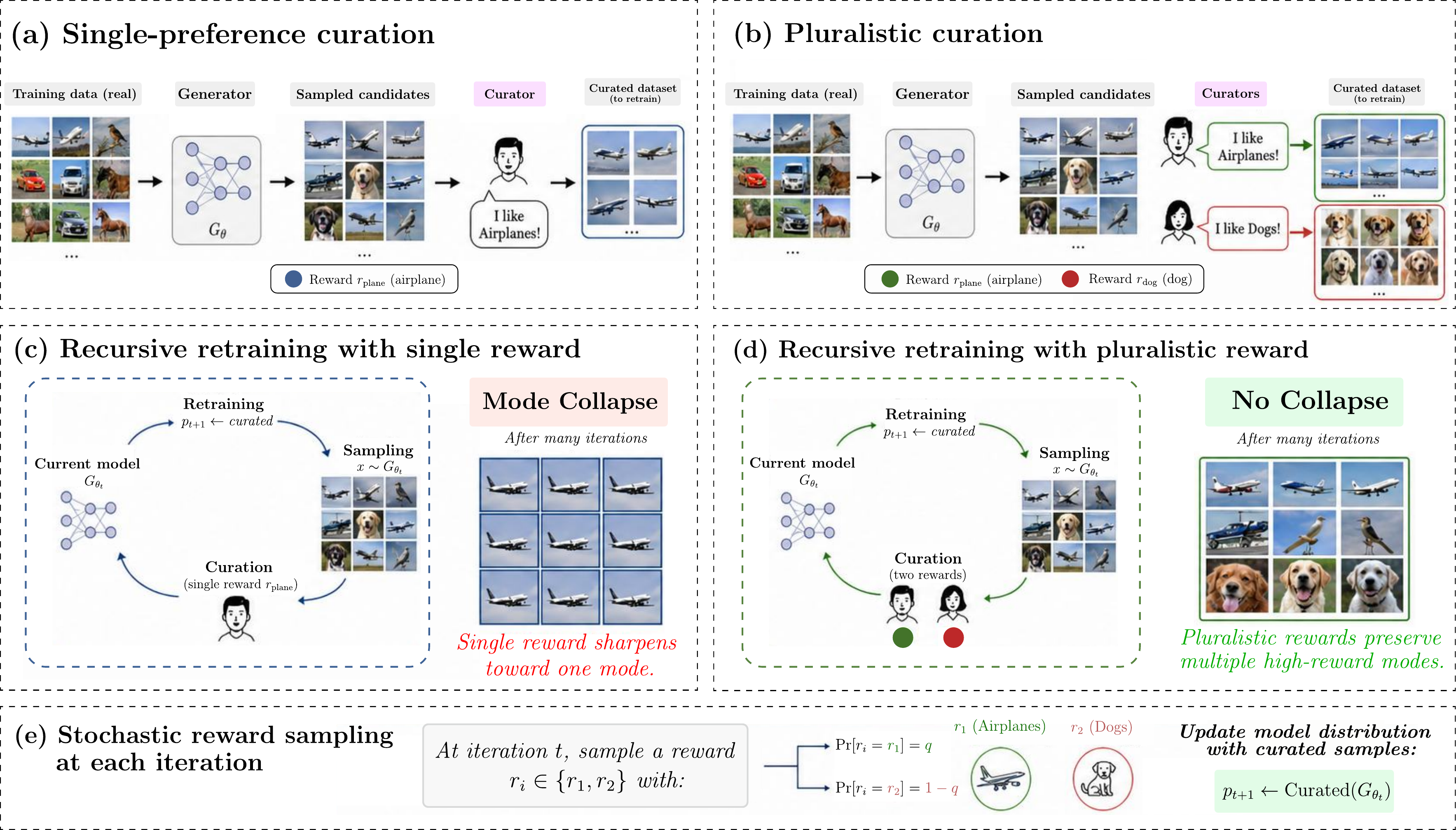}
\caption{Overview of pluralistic curation. At each retraining step, the generative model samples candidates from its current distribution. A reward function is chosen at random $r_1$ with probability $q$, or $r_2$ with probability $1 - q$ and one candidate is selected. The model is then retrained to maximize the likelihood of this curated sample. Over time, this alternating selection drives the model toward a stable mixture over the distinct high-reward regions associated with $r_1$ and $r_2$, preventing mode collapse and preserving diversity.}
\label{fig:teaser}
\end{figure*}

\section{Introduction}
\label{sec:intro}
As generative models grow in scale~\cite{bommasani2021opportunities,kaplan2020scaling}, reliance on human-generated data has become a major bottleneck~\cite{villalobos2022will}. To keep up with demand, model developers increasingly turn to synthetic data, generated by previous model iterations, as a substitute~\cite{ferbach2024self,shumailov2023curse}.  However, when synthetic data are selectively retained across model iterations, favoring samples with certain desirable traits, the model may gradually lose the ability to represent anything outside this narrow preference. This phenomenon, known as mode collapse, was formalized by \citet{ferbach2024self}, who modeled retraining as a recursive process guided by a single reward function. They showed that when training data are repeatedly selected to maximize one criterion (e.g., fluency or realism), the model distribution eventually collapses onto a small subset of outputs that optimize this reward. In practice, however, 
content creators, platform users, and other stakeholders often have conflicting preferences~\cite{falahati2025alignment,park2024rlhf,wu2023fine}. Some may value factual accuracy, while others prioritize creativity or emotional resonance. The growing use of outsourced or geographically concentrated labeling workforces~\cite{miceli2020between,gray2019ghost} further complicates the picture, introducing cultural  biases~\cite{denton2021genealogy,sambasivan2021everyone}. These real-world complexities prompt a natural question: Does recursive retraining always lead to collapse?

We study what happens when synthetic data are curated based on multiple preferences, and find that the resulting distribution does not collapse. Instead, it converges to a stable mixture, maintaining support over all high-reward regions. Diversity is preserved, reward variance remains positive, and the allocation of probability mass reflects the balance between the preferences. Moreover, we show that this limiting distribution can be understood as a fair compromise between the competing objectives. The final outcome satisfies a weighted version of the Nash bargaining solution, with the proportions of each preference guiding the trade-off. These results suggest that the outcome of synthetic retraining is not solely determined by the use of artificial data, but by how these data are curated. A full discussion of related work is provided in the Appendix~\ref{app:related}.

\textbf{Relevance to AI Alignment.} 
When artificial intelligence (AI) systems are trained or evaluated on a single metric, they often exploit this metric in unintended ways, exemplifying Goodhart's law: ``When a measure becomes a target, it ceases to be a good measure'' \cite{strathern1997improving}. Our approach of using multiple reward functions reflects the value pluralism problem in alignment, where different stakeholders may have legitimate but conflicting preferences that must be balanced rather than collapsed into a single objective \cite{gabriel2020artificial}. Furthermore, as AI systems increasingly generate training data for future systems, understanding how to maintain diversity and prevent degenerative feedback loops becomes crucial for long-term alignment and safety \cite{christiano2018supervising}. 

This work contributes to the AI alignment literature by exploring how pluralistic preferences can prevent degenerative behaviors in self-improving generative systems. In contexts such as reinforcement learning from human feedback (RLHF), single-objective optimization often leads to reward hacking or mode collapse, where models exploit narrow interpretations of preferences at the expense of robustness and inclusivity \cite{casper2023open}. By modeling curation as a bargaining process among rewards, our framework promotes equitable aggregation of values, akin to approaches in democratic fine-tuning \cite{sorensen2024roadmap} and constitutional AI \cite{bai2022constitutional}, where diverse stakeholder inputs are synthesized to achieve more holistic alignment. Our weighted Nash bargaining interpretation offers a principled framework for aggregating multiple human preferences in a fair and stable manner, addressing concerns about whose values AI systems should optimize for and how to avoid marginalizing minority preferences during alignment. 

In summary, we make the following contributions:

\textbf{(1) Pluralistic retraining dynamics and basin concentration.}
We formalize recursive retraining under heterogeneous preferences, where each step curates samples via a Bradley-Terry (BT) choice model using a reward drawn from a fixed mixture.
In the large-candidate regime, the induced update is an exponential tilt of the current model.
Under an outside-domination condition, the dynamics drive mass into the union of $\varepsilon$-optimal basins for the competing rewards.

\textbf{(2) Pluralistic limit structure and leakage-controlled mixture weights.}
We show that once outside mass vanishes, every subsequent limit is supported on the near-optimal basin union and decomposes as a two-component mixture.
The limiting basin weight is constrained by explicit leakage factors that quantify cross-reward reinforcement, and it approaches the reward sampling weight as separation increases and leakage vanishes.

\textbf{(3) Non-collapse and value-aggregation interpretation.}
We prove diversity preservation in the sense that any nontrivial two-basin limit mixture yields strictly positive reward variance, with quantitative lower bounds under basin separation.
In a plateau and hard-separation regime, the limiting mixture becomes explicit and coincides with the weighted Nash bargaining solution, clarifying how long-run behavior reflects relative preference influence.

\section{Retraining with Pluralistic Preferences}
\label{sec:setup}
In recursive retraining pipelines, synthetic samples are selected based on a reward signal that encodes desirable traits such as realism or fluency. \citet{ferbach2024self} modeled this data selection as a discrete choice process based on a single reward function \( r(x) \), with sample selection governed by the BT model~\cite{bradley1952rank}. This mirrors practical setups such as ranking large language model (LLM) outputs, where annotators select the most preferred sample from a set of candidates. The BT model assigns selection probabilities based on reward values (Equation~\ref{bt-eq}), favoring higher-scoring samples. Over repeated retraining steps, this process has been shown to concentrate probability mass on high-reward regions, eventually collapsing the model distribution onto the set of maximal outputs. In the single-reward case, repeated BT selection induces an exponential reweighting toward maximizers, which can drive mode collapse in the infinite-capacity retraining limit \cite{ferbach2024self}.

However, many real-world applications involve competing preferences. For instance, different users may value factuality, creativity, or novelty to varying degrees. To study such scenarios, we consider a variant of the retraining process in which the selection of synthetic samples alternates probabilistically between two distinct reward functions: \( r_1(x) \) and \( r_2(x) \). Each reward captures a different notion of value, such as realism versus artistic expression, and the choice of which reward to apply is randomized in each selection step. The pipeline is shown in Figure~\ref{fig:teaser}. 
Our goal is to characterize the long-run effect of this mixed-preference loop: does alternating rewards still drive the model toward a single extreme, or can it stabilize on a pluralistic distribution that retains mass on multiple high-reward regions in a way that reflects the mixture weight $q$?

\subsection{Retraining Dynamics}
\label{sec:model_dynamics}

\textbf{Setup.}
Let $\mathcal X \subseteq \mathbb R^d$ be the data domain, and let $p_{\text{data}}$ denote the (unknown) data-generating distribution on $\mathcal X$.
Let $p_t$ denote the model distribution at retraining iteration $t$ (i.e., the distribution induced by the generator after $t$ rounds).
Assume each $p_t$ has a density $p_t(x)$, so for any measurable $A \subseteq \mathcal X$, $p_t(A)=\int_A p_t(x)dx.$
Initialize $p_0 \approx p_{\text{data}}$. Each retraining iteration has three stages:

\textbf{Sampling.}
Draw $K\ge 2$ independent and identically distributed (i.i.d.)\ candidates $\{x_1,\dots,x_K\}\sim p_t$.

\textbf{Multi-preference selection.}
Independent of the sampled candidates $\{x_1,\dots,x_K\}$, choose which reward ($r:\mathcal X\to\mathbb R$) governs selection:
use reward $r_1$ with probability $q\in(0,1)$ and $r_2$ with probability $1-q$.
Conditional on the chosen reward $r \in \{r_1,r_2\}$ and the candidate set $\{x_1,\dots,x_K\}$,
select one candidate $\hat x$ according to the BT
\begin{equation}
\label{eq:bt-choice}
\mathbb P(\hat x=x_k \mid x_{1:K}, r)
=
\frac{e^{r(x_k)}}{\sum_{j=1}^K e^{r(x_j)}}.
\end{equation}

The parameter $q$ is the reward-sampling weight. It specifies how often the curation pipeline applies preference $r_1$ rather than $r_2$. Thus, $q$ can represent the relative prevalence of two user groups or a policy choice about how much influence each preference receives. $q=0$ and $q=1$ recover single-reward curation, while intermediate values define pluralistic curation.

\begin{definition}[BT weights]
\label{def:bt-weights}
Given $p$, $r$, and $K\ge 2$, define
\begin{align}
\label{bt-eq}
H^{K,r}_{p}(x):=\mathbb E_{x_2,\ldots,x_{K}\sim p}\!\left[
\frac{K\,e^{r(x)}}{e^{r(x)}+\sum_{j=2}^{K} e^{r(x_j)}}
\right].
\end{align}
\end{definition}
Intuitively, $H^{K,r}_{p}(x)$ is the expected BT win probability of candidate $x$ when compared against $K-1$ i.i.d.\ draws from $p$; values $>1$ indicate $x$ is favored under $r$ relative to typical samples from $p$ \citep{ferbach2024self}.

\textbf{Selected-sample distribution.}
Let $\tilde p^{(r)}_t$ denote the marginal density of $\hat x$ when using reward $r$ at step $t$.
Then
\begin{equation}
\label{eq:selected-marginal}
\tilde p^{(r)}_t(x)=p_t(x)\,H^{K,r}_{p_t}(x),
\qquad
\int_{\mathcal X} p_t(x)\,H^{K,r}_{p_t}(x)\,dx=1.
\end{equation}
With two rewards, the selected-sample density is the mixture
\begin{equation}
\label{eq:selected-mixture}
\tilde p_t(x)=q\,\tilde p^{(r_1)}_t(x)+(1-q)\,\tilde p^{(r_2)}_t(x).
\end{equation}

\textbf{Retraining (idealized MLE update).}
We model retraining as exact maximum likelihood over all densities $p$ on $\mathcal X$,
\[
p_{t+1}\in \arg\max_{p}\;\mathbb E_{\hat x\sim \tilde p_t}\big[\log p(\hat x)\big].
\]
Under this unconstrained objective, the maximizer is $p_{t+1}=\tilde p_t$.
This corresponds to an idealized update where retraining can represent $\tilde p_t$ exactly and optimization is solved exactly. Our goal is to determine whether recursive retraining collapses onto a single preference region or maintains support across multiple ones. We track how much mass $p_t$ assigns to neighborhoods of each reward maximizer by defining $\varepsilon$-optimal basins and studying their mass dynamics.

\textbf{Near-optimal basins.}
For $i\in\{1,2\}$ let $r_i^*:=\sup_{x\in\mathcal X} r_i(x)$.
Fix $\varepsilon>0$ and define the $\varepsilon$-optimal regions
\[
\mathcal{S}_{i,\varepsilon}:=\{x\in\mathcal X:\ r_i(x)\ge r_i^*-\varepsilon\},\qquad
\mathcal S_\varepsilon:=\mathcal{S}_{1,\varepsilon}\cup \mathcal{S}_{2,\varepsilon}.
\]
We track basin masses $a_t:=p_t(\mathcal{S}_{1,\varepsilon})$, $b_t:=p_t(\mathcal{S}_{2,\varepsilon})$,
and $m_t:=p_t(\mathcal X\setminus\mathcal S_\varepsilon)$, so $a_t+b_t+m_t=1$.

\begin{assumption}[Two-basin landscape with leakage gaps]
\label{asm:landscape-main}
Fix $\varepsilon>0$ and define $\mathcal{S}_{i,\varepsilon}$ and $\mathcal S_\varepsilon$ as above.
Assume:

\smallskip
\noindent (i) \textbf{Bounded rewards.}
For each $i\in\{1,2\}$, $r_i:\mathcal X\to\mathbb R$ is bounded: $r_i(x)\in[L_i,U_i]$ for all $x\in\mathcal X$.

\smallskip
\noindent (ii) \textbf{Disjoint near-optimal basins with nontrivial initialization.}
$\mathcal{S}_{1,\varepsilon}\cap \mathcal{S}_{2,\varepsilon}=\emptyset$ and $p_0(\mathcal{S}_{1,\varepsilon})>0$, $p_0(\mathcal{S}_{2,\varepsilon})>0$.

\smallskip
\noindent (iii) \textbf{Outside domination and cross-reward leakage.}
In the large-$K$ dynamics, define
\[
Z_i(t):=\mathbb E_{p_t}[e^{r_i(x)}],\quad
W_t(x):=
q\,\frac{e^{r_1(x)}}{Z_1(t)}
+
(1-q)\,\frac{e^{r_2(x)}}{Z_2(t)} .
\]
There exists $\rho_\varepsilon\in(0,1)$ such that
\[
\sup_{x\notin\mathcal S_\varepsilon} W_t(x)\le \rho_\varepsilon
\qquad\text{for all }t\ge0.
\]
Moreover, there exist $\Delta_1,\Delta_2>0$ such that
\[
x\in \mathcal S_{1,\varepsilon}
\Rightarrow
r_2(x)\le r_2^*-\Delta_2+\varepsilon,
\]
\[
x\in \mathcal S_{2,\varepsilon}
\Rightarrow
r_1(x)\le r_1^*-\Delta_1+\varepsilon .
\]
\end{assumption}

These assumptions isolate the two mechanisms used in the analysis. 
First, outside domination ensures that mass not assigned to the modeled basin union $\mathcal S_\varepsilon$ is not reinforced by the pluralistic update. 
Second, the gaps $\Delta_1$ and $\Delta_2$ control cross-reward leakage between the two basins. 
When leakage is small, the limiting basin weight approaches the reward-sampling weight $q$; when leakage is non-negligible, our bounds quantify the resulting deviation. 
We study this transition empirically by varying reward separation, with further discussion in Appendix~\ref{app:assumptions}.

\section{Theoretical Results}
\label{sec:theory}
We analyze two rewards, which already capture the essential effect of preference heterogeneity while keeping the analysis tractable. We generalize to \(M\) preferences at the end of this section.
Our result is that alternating which reward drives curation prevents the model from committing to a single ``favorite'' region, and instead drives it toward \emph{pluralistic limit points} that keep nontrivial mass on multiple high-reward basins. The logic has four steps. 
(i) When we sample $K$ candidates and select via BT, we effectively apply a soft ``advantage weighting'' to $p_t$: higher-reward points are more likely to win.
(ii) If points outside the modeled basin union have multiplicative update factor uniformly below one, then probability mass contracts into $\mathcal S_\varepsilon$.
(iii) Once most mass lies in $\mathcal S_\varepsilon$, the remaining question is how it splits between basins; this split is governed by \emph{leakage}, i.e. how much reward $r_1$ can still favor the other basin and vice versa. As leakage shrinks, each reward mostly reinforces its own basin, so the long-run mixture approaches the reward-sampling weight $q$. 
(iv) In a hard-separation idealization, the limit becomes an explicit $q$-mixture, which makes the diversity and Nash bargaining interpretations transparent. Proofs are deferred to Appendix~\ref{appendix:proof}.

\subsection{Curation as Reweighting}

Our starting point is to understand the curation step itself. At iteration $t$, we draw $K$ candidates $\{x_1,\dots,x_K\}$ from the current model $p_t$
and select one using the BT rule. The question is: how does this selection bias depend on the function $r$,
and what update does it induce on $p_t$? The first lemma is the distributional update induced by one step of sampling, BT-selection, and retraining. Recall that retraining is modeled as an unconstrained MLE update, so $p_{t+1}$ is exactly the distribution of the selected sample $\hat x$.
We also write the large-pool limit weight as
\[
H^{\infty,r}_{p}(x):=\frac{e^{r(x)}}{\E_{y\sim p}[e^{r(y)}]}.
\]

\begin{lemma}[Pluralistic curation update]
\label{lem:pluralistic_update_main}
At step $t$, the retraining update satisfies
\[
p_{t+1}(x)
=
p_t(x)\Big(q\,H^{K,r_1}_{p_t}(x)+(1-q)\,H^{K,r_2}_{p_t}(x)\Big).
\]
\end{lemma}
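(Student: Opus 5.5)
The plan is a direct computation in three stages, matching the three stages of the retraining iteration: sampling, then reward choice with BT selection, then the MLE update.

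\textbf{Selected marginal under a fixed reward.} Fix $r$ and the current density $p_t$. For any measurable $A\subseteq\mathcal X$, I would write
\[
\mathbb P(\hat x\in A\mid r)=\sum_{k=1}^K \mathbb E\!\left[\mathbf 1\{x_k\in A\}\,\frac{e^{r(x_k)}}{\sum_{j}e^{r(x_j)}}\right].
\]
The candidates are i.i.d.\ and the BT rule \eqref{eq:bt-choice} is symmetric under permuting candidates, so all $K$ terms are equal. Hence the sum is $K$ times the $k=1$ term. Conditioning on $x_1=x$ and integrating out $x_2,\dots,x_K\sim p_t$ by Fubini gives $\int_A p_t(x)H^{K,r}_{p_t}(x)\,dx$, with $H^{K,r}_{p_t}$ exactly as in Definition~\ref{def:bt-weights}. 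Fubini applies because the integrand is nonnegative and bounded by $1$. This establishes \eqref{eq:selected-marginal}. Taking $A=\mathcal X$ gives the normalization $\int p_tH^{K,r}_{p_t}=1$, since $\hat x$ is always some candidate.

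\textbf{Mixing over the reward choice.} The reward is drawn independently of the candidates, with $\mathbb P(r=r_1)=q$. The law of total probability then gives
\[
\mathbb P(\hat x\in A)=q\,\tilde p^{(r_1)}_t(A)+(1-q)\,\tilde p^{(r_2)}_t(A).
\]
So $\hat x$ has density $\tilde p_t$ as in \eqref{eq:selected-mixture}. Factoring out $p_t(x)$ yields $p_t(x)\big(qH^{K,r_1}_{p_t}+(1-q)H^{K,r_2}_{p_t}\big)$.

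\textbf{MLE step.} It remains to justify $p_{t+1}=\tilde p_t$. For any density $p$,
\[
\mathbb E_{\tilde p_t}[\log \tilde p_t]-\mathbb E_{\tilde p_t}[\log p]=\mathrm{KL}(\tilde p_t\|p)\ge 0,
\]
with equality iff $p=\tilde p_t$ almost everywhere, by Gibbs' inequality. So the unconstrained maximizer is $\tilde p_t$, unique up to null sets.

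The only delicate point is this last step. If $\mathbb E_{\tilde p_t}[\log\tilde p_t]$ were infinite, the KL identity would be ill-defined. I would therefore assume finite differential entropy, or else phrase the update as "$p_{t+1}$ equals the law of $\hat x$", as the setup already does. The symmetry/Fubini step is routine.
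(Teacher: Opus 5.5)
Your proof is correct and follows the same route as the paper. You condition on the chosen reward, use exchangeability of the i.i.d.\ candidates to obtain the selected density $p_t H^{K,r_i}_{p_t}$, average over the independent reward draw, and identify $p_{t+1}$ with the law of $\hat x$; your Gibbs-inequality justification of the MLE step, with its finite-entropy caveat, simply makes explicit what the paper asserts.
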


The expression above is exact but still depends on a $(K\!-\!1)$-sample expectation.
A key simplification occurs in the large-pool regime: the denominator concentrates, and the curation weight converges to an exponential tilt of the current model.

\begin{lemma}[Large-$K$ concentration of curation weights]
\label{lem:conc_main}
Assume $r$ is bounded and measurable. Then as $K\to\infty$,
\[
H^{K,r}_{p}(x)\ \longrightarrow\ H^{\infty,r}_{p}(x)
=\frac{e^{r(x)}}{\E_{y\sim p}[e^{r(y)}]}.
\]
\end{lemma}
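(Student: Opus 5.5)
The plan is to rewrite the $(K-1)$-sample expectation as an empirical average and then apply the law of large numbers plus dominated convergence. Fix $x$ and let $y_2,\dots,y_K\sim p$ be i.i.d. Set $Y_j:=e^{r(y_j)}$ and $\mu:=\E_{y\sim p}[e^{r(y)}]$. Dividing numerator and denominator by $K-1$ gives
\[
H^{K,r}_p(x)=\E\!\left[\frac{\tfrac{K}{K-1}\,e^{r(x)}}{\tfrac{1}{K-1}e^{r(x)}+\bar Y_{K-1}}\right],
\qquad \bar Y_{K-1}:=\frac{1}{K-1}\sum_{j=2}^{K}Y_j .
\]
Since $r$ is bounded and measurable, say $r\in[L,U]$, each $Y_j$ lies in $[e^L,e^U]$. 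So the $Y_j$ are bounded i.i.d. random variables with mean $\mu\in[e^L,e^U]$, and in particular $\mu>0$.

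The key steps are as follows.

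First, by the strong law of large numbers, $\bar Y_{K-1}\to\mu$ almost surely as $K\to\infty$. Also $\tfrac{1}{K-1}e^{r(x)}\to 0$ and $\tfrac{K}{K-1}\to1$. The map $(a,b,c)\mapsto a\,e^{r(x)}/(b+c)$ is continuous wherever $b+c>0$. Hence the integrand converges almost surely to $e^{r(x)}/\mu$.

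Second, the integrand is uniformly bounded. Its denominator is at least $\bar Y_{K-1}\ge e^L$, and $\tfrac{K}{K-1}\le 2$. So the integrand is at most $2e^{U-L}$. The bounded convergence theorem then lets us pass the limit through the expectation, which yields $H^{K,r}_p(x)\to e^{r(x)}/\mu=H^{\infty,r}_p(x)$.

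Third, I would note that the convergence is in fact uniform in $x$. This would serve the later large-$K$ dynamics, where an integrated statement is needed. To see it, write
\[
\left|\frac{\tfrac{K}{K-1}e^{r(x)}}{\tfrac{1}{K-1}e^{r(x)}+\bar Y}-\frac{e^{r(x)}}{\mu}\right|
\le C\left(\frac{1}{K-1}+|\bar Y-\mu|\right),
\]
with $C$ depending only on $L$ and $U$. Taking expectations and using $\E|\bar Y-\mu|\le \sqrt{\Var(Y)/(K-1)}\le e^{U}/\sqrt{K-1}$ gives an explicit $O(K^{-1/2})$ rate that is uniform in $x$.

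There is no serious obstacle here. The only point needing care is justifying the interchange of limit and expectation, and boundedness of $r$ handles it: it keeps the denominator away from zero and the integrand uniformly bounded. Without boundedness one would need a uniform integrability argument instead.
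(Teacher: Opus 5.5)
Your proposal is correct and is essentially the paper's proof: both write the integrand in terms of the empirical mean of $e^{r(y_j)}$, apply the strong law of large numbers, and pass the limit inside the expectation by dominated convergence, using the uniform bound that boundedness of $r$ supplies (the paper normalizes by $K$ and gets the bound $e^{U-L}$; you normalize by $K-1$ and get $2e^{U-L}$, which makes no difference). Your optional third step is also correct and is slightly sharper than the paper's separate finite-$K$ lemma: because $\bar Y\ge e^{L}$ always, your Lipschitz estimate holds with no truncation, so bounding $\mathbb{E}|\bar Y-\mu|$ by the standard deviation gives a uniform $O(K^{-1/2})$ rate, whereas the paper's Hoeffding-plus-truncation argument loses an extra $\sqrt{\log K}$ factor.
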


Combining Lemmas~\ref{lem:pluralistic_update_main} and \ref{lem:conc_main}, the pluralistic update in the large-$K$ regime becomes
\begin{align}
p_{t+1}(x)
=
p_t(x)\Big(
q\,\tfrac{e^{r_1(x)}}{Z_1(t)}+(1-q)\,\tfrac{e^{r_2(x)}}{Z_2(t)}
\Big), \label{eq:Kinf_update_main}
\end{align}
where $Z_i(t):=\E_{p_t}[e^{r_i(x)}].$ Intuitively, each step applies a multiplicative boost to points that score well under the sampled reward, followed by normalization. The $K\to\infty$ limit provides a clean “tilt-and-renormalize” update, but in practice $K$ is finite.
\begin{remark}[Finite-$K$ robustness]
\label{rem:finiteK_main}
Fix $\varepsilon>0$ and define the restricted deviation
\[
\Delta_K(r,p;\varepsilon):=\sup_{x\in\mathcal S_\varepsilon}\big|H^{K,r}_{p}(x)-H^{\infty,r}_{p}(x)\big|.
\]
In the appendix, Lemma~\ref{lem:finiteK} shows $\Delta_K(r,p;\varepsilon)=O(\sqrt{\log (K)/K})$.
Consequently, on $\mathcal S_\varepsilon$ the finite-$K$ pluralistic update can be written as the $K=\infty$ update plus a controlled perturbation:
\begin{equation}
\begin{aligned}
p_{t+1}(x)
&=
p_t(x)\Bigl[
q\,H^{\infty,r_1}_{p_t}(x)
+(1-q)\,H^{\infty,r_2}_{p_t}(x)
+\xi_t(x)
\Bigr],
\\[0.4ex]
|\xi_t(x)|
&\le
q\,\Delta_K(r_1,p_t;\varepsilon)
+(1-q)\,\Delta_K(r_2,p_t;\varepsilon).
\nonumber
\end{aligned}
\end{equation}
\end{remark}

\begin{figure*}[t!]
\centering
\includegraphics[width=\textwidth]{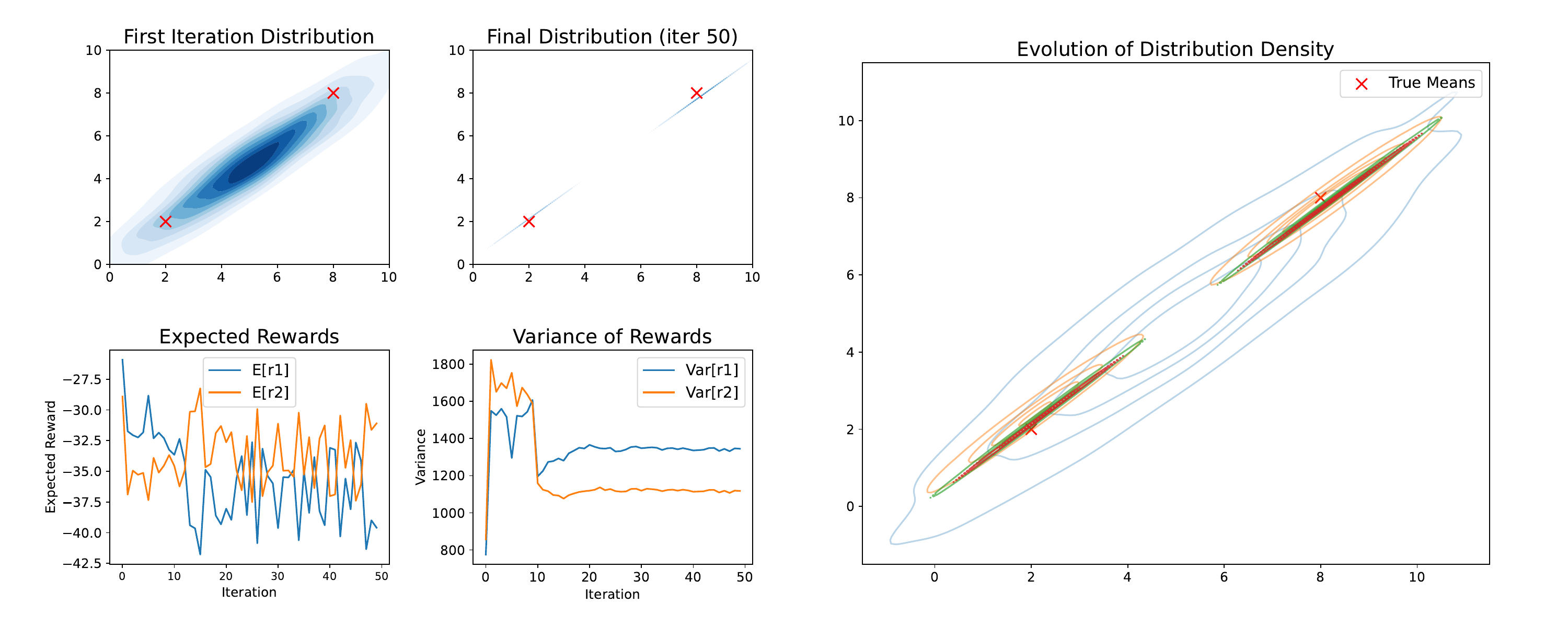}
\includegraphics[width=\textwidth]{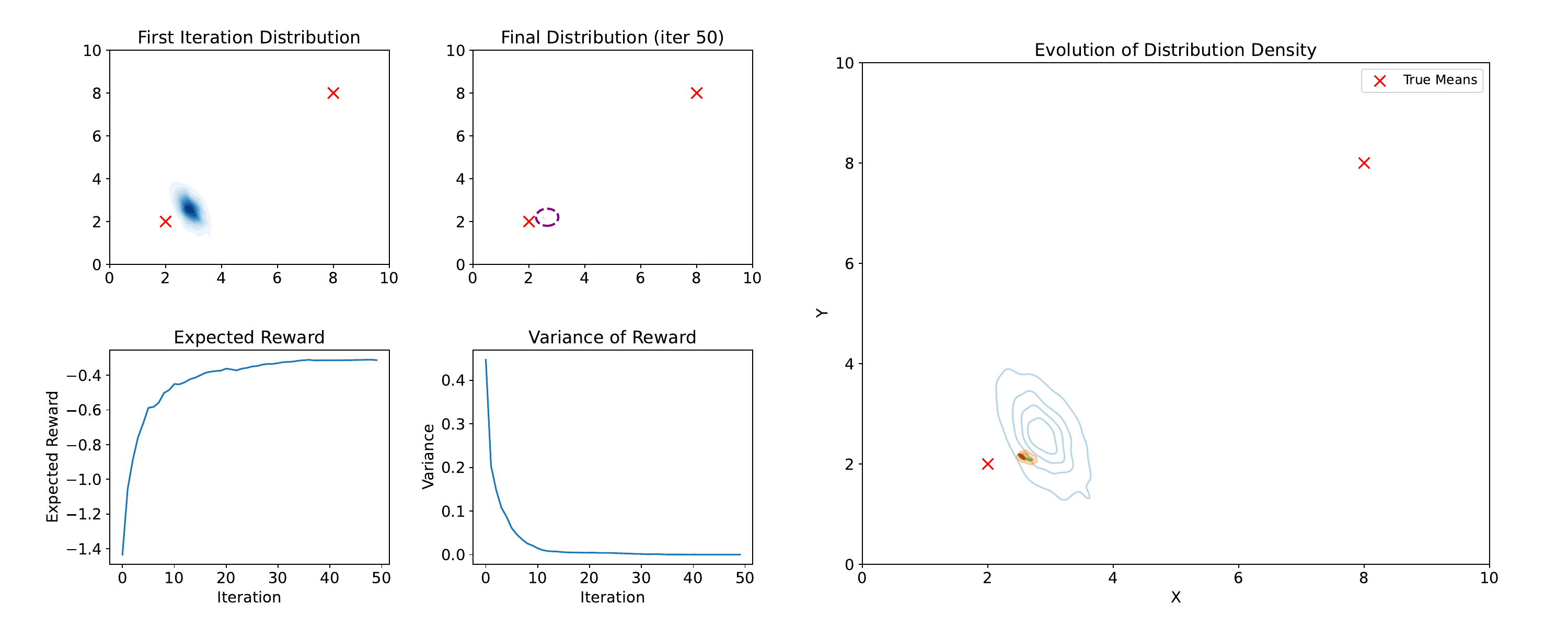}
\caption{\textbf{Synthetic Dataset.} Pluralistic curation (top) vs.\ single-reward retraining (bottom) on a two-mode Gaussian mixture with means at $\mu_1=(2,2)$ and $\mu_2=(8,8)$.
Bottom plots track expected rewards (left) and reward variances (right) across retraining steps. 
}    \label{fig:polarized_dynamics}
\end{figure*}

Thus the within-basin leakage comparisons are stable under finite $K$, up to an additional $O(\sqrt{\log K/K})$ slack. Extending the outside-mass contraction to finite $K$ requires the analogous outside-domination condition for the finite-$K$ multiplier.

\subsection{Mass Leaves Low-Reward Regions}
\label{sec:decay}

With the large-$K$ update \eqref{eq:Kinf_update_main}, we now ask: does pluralistic retraining keep probability mass outside the modeled basin union $\mathcal S_\varepsilon$? The relevant quantity is the update multiplier
\[
W_t(x)=q\frac{e^{r_1(x)}}{Z_1(t)}+(1-q)\frac{e^{r_2(x)}}{Z_2(t)}.
\]
Assumption~\ref{asm:landscape-main}(iii) states that this multiplier is uniformly below one on $\mathcal X\setminus\mathcal S_\varepsilon$. Therefore outside mass contracts geometrically under the update.
\begin{lemma}[Decay outside $\varepsilon$-optimal basins]
\label{lem:decay_main}
Fix $\varepsilon>0$ and work in the large-$K$ regime \eqref{eq:Kinf_update_main}.
Under Assumption~\ref{asm:landscape-main}(i,iii), there exists a constant
$\rho_\varepsilon\in(0,1)$ such that $m_{t+1}\le \rho_\varepsilon\,m_t$ for all $t$.
Hence $m_t\le \rho_\varepsilon^t\,m_0\to 0$ and therefore $a_t+b_t=1-m_t\to 1$, i.e.\ asymptotically all mass concentrates on $\mathcal S_\varepsilon=\mathcal{S}_{1,\varepsilon}\cup \mathcal{S}_{2,\varepsilon}$.
\end{lemma}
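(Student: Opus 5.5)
The argument integrates the large-$K$ update \eqref{eq:Kinf_update_main} over the outside region $\mathcal X\setminus\mathcal S_\varepsilon$ and applies the pointwise bound from Assumption~\ref{asm:landscape-main}(iii).

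First I check that the update is well defined. By Assumption~\ref{asm:landscape-main}(i), $e^{r_i}$ lies in $[e^{L_i},e^{U_i}]$. Hence $Z_i(t)=\E_{p_t}[e^{r_i(x)}]\in[e^{L_i},e^{U_i}]$ is finite and strictly positive for every $t$. So $W_t$ is a bounded, nonnegative, measurable function. Moreover $\int p_t W_t\,dx = q\cdot Z_1(t)/Z_1(t)+(1-q)\cdot Z_2(t)/Z_2(t)=1$, so $p_{t+1}=p_tW_t$ is again a probability density. This normalization is the only place boundedness is used.

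Next I write $p_{t+1}(x)=p_t(x)W_t(x)$ and integrate over $\mathcal X\setminus\mathcal S_\varepsilon$:
\[
m_{t+1}=\int_{\mathcal X\setminus\mathcal S_\varepsilon} p_t(x)W_t(x)\,dx\le \Big(\sup_{x\notin\mathcal S_\varepsilon}W_t(x)\Big)\int_{\mathcal X\setminus\mathcal S_\varepsilon}p_t(x)\,dx\le \rho_\varepsilon\, m_t .
\]
The inequality uses $p_t\ge 0$ together with the uniform-in-$t$ bound in Assumption~\ref{asm:landscape-main}(iii). Because that bound holds with one constant $\rho_\varepsilon$ for all $t$, induction gives $m_t\le\rho_\varepsilon^t m_0\le\rho_\varepsilon^t$. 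Since $\rho_\varepsilon<1$, this tends to $0$. By Assumption~\ref{asm:landscape-main}(ii) the basins are disjoint, so $a_t+b_t+m_t=1$, and therefore $a_t+b_t=1-m_t\to1$.

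The argument has essentially no analytic obstacle; the real content sits in the assumption. The one point needing care is that $\rho_\varepsilon$ must not depend on $t$, even though $W_t$ depends on $t$ through $Z_i(t)$. Since $Z_i(t)$ increases as mass moves into high-reward regions, the outside multiplier only shrinks over time, so a time-independent bound is plausible. Here, however, it is simply taken from Assumption~\ref{asm:landscape-main}(iii) and not derived.
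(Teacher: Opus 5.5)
Your proof is correct and is essentially the paper's argument: write $p_{t+1}=W_t p_t$, integrate over $\mathcal X\setminus\mathcal S_\varepsilon$, bound $W_t\le\rho_\varepsilon$ there by outside domination, and iterate. One caveat applies only to your closing heuristic: under the pluralistic update $Z_i(t)$ need not increase (e.g., if $a_0$ is near $1$, the $r_2$-tilt pulls mass out of $\mathcal S_{1,\varepsilon}$, so $Z_1(t)$ falls and $e^{r_1(x)}/Z_1(t)$ can grow at outside points), which is why the time-uniform bound has to be assumed rather than derived.
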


Thus, asymptotically, the distribution concentrates on the union of the two near-optimal basins.
What remains is to understand how the total mass splits between them.

\subsection{Mass Partitioning With Leakage}
\label{sec:leakage}

Once Lemma~\ref{lem:decay_main} drives $m_t\to 0$, the long-horizon behavior is governed by how the remaining mass splits between the two $\varepsilon$-optimal basins, $\mathcal{S}_{1,\varepsilon},$ and $\mathcal{S}_{2,\varepsilon}$.
The only obstruction to an ``exact'' $q$-mixture is \emph{leakage}: points in $\mathcal{S}_{2,\varepsilon}$ may still receive non-negligible reward under $r_1$ (and vice versa), so sampling $r_1$ can inadvertently reinforce the wrong basin.
Assumption~\ref{asm:landscape-main}(iii) quantifies this cross-reward reinforcement through separation margins $\Delta_1$ and $\Delta_2$, which we summarize via exponential leakage factors
\[
\kappa_1 := \exp\big(-(\Delta_1-2\varepsilon)\big),\qquad
\kappa_2 := \exp\big(-(\Delta_2-2\varepsilon)\big).
\]
Heuristically, $\kappa_1$ upper-bounds how much $r_1$ can boost $\mathcal{S}_{2,\varepsilon}$ relative to $\mathcal{S}_{1,\varepsilon}$, and $\kappa_2$ does the symmetric comparison.

\begin{lemma}[Leakage-controlled limiting basin mass]
\label{lem:leakage_main}
Fix $\varepsilon>0$ and assume $\mathcal{S}_{1,\varepsilon}\cap \mathcal{S}_{2,\varepsilon}=\emptyset$ and $m_t\to 0$.
Suppose the separation margins satisfy $\Delta_1,\Delta_2>2\varepsilon$ (equivalently $\kappa_1,\kappa_2\in(0,1)$).
If $a_t\to a_\infty$, then
\[
\max\Big\{0,\frac{q-\kappa_1}{1-\kappa_1}\Big\}
\ \le\
a_\infty
\ \le\
\min\Big\{1,\frac{q}{1-\kappa_2}\Big\}.
\]
Moreover, as $\Delta_1,\Delta_2\to\infty$ (so $\kappa_1,\kappa_2\to 0$), this interval collapses to the point $a_\infty=q$.
\end{lemma}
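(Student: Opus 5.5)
The plan is to pass to the large-$K$ update \eqref{eq:Kinf_update_main}, write a one-step recursion for $a_t$, squeeze the "own-basin" and "cross-basin" reinforcement terms using the margins $\Delta_1,\Delta_2$, and take $t\to\infty$ using $a_t\to a_\infty$ and $m_t\to 0$.

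\textbf{One-step recursion.} Integrating \eqref{eq:Kinf_update_main} over $\mathcal S_{1,\varepsilon}$ gives
\[
a_{t+1}=q\,\alpha_t+(1-q)\,\beta_t,\qquad
\alpha_t:=\frac{\E_{p_t}[e^{r_1}\mathbf 1_{\mathcal S_{1,\varepsilon}}]}{Z_1(t)},\quad
\beta_t:=\frac{\E_{p_t}[e^{r_2}\mathbf 1_{\mathcal S_{1,\varepsilon}}]}{Z_2(t)} .
\]
Here $\alpha_t$ is the share of the $r_1$-tilted distribution lying in its own basin. The term $\beta_t$ is the leakage of the $r_2$-tilt into $\mathcal S_{1,\varepsilon}$. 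The symmetric identity holds for $b_{t+1}$.

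\textbf{Bounding the two terms.} By disjointness, $Z_i(t)$ splits into contributions from $\mathcal S_{1,\varepsilon}$, $\mathcal S_{2,\varepsilon}$ and the outside set, the last being at most $e^{U_i}m_t$. Assumption~\ref{asm:landscape-main}(iii) gives the pointwise bounds needed.
\begin{itemize}
\item On $\mathcal S_{1,\varepsilon}$: $e^{r_2}\le e^{r_2^*-\Delta_2+\varepsilon}$.
\item On $\mathcal S_{2,\varepsilon}$: $e^{r_2}\ge e^{r_2^*-\varepsilon}$.
\end{itemize}
Their ratio is exactly $\kappa_2$. Using the elementary monotonicity of $u\mapsto u/(u+v)$, this yields
\[
\beta_t\le \frac{\kappa_2 a_t}{\kappa_2 a_t+b_t},
\]
and symmetrically
\[
1-\alpha_t\le \frac{\kappa_1 b_t}{\kappa_1 b_t+a_t}+O(m_t).
\]
Together with the trivial bounds $\beta_t\ge 0$ and $\alpha_t\le 1$, this gives two-sided control of $a_{t+1}$ by explicit functions of $(a_t,b_t)$, up to $O(m_t)$ error terms from the outside mass.

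\textbf{Passing to the limit.} Since $a_t\to a_\infty$, $b_t\to 1-a_\infty$ and $m_t\to 0$, the bounds pass to the limit. The degenerate cases $a_\infty\in\{0,1\}$ are covered by the $\max/\min$ clipping. This gives the fixed-point inequalities
\[
q\Big(1-\tfrac{\kappa_1(1-a_\infty)}{\kappa_1(1-a_\infty)+a_\infty}\Big)\le a_\infty\le q+(1-q)\tfrac{\kappa_2 a_\infty}{\kappa_2 a_\infty+1-a_\infty}.
\]

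\textbf{Main obstacle: reaching the stated linear form.} The remaining work is to convert these fractional fixed-point inequalities into the clean bounds $a_\infty\le q/(1-\kappa_2)$ and $a_\infty\ge (q-\kappa_1)/(1-\kappa_1)$.

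For the upper bound, I would first try to show the cross term contributes at most $\kappa_2 a_\infty$, i.e.\ $(1-q)\beta_\infty\le \kappa_2 a_\infty$. This would give $a_\infty(1-\kappa_2)\le q$. To prove it, I would combine $\beta_\infty(\kappa_2a_\infty+b_\infty)\le\kappa_2a_\infty$ with the companion identity $b_\infty\ge(1-q)(1-\beta_\infty)$ from the $b$-recursion.

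If that linearization fails in some parameter range, the fallback is to solve the quadratic fixed-point inequality directly. One would then check that its root lies below $q/(1-\kappa_2)$, and symmetrically above $(q-\kappa_1)/(1-\kappa_1)$, using $\kappa_i\in(0,1)$. The lower bound follows the same way with the roles of the basins exchanged.

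\textbf{Final assertion.} The collapse of the interval to $\{q\}$ is immediate: as $\Delta_i\to\infty$ we have $\kappa_i\to 0$, and both endpoints are continuous in $(\kappa_1,\kappa_2)$ with common value $q$ at $(0,0)$.
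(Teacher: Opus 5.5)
Your argument is essentially the paper's proof of Lemma~\ref{lem:partition}. You decompose $a_{t+1}$ through the two tilted distributions, turn the margins into the factors $\kappa_i$ via monotonicity of $u\mapsto u/(u+v)$, and pass to a limiting fixed-point inequality. The only variation is the upper bound. You control the cross term $\beta_t=p_t^{(2)}(\mathcal S_{2,\varepsilon}^{c}\cap\mathcal S_{1,\varepsilon})=p_t^{(2)}(\mathcal S_{1,\varepsilon})$ inside the $a$-recursion. The paper instead drops the cross term from the $b$-recursion and uses $p_t^{(2)}(\mathcal S_{2,\varepsilon})\ge b_t/(b_t+\kappa_2 a_t+e^{\varepsilon}m_t)$. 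Once $m_t\to0$, the two give the same inequality.

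The step you flag as the main obstacle is in fact one line.
\begin{itemize}
\item Lower bound: since $1-\frac{\kappa_1 b}{\kappa_1 b+a}=\frac{a}{a+\kappa_1 b}$, your inequality is $q\,a_\infty\le a_\infty\bigl(a_\infty+\kappa_1(1-a_\infty)\bigr)$. Dividing by $a_\infty>0$ gives $a_\infty(1-\kappa_1)\ge q-\kappa_1$.
\item Upper bound: since $q+(1-q)\frac{\kappa_2 a}{\kappa_2 a+1-a}=1-(1-q)\frac{1-a}{\kappa_2 a+1-a}$, your inequality is $(1-q)(1-a_\infty)\le(1-a_\infty)\bigl(1-a_\infty+\kappa_2 a_\infty\bigr)$. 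Dividing by $1-a_\infty>0$ gives $a_\infty(1-\kappa_2)\le q$.
\end{itemize}
Equivalently, your quadratic factors as $(1-\kappa_2)(a_\infty-1)\bigl(a_\infty-\frac{q}{1-\kappa_2}\bigr)\ge0$, so the relevant root is exactly $q/(1-\kappa_2)$. Your first linearization idea, $(1-q)\beta_\infty\le\kappa_2 a_\infty$, does not follow from your bound on $\beta_t$ alone. It would need $\kappa_2 a_\infty+1-a_\infty\ge 1-q$, which is the target itself. It closes only after you invoke the $b$-recursion, which is the paper's route.

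The genuine gap is your claim that $a_\infty\in\{0,1\}$ is ``covered by the clipping.'' It is not. If $q>\kappa_1$, the lower endpoint is strictly positive. Yet $q\,a_\infty\le a_\infty\bigl(a_\infty+\kappa_1(1-a_\infty)\bigr)$ holds trivially at $a_\infty=0$, so the fixed-point step cannot exclude $a_\infty=0$. Symmetrically, it cannot exclude $a_\infty=1$ when $q<1-\kappa_2$. The conclusion really fails without more input: if $p_0(\mathcal S_{1,\varepsilon})=0$ and $q>\kappa_1$, then $a_t\equiv0$. The paper sidesteps this by assuming $a_\infty\in(0,1)$, explicitly in the appendix version, before dividing.

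To prove the statement as written, you need a dynamic argument using $p_0(\mathcal S_{1,\varepsilon}),p_0(\mathcal S_{2,\varepsilon})>0$ from Assumption~\ref{asm:landscape-main}(ii).
\begin{itemize}
\item Since $W_t>0$, you get $a_t>0$ for every $t$.
\item Keep the own-basin bound multiplicative, $\alpha_t\ge a_t/(a_t+\kappa_1 b_t+m_t)$. This is valid because $r_1<r_1^*-\varepsilon$ off $\mathcal S_\varepsilon$. An additive $O(m_t)$ error, as you wrote it, could swamp $a_t$ as $a_t\to0$.
\item If $a_t\to0$ with $q>\kappa_1$, the denominator tends to $\kappa_1<q$. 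Fix $\delta>0$ with $\kappa_1+\delta<q$. For all large $t$, $a_{t+1}\ge\frac{q}{\kappa_1+\delta}\,a_t$, a factor exceeding one, which contradicts $a_t\to0$. This is the mechanism of Lemma~\ref{lem:uniform_nocollapse_main}.
\item The same argument applied to $b_t$ rules out $a_\infty=1$ when $q<1-\kappa_2$.
\end{itemize}
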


Lemma~\ref{lem:leakage_main} shows that leakage controls how far the limiting mixture can deviate from $q$:
outside the vanishing-leakage regime, the weight is constrained to lie near $q$, and  becomes exactly $q$ only when cross-basin reinforcement disappears. The conditional nature of Lemma~\ref{lem:leakage_main} leaves open whether
the dynamics could still collapse to a single basin. The following lemma
addresses this gap: when $q$ dominates the leakage, the basin mass is eventually bounded away from both $0$ and $1$.

\begin{lemma}[Uniform non-collapse under bounded leakage]
\label{lem:uniform_nocollapse_main}
Fix $\varepsilon>0$ and work in the large-$K$ regime. Assume the decay
conclusion $m_t\to0$ and suppose $q\in(\kappa_1,1-\kappa_2)$. Define
\[
L:=\frac{q-\kappa_1}{1-\kappa_1},
\qquad
U:=\frac{q}{1-\kappa_2}.
\]
Then
\[
\liminf_{t\to\infty}a_t\ge L,
\qquad
\limsup_{t\to\infty}a_t\le U.
\]
In particular, there exist $\eta>0$ and $t_0$ such that
\[
\eta\le a_t\le 1-\eta
\qquad\text{for all }t\ge t_0.
\]
\end{lemma}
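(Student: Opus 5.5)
Lemma~\ref{lem:uniform_nocollapse_main} is a non-asymptotic version of Lemma~\ref{lem:leakage_main}, which assumed $a_t\to a_\infty$. The plan is to derive a one-step recursion inequality for $a_t$ directly from the large-$K$ update \eqref{eq:Kinf_update_main}. Then we read off the $\liminf$/$\limsup$ bounds from a contraction argument, without assuming convergence.

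\textbf{Step 1: split each normalizer into basin contributions.}
Write $Z_i(t)=\int_{\mathcal S_{1,\varepsilon}}p_te^{r_i}+\int_{\mathcal S_{2,\varepsilon}}p_te^{r_i}+\int_{\mathcal X\setminus\mathcal S_\varepsilon}p_te^{r_i}$. On $\mathcal S_{1,\varepsilon}$ we have $r_1\ge r_1^*-\varepsilon$. On $\mathcal S_{2,\varepsilon}$, Assumption~\ref{asm:landscape-main}(iii) gives $r_1\le r_1^*-\Delta_1+\varepsilon$. So for $x\in\mathcal S_{2,\varepsilon}$ and $y\in\mathcal S_{1,\varepsilon}$,
\[
e^{r_1(x)}\le \kappa_1 e^{r_1(y)},
\]
and symmetrically with $\kappa_2$ for $r_2$.

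Integrating the update over $\mathcal S_{1,\varepsilon}$ gives
\[
a_{t+1}=q\frac{A_1}{A_1+B_1+O_1}+(1-q)\frac{A_2}{A_2+B_2+O_2},
\]
where $A_i,B_i,O_i$ are the $p_te^{r_i}$-integrals over $\mathcal S_{1,\varepsilon}$, $\mathcal S_{2,\varepsilon}$ and the outside region, respectively. Then:
\begin{itemize}
\item Drop the nonnegative $r_2$-term and use $B_1\le \kappa_1 e^{r_1^*+\varepsilon}b_t$, $A_1\ge e^{r_1^*-\varepsilon}a_t$. This gives a lower bound $a_{t+1}\ge q\,\phi_1(a_t,b_t,m_t)$, with $\phi_1$ increasing in $a_t$.
\item For the upper bound, use $1-a_{t+1}-m_{t+1}=b_{t+1}\ge (1-q)\,\phi_2(b_t,a_t,m_t)$ in the same way.
\end{itemize}
The outside contributions $O_i$ are bounded by $e^{U_i}m_t\to0$, using $m_t\to0$ and bounded rewards.

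\textbf{Step 2: fix the constants in the comparison maps.}
I expect this to be the main obstacle: getting exactly $\kappa_1$ and not a worse constant. The ratio bound uses $r_1^*\pm\varepsilon$ on both sides, which is the reason $\kappa_1$ carries $2\varepsilon$. I would aim for an inequality of the form
\[
a_{t+1}\ge q-\kappa_1(1-a_t)-\delta_t,
\qquad \delta_t\to0,
\]
derived by reasoning about the fraction $A_1/(A_1+B_1)$ and how much of it can sit on the wrong basin. If the clean linear form fails, I would work directly with the monotone map $g(a)=q\,a/(a+\kappa_1(1-a))$ and analyze its fixed points instead.

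\textbf{Step 3: pass to the limit.}
Let $\ell=\liminf a_t$ and take $\liminf$ in the recursion $a_{t+1}\ge q-\kappa_1(1-a_t)-\delta_t$. This gives
\[
\ell\ge q-\kappa_1(1-\ell),
\qquad\text{i.e.}\qquad
\ell\ge \frac{q-\kappa_1}{1-\kappa_1}=L.
\]
The symmetric argument on $b_t$ gives $\limsup b_t\ge (1-q-\kappa_2)/(1-\kappa_2)$. Hence, since $m_t\to0$,
\[
\limsup a_t\le 1-\frac{1-q-\kappa_2}{1-\kappa_2}=\frac{q}{1-\kappa_2}=U.
\]

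\textbf{Step 4: conclude uniform non-collapse.}
The hypothesis $q\in(\kappa_1,1-\kappa_2)$ gives $L>0$ and $U<1$. Set $\eta=\tfrac12\min\{L,1-U\}$. The definitions of $\liminf$ and $\limsup$ then yield a $t_0$ with $\eta\le a_t\le1-\eta$ for all $t\ge t_0$.
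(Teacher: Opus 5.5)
There is a genuine gap, and it is not in the constant $\kappa_1$. Your Step 1 already gives that constant: $A_1\ge e^{r_1^*-\varepsilon}a_t$, $B_1\le e^{r_1^*-\Delta_1+\varepsilon}b_t$ and $O_1\le e^{r_1^*}m_t$ yield $a_{t+1}\ge g_t(a_t):=q\,a_t/\bigl(a_t+\kappa_1(1-a_t)+e^{\varepsilon}m_t\bigr)$.

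The problem is in Steps 2--3. The linear inequality $a_{t+1}\ge q-\kappa_1(1-a_t)-\delta_t$ cannot come from one-step bounds. With $D=a+\kappa_1(1-a)$, one has $qa/D\ge q-\kappa_1(1-a)$ if and only if $D\ge q$, i.e.\ $a\ge L$, so it fails exactly for small $a_t$. Concretely, if $p_0$ is supported on $\mathcal S_{2,\varepsilon}$, then $a_t\equiv 0$ and $m_t\equiv 0$ because the update is multiplicative. Your inequality would then give $0\ge q-\kappa_1-\delta_t\to q-\kappa_1>0$. The same example shows that any correct proof must use $a_0=p_0(\mathcal S_{1,\varepsilon})>0$, which your proposal never invokes.

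Your fallback does not close this by itself. Taking $\liminf$ in $a_{t+1}\ge g_t(a_t)$ gives only $\ell\ge g(\ell)$, where $g(a)=qa/(a+\kappa_1(1-a))$. Since $0$ is a fixed point of $g$, $\ell=0$ satisfies this. No fixed-point analysis of the limiting inequality can separate $0$ from $L$.

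The missing ingredient is dynamical. First, $0$ is a repelling fixed point, since $g(a)/a=q/(a+\kappa_1(1-a))>1$ for $a<L$. Second, the trajectory never sits on it, since $W_t>0$ and $a_0>0$ give $a_t>0$ for all $t$. To make this quantitative:
\begin{itemize}
\item Fix $\delta\in(0,L)$, choose $T$ with $e^{\varepsilon}m_t\le(1-\kappa_1)\delta/2$ for $t\ge T$, and set $c_\delta:=q/\bigl(q-(1-\kappa_1)\delta/2\bigr)>1$.
\item For $t\ge T$, if $a_t\le L-\delta$ then $a_{t+1}\ge c_\delta a_t$.
\item For $t\ge T$, if $a_t\ge L-\delta$ then, by monotonicity of $g_t$, $a_{t+1}\ge g_t(L-\delta)\ge c_\delta(L-\delta)\ge L-\delta$.
\end{itemize}
Hence $a_{T+n}\ge\min\{c_\delta^{\,n}a_T,\,L-\delta\}$. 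So $a_t\ge L-\delta$ eventually, and $\liminf_t a_t\ge L$. This is the paper's argument.

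The upper bound needs the same escape argument for $b_t$, using $b_0>0$. It must produce $\liminf_t b_t\ge(1-q-\kappa_2)/(1-\kappa_2)$, not the $\limsup$ bound you wrote. With $m_t\to 0$ one has $\limsup_t a_t=1-\liminf_t b_t$, so a $\limsup$ bound on $b_t$ says nothing about $\limsup_t a_t$. With these repairs, Step 4 goes through as you describe.
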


\begin{corollary}[Nondegenerate limit points]
\label{cor:nondegenerate_limitpoints_main}
Under the assumptions of Lemma~\ref{lem:uniform_nocollapse_main}, every subsequential limit
$p_{t_k}\Rightarrow p_\infty$ satisfies $p_\infty(\mathcal{S}_{1,\varepsilon})\in[\eta,1-\eta]$.
Hence, no limit point collapses onto a single basin.
\end{corollary}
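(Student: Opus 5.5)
The plan is to pass the eventual bounds of Lemma~\ref{lem:uniform_nocollapse_main} to any weak limit with a portmanteau argument, after first isolating the two basins from each other.

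\textbf{Step 1: uniform bounds along the sequence.} Lemma~\ref{lem:uniform_nocollapse_main} supplies $\eta>0$ and $t_0$ such that $\eta\le a_t\le 1-\eta$ for all $t\ge t_0$. I will take $\eta$ slightly smaller than $\min\{L,1-U\}$, so the interval stays nondegenerate. Since $m_t\to 0$, this gives $\eta-o(1)\le b_t=1-a_t-m_t\le 1-\eta$ as well. Any subsequence $t_k\to\infty$ eventually satisfies $t_k\ge t_0$, so $p_{t_k}(\mathcal S_{1,\varepsilon})\in[\eta,1-\eta]$ along it.

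\textbf{Step 2: pass to the limit.} Weak convergence alone only gives
\[
p_\infty(\text{closed set})\ge\limsup,\qquad p_\infty(\text{open set})\le\liminf,
\]
so the basin boundaries need care.
\begin{itemize}
\item The sets $\mathcal S_{i,\varepsilon}=\{r_i\ge r_i^*-\varepsilon\}$ are closed if $r_i$ is upper semicontinuous. Under that assumption, $p_\infty(\mathcal S_{1,\varepsilon})\ge\limsup_k a_{t_k}\ge\eta$.
\item For the upper bound, I will use that $\mathcal S_{2,\varepsilon}$ is closed and disjoint from $\mathcal S_{1,\varepsilon}$. Then $p_\infty(\mathcal S_{2,\varepsilon})\ge\limsup_k b_{t_k}\ge\eta'$, where $\eta'$ is any constant below $\eta$.
\item To conclude $p_\infty(\mathcal S_{1,\varepsilon})\le 1-\eta'$, I need $p_\infty(\mathcal S_{1,\varepsilon})+p_\infty(\mathcal S_{2,\varepsilon})\le 1$, which holds by disjointness. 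This gives $p_\infty(\mathcal S_{1,\varepsilon})\le 1-p_\infty(\mathcal S_{2,\varepsilon})\le 1-\eta'$.
\end{itemize}
Renaming $\eta'$ as $\eta$ gives the stated interval.

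\textbf{Step 3: the main obstacle, regularity.} The delicate step is justifying closedness or continuity-set properties of the basins. I would first try to assume, or note as implicit, that the $r_i$ are continuous, which makes the basins closed and is enough for Step 2.

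If the paper's notion of convergence is setwise or total-variation convergence, the step is immediate: $p_{t_k}(\mathcal S_{1,\varepsilon})\to p_\infty(\mathcal S_{1,\varepsilon})$ directly.

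Alternatively, I could avoid regularity through the natural modeling reading. The closing claim only needs that the mass on each basin stays bounded away from zero in the limit, and by the above this follows for closed basins.

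Finally, "no collapse onto a single basin" follows because both basins retain at least $\eta$ mass in $p_\infty$.
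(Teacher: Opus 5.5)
Your proposal is correct. It differs from the paper's proof in the regularity hypothesis used to transfer the eventual bounds of Lemma~\ref{lem:uniform_nocollapse_main} to the weak limit. You rightly saw that weak convergence alone does not transfer set masses; this is why the paper's appendix version adds an assumption that is absent from the main-text statement.

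The paper assumes $\mathcal S_{1,\varepsilon}$ is a $p_\infty$-continuity set. The portmanteau theorem then gives $a_{t_k}\to p_\infty(\mathcal S_{1,\varepsilon})$, and both bounds follow at once. You instead assume both basins are closed (upper semicontinuous rewards) and apply the one-sided closed-set inequality twice: on $\mathcal S_{1,\varepsilon}$ for the lower bound, and on $\mathcal S_{2,\varepsilon}$ together with disjointness for the upper bound.

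Your hypothesis concerns the rewards rather than the unknown limit, and it adds no restriction on $\varepsilon$ beyond those of the lemma. In particular, if both rewards are continuous, it removes the paper's need to choose $\varepsilon$ so that $\{r_1=r_1^*-\varepsilon\}$ is $p_\infty$-null. Your hypotheses also recover the paper's stronger conclusion: since $m_t\to 0$,
\[
\liminf_k a_{t_k} = 1-\limsup_k b_{t_k} \ge 1-p_\infty(\mathcal S_{2,\varepsilon}) \ge p_\infty(\mathcal S_{1,\varepsilon}) \ge \limsup_k a_{t_k},
\]
so $a_{t_k}\to p_\infty(\mathcal S_{1,\varepsilon})$ anyway. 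In exchange, the paper's route needs regularity of only one set, allows merely measurable rewards, and does not invoke disjointness at this step.

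The detour through $\eta'<\eta$ is unnecessary. For $t\ge t_0$ you have $b_t\ge\eta-m_t$, hence $\limsup_k b_{t_k}\ge\eta$ and $p_\infty(\mathcal S_{2,\varepsilon})\ge\eta$. This gives the interval $[\eta,1-\eta]$ with the lemma's own $\eta$, as the appendix statement requires.
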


\begin{figure*}[t!]
\centering
\includegraphics[width=\textwidth]{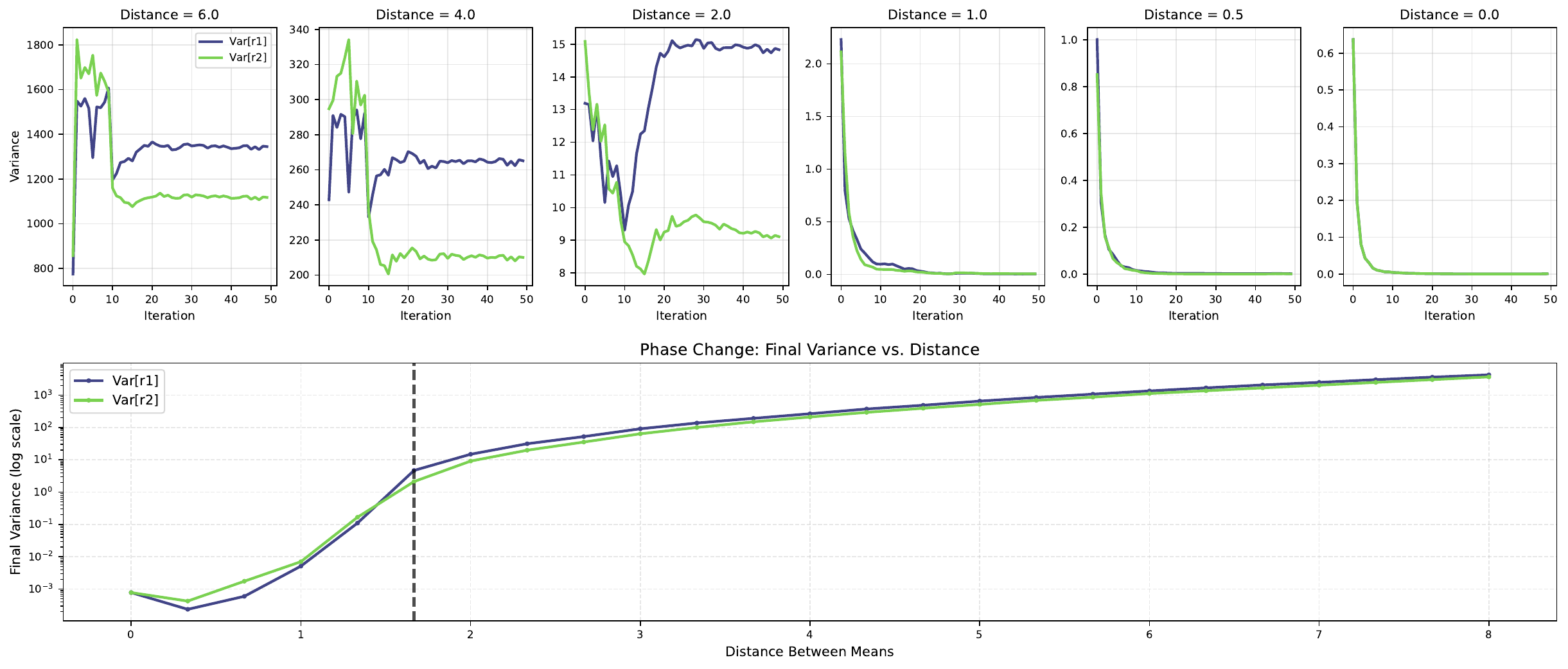}
\caption{
\textbf{Final distributions and reward variances under pluralistic curation across varying mode distances.} Same synthetic two-mode Gaussian-mixture setting as Figure~\ref{fig:polarized_dynamics}, varying the separation $D$ between the two modes (reward optima).
Top: Reward variance trajectories over training iterations for \( r_1 \) (blue) and \( r_2 \) (green).  
Bottom: Log-scale plot of final variances vs.\ distance.
}    \label{fig:phase}
\end{figure*}


The next theorem states: once $m_t\to 0$, every subsequential limit is a two-basin mixture.

\begin{theorem}[Two-basin subsequential limit]
\label{thm:two_basin_limit_main}
Fix $\varepsilon>0$,
assume the decay conclusion $m_t\to 0$. Consider any subsequence $t_k\to\infty$ such that $p_{t}\Rightarrow p_\infty$ weakly.
Then $p_\infty$ is supported on $\mathcal S_\varepsilon$ and decomposes as
\[
p_\infty
=
a_\infty\,p_{\infty,1}+(1-a_\infty)\,p_{\infty,2},
\]
\[
p_{\infty,1}:=p_\infty(\cdot\mid \mathcal{S}_{1,\varepsilon}),\ \
p_{\infty,2}:=p_\infty(\cdot\mid \mathcal{S}_{2,\varepsilon}),
\]
where $a_\infty=p_\infty(\mathcal{S}_{1,\varepsilon})$.
If in addition $a_t\to a_\infty\in(0,1)$ and basin separation holds, then $a_\infty$ lies in the leakage interval of
Lemma~\ref{lem:leakage_main}, and tends to $q$ as separation grows.
\end{theorem}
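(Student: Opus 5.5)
The argument transfers the mass statement $m_t\to0$ to the weak limit, then reads off the decomposition from disjointness, and finally cites Lemma~\ref{lem:leakage_main} for the second claim.

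\textbf{Support on $\mathcal S_\varepsilon$.} Since $r_1$ and $r_2$ are measurable, each $\mathcal S_{i,\varepsilon}$ is measurable. If $r_1$ and $r_2$ are upper semicontinuous (in particular, if they are continuous), then each $\mathcal S_{i,\varepsilon}=\{r_i\ge r_i^*-\varepsilon\}$ is closed, so $\mathcal S_\varepsilon$ is closed and $O:=\mathcal X\setminus\mathcal S_\varepsilon$ is open. The Portmanteau theorem gives
\[
p_\infty(O)\le\liminf_{k\to\infty} p_{t_k}(O)=\liminf_{k\to\infty} m_{t_k}=0 .
\]
Hence $p_\infty(\mathcal S_\varepsilon)=1$.

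This step is the main obstacle, because weak convergence does not control the mass of arbitrary measurable sets. Without regularity one can escape the problem only partially. If the sublevel set $\{r_i<r_i^*-\varepsilon\}$ is open for each $i$, the argument goes through unchanged. Otherwise I would enlarge the basins slightly: $m_t\to0$ for level $\varepsilon$ also holds for any $\varepsilon'>\varepsilon$, since the outside region only shrinks. Applying the argument to the closures of the basins then shows $p_\infty$ is supported on $\mathcal S_{\varepsilon'}$ for every $\varepsilon'>\varepsilon$. This yields support on $\mathcal S_\varepsilon$ itself only under upper semicontinuity. I would therefore state the result assuming continuous, or at least upper semicontinuous, rewards, which the Gaussian experiments satisfy.

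\textbf{Decomposition.} By Assumption~\ref{asm:landscape-main}(ii) the basins are disjoint, so $p_\infty(\mathcal S_{1,\varepsilon})+p_\infty(\mathcal S_{2,\varepsilon})=1$. Set $a_\infty:=p_\infty(\mathcal S_{1,\varepsilon})$. For any measurable $A$,
\[
p_\infty(A)=p_\infty(A\cap\mathcal S_{1,\varepsilon})+p_\infty(A\cap\mathcal S_{2,\varepsilon}).
\]
When $a_\infty\in(0,1)$, this equals $a_\infty\,p_{\infty,1}(A)+(1-a_\infty)\,p_{\infty,2}(A)$, which is the claimed two-basin mixture. When $a_\infty\in\{0,1\}$, the corresponding undefined conditional carries zero weight, and we adopt the convention that the term is simply omitted.

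I would also check that this $a_\infty$ agrees with $\lim_t a_t$ when that limit exists. This needs $p_{t_k}(\mathcal S_{1,\varepsilon})\to p_\infty(\mathcal S_{1,\varepsilon})$, which holds if $p_\infty(\partial\mathcal S_{1,\varepsilon})=0$. If the boundary is charged, the only available bound comes from closedness and Portmanteau: $\lim_t a_t\le p_\infty(\mathcal S_{1,\varepsilon})$, and symmetrically $\lim_t b_t\le p_\infty(\mathcal S_{2,\varepsilon})$. Combining these with $a_t+b_t\to1$ and $p_\infty(\mathcal S_{1,\varepsilon})+p_\infty(\mathcal S_{2,\varepsilon})=1$ forces $p_\infty(\mathcal S_{1,\varepsilon})=\lim_t a_t$. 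This last identification again uses closedness of both basins.

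\textbf{Leakage claim.} Now assume in addition that $a_t\to a_\infty\in(0,1)$ and $\Delta_1,\Delta_2>2\varepsilon$. Together with $m_t\to0$, these are exactly the hypotheses of Lemma~\ref{lem:leakage_main}. That lemma places $a_\infty$ in $\big[\max\{0,\tfrac{q-\kappa_1}{1-\kappa_1}\},\min\{1,\tfrac{q}{1-\kappa_2}\}\big]$. Both endpoints tend to $q$ as $\kappa_1,\kappa_2\to0$, that is, as the separation margins $\Delta_1,\Delta_2\to\infty$, so $a_\infty\to q$.
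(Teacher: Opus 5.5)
Your proof is correct and follows the paper's argument. You apply Portmanteau to the open outside region $O_\varepsilon$ (the paper simply assumes $O_\varepsilon$ is open, whereas you derive it from upper semicontinuity of the rewards), read off the decomposition from disjointness of the basins, and appeal directly to Lemma~\ref{lem:leakage_main}. Your squeeze argument, which uses closedness of both basins to get $a_{t_k}\to p_\infty(\mathcal S_{1,\varepsilon})$, is a small improvement: it replaces the paper's extra hypothesis that $\mathcal S_{1,\varepsilon}$ is a $p_\infty$-continuity set. Your aside about passing to $\varepsilon'>\varepsilon$ via closures does not hold without regularity, but nothing in the proof depends on it.
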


There is a regime that explains the geometry.
If each reward is approximately constant on each basin, then the multiplicative factor in 
\eqref{eq:Kinf_update_main} is constant within each basin at every step.
This makes the basin-conditionals invariant, and the long-run behavior is governed by the scalar weights $(a_t,b_t)$.

\begin{proposition}[Plateau-basin case]
\label{prop:plateau_main}
Fix $\varepsilon>0$ with $S_{1,\varepsilon}\cap \mathcal{S}_{2,\varepsilon}=\emptyset$ and $p_0(\mathcal{S}_{1,\varepsilon}),p_0(\mathcal{S}_{2,\varepsilon})>0$.   Assume a plateau structure on each basin, meaning that each $r_i$ is (approximately) constant on each set $\mathcal{S}_{j,\varepsilon}$ so that the multiplicative factor in \eqref{eq:Kinf_update_main} is constant within each basin (formalized in the Appendix), and assume $m_t\to 0$. Then
\[
p_t(\cdot\mid \mathcal{S}_{1,\varepsilon})\equiv p_0(\cdot\mid \mathcal{S}_{1,\varepsilon}),
\qquad
p_t(\cdot\mid \mathcal{S}_{2,\varepsilon})\equiv p_0(\cdot\mid \mathcal{S}_{2,\varepsilon}),
\]
and any subsequential limit has the explicit form
\[
p_\infty
=
a_\infty\,p_0(\cdot\mid \mathcal{S}_{1,\varepsilon})
+
(1-a_\infty)\,p_0(\cdot\mid \mathcal{S}_{2,\varepsilon}),
\]
with $a_\infty$ controlled by Lemma~\ref{lem:leakage_main}.
In particular, in the hard-separation limit (vanishing leakage) the limiting mixture weight approaches $q$.
\end{proposition}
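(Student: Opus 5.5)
The plan is to work directly with the large-$K$ update \eqref{eq:Kinf_update_main} and formalize the plateau hypothesis as follows. For each $t$ and $j\in\{1,2\}$ there is a scalar $c_j(t)>0$ with $W_t(x)=c_j(t)$ for all $x\in\mathcal S_{j,\varepsilon}$. This holds, for instance, if $r_i\equiv r_i^{(j)}$ is constant on $\mathcal S_{j,\varepsilon}$, since then
\[
c_j(t)=q\,e^{r_1^{(j)}}/Z_1(t)+(1-q)\,e^{r_2^{(j)}}/Z_2(t).
\]
Since the basins are disjoint, for any measurable $A\subseteq\mathcal S_{j,\varepsilon}$ the update gives $p_{t+1}(A)=c_j(t)\,p_t(A)$. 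In particular $p_{t+1}(\mathcal S_{j,\varepsilon})=c_j(t)\,p_t(\mathcal S_{j,\varepsilon})$. These basin masses stay strictly positive for all $t$, by induction from $p_0(\mathcal S_{j,\varepsilon})>0$ and $c_j(t)>0$ (exponentials of bounded rewards over finite $Z_i$). Dividing the two identities, the constant $c_j(t)$ cancels:
\[
p_{t+1}(A\mid\mathcal S_{j,\varepsilon})=p_t(A\mid\mathcal S_{j,\varepsilon}).
\]
Induction on $t$ then yields $p_t(\cdot\mid\mathcal S_{j,\varepsilon})\equiv p_0(\cdot\mid\mathcal S_{j,\varepsilon})$.

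Next, on $\mathcal S_\varepsilon$ we write
\[
p_t=a_t\,p_0(\cdot\mid\mathcal S_{1,\varepsilon})+b_t\,p_0(\cdot\mid\mathcal S_{2,\varepsilon})+m_t\,\mu_t,
\]
where $\mu_t$ is some probability measure on the complement of $\mathcal S_\varepsilon$. Take a subsequence with $p_{t_k}\Rightarrow p_\infty$. Since $m_t\to0$, the term $m_t\mu_t$ vanishes in total variation. The sequence $a_{t_k}\in[0,1]$ has a further convergent subsequence with limit $a_\infty$, and then $b_{t_k}=1-a_{t_k}-m_{t_k}\to1-a_\infty$. The right side therefore converges in total variation, hence weakly, to $a_\infty p_0(\cdot\mid\mathcal S_{1,\varepsilon})+(1-a_\infty)p_0(\cdot\mid\mathcal S_{2,\varepsilon})$. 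Uniqueness of weak limits identifies this with $p_\infty$, giving the explicit form.

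For the weight, I will invoke Lemma~\ref{lem:leakage_main} along the subsequence: its proof only uses the basin-level recursion in the limit, so it applies to the subsequential $a_\infty$. Alternatively, if $q\in(\kappa_1,1-\kappa_2)$, Lemma~\ref{lem:uniform_nocollapse_main} gives $L\le a_\infty\le U$. As $\Delta_1,\Delta_2\to\infty$ we have $\kappa_1,\kappa_2\to0$, so $L,U\to q$ and the limiting weight approaches $q$.

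The main obstacle is this step, ensuring the leakage interval applies to a subsequential limit rather than requiring $a_t\to a_\infty$. If Lemma~\ref{lem:leakage_main} genuinely needs convergence of the full sequence, I would fall back on the $\liminf/\limsup$ bounds of Lemma~\ref{lem:uniform_nocollapse_main}, which control every subsequential limit. A secondary point is the "approximately constant" version of the plateau hypothesis. There I would get $p_t(\cdot\mid\mathcal S_{j,\varepsilon})$ within a factor $e^{\pm 2\varepsilon t}$ of $p_0(\cdot\mid\mathcal S_{j,\varepsilon})$, which is not uniform in $t$. So I will state the proposition under exact within-basin constancy of $W_t$, as formalized in the Appendix, and treat approximate constancy only as an interpretive remark.
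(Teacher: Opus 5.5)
Your proposal is correct and follows the paper's proof essentially step for step. Constancy of $W_t$ on each basin gives $p_{t+1}=c_j(t)\,p_t$ on $S_{j,\varepsilon}$, so the conditionals are invariant by induction. The decomposition $p_t=a_t\,p_0(\cdot\mid S_{1,\varepsilon})+b_t\,p_0(\cdot\mid S_{2,\varepsilon})+m_t\mu_t$ with $m_t\to0$ then gives the explicit mixture form of every subsequential limit.

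The one place you are more careful than the paper is the weight $a_\infty$. The paper simply cites Lemma~\ref{lem:partition}, whose proof passes to the limit in $a_{t+1}\ge q\,a_t/(a_t+\kappa_1 b_t+e^{\varepsilon}m_t)$ and so needs the full sequence $a_t$ to converge. You are right that this does not transfer to a subsequence, so commit to your fallback. The $\liminf/\limsup$ bounds of Lemma~\ref{lem:uniform_nocollapse_main} control every subsequential $a_\infty$ and still give $a_\infty\to q$ as $\kappa_1,\kappa_2\to0$.
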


This proposition is illustrated by Fig.~\ref{fig:polarized_dynamics}:
pluralistic curation preserves two modes by repeatedly reinforcing two basins, and the final distribution is an explicit mixture.

Next, we turn to the performance guarantees. We verify that the model achieves favorable rewards under both objectives.

\begin{theorem}[Limit of expected rewards]
\label{thm:expected_rewards_main}
Assume the setting of Theorem~\ref{thm:two_basin_limit_main} and let $p_{t_k}\Rightarrow p_\infty$. For $i\in\{1,2\}$,
\begin{align*}
&\E_{p_{t_k}}[r_i(x)] \to \E_{p_\infty}[r_i(x)]= \\
&a_\infty\,\E_{p_{\infty,1}}[r_i(x)] + (1-a_\infty)\,\E_{p_{\infty,2}}[r_i(x)].
\end{align*}
Moreover, since $p_{\infty,1}$ is supported on $\mathcal{S}_{1,\varepsilon}$ and $p_{\infty,2}$ on $\mathcal{S}_{2,\varepsilon}$, we have
$\E_{p_{\infty,1}}[r_1(x)]\ge r_1^*-\varepsilon$ and $\E_{p_{\infty,2}}[r_2(x)]\ge r_2^*-\varepsilon$,
while basin separation bounds control the cross-terms $\E_{p_{\infty,1}}[r_2(x)]$ and $\E_{p_{\infty,2}}[r_1(x)]$.
\end{theorem}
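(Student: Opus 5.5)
The plan is to deduce the convergence of expected rewards from weak convergence plus boundedness of the rewards, and then to read off the remaining claims from the decomposition in Theorem~\ref{thm:two_basin_limit_main}.

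\textbf{Step 1: convergence of expectations.} By Assumption~\ref{asm:landscape-main}(i), each $r_i$ is bounded in $[L_i,U_i]$. Weak convergence $p_{t_k}\Rightarrow p_\infty$ gives $\E_{p_{t_k}}[r_i]\to\E_{p_\infty}[r_i]$ immediately if $r_i$ is continuous. Measurability alone is not enough, so I will not rely on it.

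\textbf{Handling discontinuous $r_i$.} Here I would use the portmanteau theorem: $\E_{p_{t_k}}[f]\to\E_{p_\infty}[f]$ for every bounded $f$ whose discontinuity set is $p_\infty$-null. For the general measurable case, the paper's densities would have to be exploited, for instance through setwise or total-variation convergence. In the plateau regime of Proposition~\ref{prop:plateau_main} the conditionals are frozen, so $p_{t_k}\to p_\infty$ in total variation whenever $a_{t_k}\to a_\infty$. I expect this regularity issue to be the \textbf{main obstacle}. The cleanest route is to state the needed condition explicitly: $r_i$ continuous, or $p_\infty$-a.e.\ continuous. I would also note that the boundary $\partial\mathcal S_{i,\varepsilon}$ must be $p_\infty$-null for the basin masses themselves to pass to the limit. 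This matches what Theorem~\ref{thm:two_basin_limit_main} already implicitly uses.

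\textbf{Step 2: decomposition.} Theorem~\ref{thm:two_basin_limit_main} gives that $p_\infty$ is supported on $\mathcal S_\varepsilon$, which is the disjoint union of $\mathcal S_{1,\varepsilon}$ and $\mathcal S_{2,\varepsilon}$. Splitting the integral over the two basins and using the definition of conditional measures yields
\[
\E_{p_\infty}[r_i]=a_\infty\E_{p_{\infty,1}}[r_i]+(1-a_\infty)\E_{p_{\infty,2}}[r_i].
\]
If $a_\infty\in\{0,1\}$, one conditional is undefined, but its coefficient is zero, so the convention is harmless.

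\textbf{Step 3: in-basin bounds.} Since $p_{\infty,1}$ is supported on $\mathcal S_{1,\varepsilon}$, where $r_1\ge r_1^*-\varepsilon$ pointwise, integrating gives $\E_{p_{\infty,1}}[r_1]\ge r_1^*-\varepsilon$. The bound for $r_2$ on $p_{\infty,2}$ follows symmetrically.

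\textbf{Step 4: cross terms.} Assumption~\ref{asm:landscape-main}(iii) gives $r_2\le r_2^*-\Delta_2+\varepsilon$ on $\mathcal S_{1,\varepsilon}$. Together with $r_2\ge L_2$, this yields
\[
L_2\le\E_{p_{\infty,1}}[r_2]\le r_2^*-\Delta_2+\varepsilon.
\]
The bound for $\E_{p_{\infty,2}}[r_1]$ is symmetric. Combined with Lemma~\ref{lem:leakage_main} for $a_\infty$, these give two-sided bounds on each limiting expected reward.
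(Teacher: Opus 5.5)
Your proposal is correct and follows essentially the same argument as the paper. The appendix proof also adds the hypothesis that $r_1,r_2$ are bounded and continuous, which is exactly the gap you flagged, then gets convergence of expectations from weak convergence, obtains the mixture formula by linearity over the two-basin decomposition, and derives the in-basin and cross-term bounds by integrating the pointwise $\varepsilon$-optimality and separation inequalities over the relevant basin. Your extra remarks (portmanteau for $p_\infty$-a.e.\ continuous rewards, total-variation convergence in the plateau case, and the trivial lower bound $L_i$ on the cross terms) are valid refinements that the paper omits.
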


Expected reward alone does not rule out collapse because a single mode might already be high-reward for one objective.
The sharper ``no collapse'' statement is about variance: a nontrivial mixture of two separated basins forces nonzero reward variance.

\begin{table*}[h]
\caption{\textbf{Quality and diversity metrics for synthetic retraining on the CIFAR-10.}
The upper block reports results for balanced multi‑preference configurations, and the lower block shows two‑preference retraining with varying polarization levels $q$.}
\label{tab:pluralistic_metrics}
\small
\centering
\begin{tabular}{lccccc}
\toprule
\textbf{Retraining Setting} & \textbf{FID $\downarrow$} & \textbf{Entropy $\uparrow$} & \textbf{KL Divergence $\downarrow$} & \textbf{Feature Variance $\uparrow$} & \textbf{Intra-Class Variance $\uparrow$} \\
\midrule
\multicolumn{6}{c}{\emph{Balanced multi‑preference retraining}}\\
Two Preferences $(q = 0.5)$    & 77.6 & 1.098 & 1.204 & 0.62 & 0.72 \\
Three Preferences $(q = 0.33)$ & 64.2 & 1.357 & 0.946 & 0.68 & 0.79 \\
Four Preferences $(q = 0.25)$   & 44.5 & 1.670 & 0.632 & 0.74 & \textbf{0.86} \\
Five Preferences $(q = 0.20)$   & \textbf{35.9} & \textbf{1.687} & \textbf{0.616} & 0.80 & 0.85 \\
\midrule
\multicolumn{6}{c}{\emph{Two‑preference retraining with varying polarization $q$}}\\
$q = 0.0$ (Single Preference) & 100.0 & 0.032 & 2.271 & 0.20 & 0.05 \\
$q = 0.1$ &  90.1 & 0.010 & 2.293 & 0.23 & 0.07 \\
$q = 0.2$ &  85.3 & 0.113 & 2.190 & 0.29 & 0.05 \\
$q = 0.3$ &  83.2 & 0.036 & 2.266 & 0.35 & 0.16 \\
$q = 0.4$ &  80.2 & 0.791 & 1.512 & 0.41 & 0.60 \\
$q = 0.5$  & 77.6 & 1.098 & 1.204 & 0.62 & 0.72 \\
\bottomrule
\end{tabular}

\end{table*}

\begin{figure*}[t!]
\centering
\includegraphics[width=\textwidth]{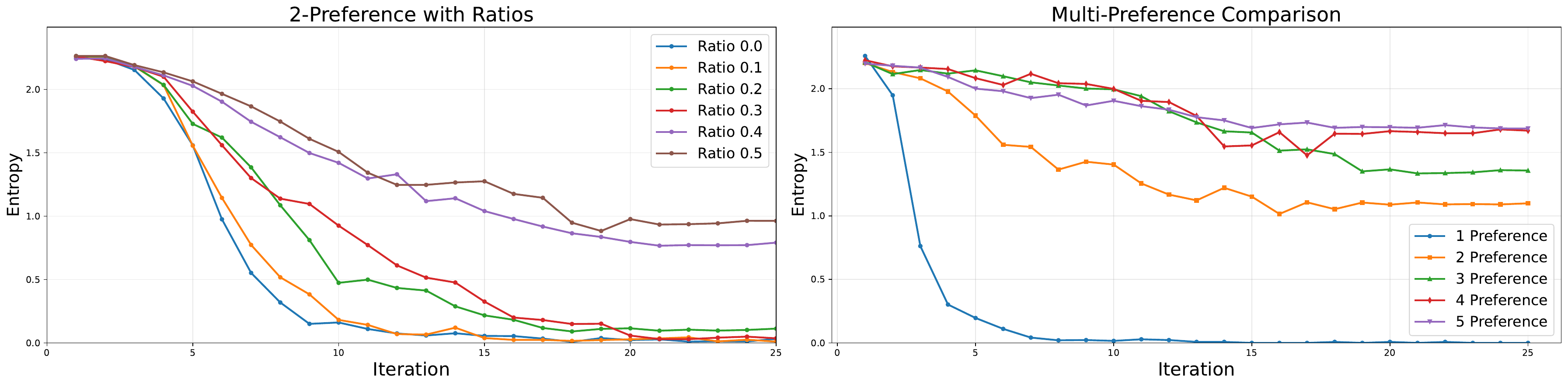}
\caption{\textbf{Entropy over training iterations on CIFAR-10 for different retraining configurations.}  (Left) two-preference retraining under varying reward-sampling probabilities $q$; 
(Right) balanced multi‑preference retraining with 1 to 5 preferences. Here ``Ratio'' denotes the reward-sampling probability $q$ used throughout the theory.}
\label{fig:cifar_entropy}
\end{figure*}

\begin{theorem}[Variance preservation]
\label{thm:variance_main}
Assume the setting of Theorem~\ref{thm:two_basin_limit_main} and $a_\infty\in(0,1)$. Then for $i\in\{1,2\}$,
\[
\Var_{p_{t}}[r_i(x)] \to \Var_{p_\infty}[r_i(x)],
\]
and $\Var_{p_\infty}[r_i(x)]$ contains an explicit inter-basin term of size
$a_\infty(1-a_\infty)\big(\mu_{i,1}-\mu_{i,2}\big)^2$, where $\mu_{i,j}:=\E_{p_{\infty,j}}[r_i(x)]$.
Under basin separation, this yields a lower bound of order
\[
\Var_{p_\infty}[r_i(x)]
\ \gtrsim\
a_\infty(1-a_\infty)\,\max\{\Delta_i-2\varepsilon,0\}^2.
\]
\end{theorem}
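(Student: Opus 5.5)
The argument has three parts: convergence of the moments along the subsequence, the law of total variance for the two-basin mixture, and a separation bound on the conditional means.

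\textbf{Convergence of the variance.} Since $r_i$ is bounded (Assumption~\ref{asm:landscape-main}(i)), both $r_i$ and $r_i^2$ are bounded. Along the subsequence $t_k$ with $p_{t_k}\Rightarrow p_\infty$, I would use the same passage to the limit as in Theorem~\ref{thm:expected_rewards_main} for both the first and second moments. If $r_i$ is continuous this is the portmanteau theorem. Otherwise I would invoke whatever mode of convergence was used there, for example setwise or total-variation convergence, which holds in the plateau case because the basin conditionals are frozen. Then $\Var_{p_{t_k}}[r_i]=\E_{p_{t_k}}[r_i^2]-(\E_{p_{t_k}}[r_i])^2\to\Var_{p_\infty}[r_i]$, since both terms converge. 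If the full sequence $p_t$ converges, the statement holds for $p_t$ as written.

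\textbf{Law of total variance.} By Theorem~\ref{thm:two_basin_limit_main}, $p_\infty=a_\infty p_{\infty,1}+(1-a_\infty)p_{\infty,2}$ with disjoint supports $\mathcal S_{1,\varepsilon},\mathcal S_{2,\varepsilon}$. Conditioning on the basin indicator $J\in\{1,2\}$ gives
\[
\Var_{p_\infty}[r_i]=a_\infty\Var_{p_{\infty,1}}[r_i]+(1-a_\infty)\Var_{p_{\infty,2}}[r_i]+a_\infty(1-a_\infty)(\mu_{i,1}-\mu_{i,2})^2 .
\]
The last term is the explicit inter-basin term claimed in the statement. The within-basin terms are nonnegative, so $\Var_{p_\infty}[r_i]\ge a_\infty(1-a_\infty)(\mu_{i,1}-\mu_{i,2})^2$.

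\textbf{Lower bound on the mean gap.} I take $i=1$; the case $i=2$ is symmetric. On $\mathcal S_{1,\varepsilon}$ we have $r_1\ge r_1^*-\varepsilon$, so $\mu_{1,1}\ge r_1^*-\varepsilon$. On $\mathcal S_{2,\varepsilon}$, Assumption~\ref{asm:landscape-main}(iii) gives $r_1\le r_1^*-\Delta_1+\varepsilon$, so $\mu_{1,2}\le r_1^*-\Delta_1+\varepsilon$. Hence $\mu_{1,1}-\mu_{1,2}\ge\Delta_1-2\varepsilon$. When $\Delta_1>2\varepsilon$ this gap is positive, so squaring preserves the inequality. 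When $\Delta_1\le 2\varepsilon$ the bound $\max\{\Delta_1-2\varepsilon,0\}^2=0$ holds trivially. Combining the three parts gives
\[
\Var_{p_\infty}[r_i]\ge a_\infty(1-a_\infty)\max\{\Delta_i-2\varepsilon,0\}^2,
\]
which is strictly positive whenever $a_\infty\in(0,1)$ and $\Delta_i>2\varepsilon$.

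\textbf{Main obstacle.} The delicate step is the first one: justifying $\E_{p_{t_k}}[r_i^2]\to\E_{p_\infty}[r_i^2]$ under only weak convergence, when $r_i$ may be discontinuous. I would first try to assume continuity of $r_i$, or that $p_\infty$ does not charge the discontinuity set of $r_i$, and apply the continuous mapping theorem. The fallback is to reuse exactly the convergence mechanism already accepted in Theorem~\ref{thm:expected_rewards_main}. A related point is that the basin masses themselves must pass to the limit: $p_{t_k}(\mathcal S_{j,\varepsilon})\to p_\infty(\mathcal S_{j,\varepsilon})$ requires that the boundaries of the basins are $p_\infty$-null. I would take this from Theorem~\ref{thm:two_basin_limit_main}, which identifies $a_\infty$ with $p_\infty(\mathcal S_{1,\varepsilon})$.
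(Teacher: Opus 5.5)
Your proposal is correct and follows the paper's proof essentially step for step. The paper likewise passes the first and second moments of $r_i$ to the limit via the portmanteau theorem along $t_k$, applies the law of total variance to the two-basin mixture, and uses $\mu_{1,1}\ge r_1^*-\varepsilon$ and $\mu_{1,2}\le r_1^*-\Delta_1+\varepsilon$ after dropping the within-basin variances. The one difference is that the paper simply adds the hypothesis that $r_1,r_2$ are bounded and continuous, which is your first option, so your fallback discussion for discontinuous rewards is not needed.
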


This makes the diversity mechanism explicit: as long as both preferences retain influence in the limit and the basins are non-trivially distinct,
the limiting distribution cannot become degenerate under either reward.


Finally, we give an interpretation of the mixture weight $q$.
Consider the line segment of mixtures between two basin-supported distributions $P_1$ and $P_2$:
$p_\alpha:=\alpha P_1+(1-\alpha)P_2$.
Let utilities be $u_i(p):=\E_p[r_i(x)]$ and disagreement points be
$d_1:=u_1(P_2)$ and $d_2:=u_2(P_1)$.

\begin{theorem}[Weighted Nash bargaining]
\label{thm:nash_main}
Assume each preference favors its own basin in expectation:
$u_1(P_1)>u_1(P_2)$ and $u_2(P_2)>u_2(P_1)$.
The weighted Nash product
\[
\big(u_1(p_\alpha)-d_1\big)^q\big(u_2(p_\alpha)-d_2\big)^{1-q}
\]
is uniquely maximized over $\alpha\in[0,1]$ at $\alpha^\star=q$.
\end{theorem}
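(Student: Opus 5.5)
The plan is to reduce the weighted Nash product to a one-variable concave problem in $\alpha$ and then read off the maximizer from the first-order condition.

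\emph{Linearity in $\alpha$.} Expectation is linear in the mixing weight, so
\[
u_i(p_\alpha)=\alpha\,u_i(P_1)+(1-\alpha)\,u_i(P_2).
\]
Set $g_1:=u_1(P_1)-u_1(P_2)>0$ and $g_2:=u_2(P_2)-u_2(P_1)>0$; both are positive by hypothesis. The disagreement points are $d_1=u_1(P_2)$ and $d_2=u_2(P_1)$, which give
\[
u_1(p_\alpha)-d_1=\alpha g_1,\qquad u_2(p_\alpha)-d_2=(1-\alpha)g_2 .
\]
The Nash product is therefore
\[
N(\alpha)=g_1^{q}g_2^{1-q}\,\alpha^{q}(1-\alpha)^{1-q}.
\]
The constant $g_1^q g_2^{1-q}$ is strictly positive, so it does not affect the maximizer.

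\emph{Maximizing.} Since $q\in(0,1)$, we have $N(0)=N(1)=0$ and $N(\alpha)>0$ on $(0,1)$. Any maximizer is therefore interior. On $(0,1)$ we may maximize $\log N(\alpha)=\text{const}+q\log\alpha+(1-q)\log(1-\alpha)$ instead. This function is strictly concave, because its second derivative is $-q/\alpha^2-(1-q)/(1-\alpha)^2<0$. Its derivative is $q/\alpha-(1-q)/(1-\alpha)$, which vanishes exactly when $q(1-\alpha)=(1-q)\alpha$, that is, at $\alpha=q$. Strict concavity gives that this critical point is the unique maximizer on $(0,1)$. Combined with the boundary values being zero, $\alpha^\star=q$ is the unique maximizer on $[0,1]$.

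\emph{Where care is needed.} The only point requiring attention is how the disagreement points and the sign conditions interact. We need both utility gaps to be strictly positive so that the factors $\alpha g_1$ and $(1-\alpha)g_2$ are nonnegative on $[0,1]$. This makes the fractional powers well defined, with $0^q=0$ at the endpoints, and ensures that the maximizer does not depend on $g_1,g_2$. This is exactly what the hypothesis that each preference favors its own basin provides. With it in place, the rest is the elementary concavity computation above.
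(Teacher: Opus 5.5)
Your proposal is correct and follows the paper's proof essentially step for step: linearity reduces the Nash product to $g_1^q g_2^{1-q}\alpha^q(1-\alpha)^{1-q}$, and the first-order condition together with strict concavity of the logarithm gives $\alpha^\star=q$. Your explicit check that $N(0)=N(1)=0<N(q)$ (using $q\in(0,1)$) is a small improvement, since the paper's argument only establishes uniqueness on the open interval $(0,1)$.
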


\begin{corollary}[Bargaining interpretation of the plateau hard-separation]
\label{cor:nash_plateau_main}
In the plateau-basin regime of Proposition~\ref{prop:plateau_main}, as leakage vanishes (large separation),
the limiting mixture approaches the Nash point $p_q$ on the mixture line.
\end{corollary}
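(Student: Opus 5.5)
The plan is to combine Proposition~\ref{prop:plateau_main} with Lemma~\ref{lem:leakage_main}, and then apply Theorem~\ref{thm:nash_main} to the resulting two-point mixture line. In the plateau-basin regime, Proposition~\ref{prop:plateau_main} gives that the basin conditionals are frozen at $P_1:=p_0(\cdot\mid\mathcal S_{1,\varepsilon})$ and $P_2:=p_0(\cdot\mid\mathcal S_{2,\varepsilon})$. Every subsequential limit therefore lies on the segment $\{p_\alpha=\alpha P_1+(1-\alpha)P_2:\alpha\in[0,1]\}$, namely $p_\infty=p_{a_\infty}$. This reduces the whole question to the scalar $a_\infty$.

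First I verify the hypotheses of Theorem~\ref{thm:nash_main} for this choice of $P_1,P_2$. Since $P_1$ is supported on $\mathcal S_{1,\varepsilon}$, we have $u_1(P_1)\ge r_1^*-\varepsilon$. The separation clause of Assumption~\ref{asm:landscape-main}(iii) gives $u_1(P_2)\le r_1^*-\Delta_1+\varepsilon$. Hence $u_1(P_1)-u_1(P_2)\ge \Delta_1-2\varepsilon>0$ once $\Delta_1>2\varepsilon$, and this holds automatically in the large-separation regime where $\kappa_1\to0$. The symmetric argument gives $u_2(P_2)>u_2(P_1)$. Theorem~\ref{thm:nash_main} then says the weighted Nash point on this line is exactly $p_q$.

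Second, I control $a_\infty$. By Lemma~\ref{lem:leakage_main}, or by Lemma~\ref{lem:uniform_nocollapse_main} for liminf/limsup if convergence of $a_t$ is not assumed, every limit weight satisfies
\[
\frac{q-\kappa_1}{1-\kappa_1}\le a_\infty\le \frac{q}{1-\kappa_2}.
\]
Both endpoints tend to $q$ as $\kappa_1,\kappa_2\to0$, so $|a_\infty-q|\le C(\kappa_1+\kappa_2)$ for small leakage.

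Finally, I convert closeness of weights into closeness of distributions. Since $p_{a}-p_q=(a-q)(P_1-P_2)$, we get $\|p_{a_\infty}-p_q\|_{TV}\le|a_\infty-q|\to0$. Because utilities are affine in $\alpha$, the utility pair $(u_1,u_2)$ of the limit also converges to the Nash utility pair.

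The main obstacle is bookkeeping in the limit. As separation grows, one is implicitly varying the reward landscape, and with it $P_1,P_2$ and the mixture line itself. The statement "approaches $p_q$" must therefore be read as a distance on the current line, which goes to zero uniformly. The TV bound above is uniform in $P_1,P_2$, so this works. I would state the result as $\|p_\infty-p_q\|_{TV}\le\max\{\kappa_1/(1-\kappa_1),\,q\kappa_2/(1-\kappa_2)\}$, which makes the convergence explicit and sidesteps the ambiguity.
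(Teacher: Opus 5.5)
Your proposal is correct and follows the paper's route. The plateau proposition freezes the basin conditionals, so every limit lies on the mixture line $\{p_\alpha\}$. The leakage interval then drives the weight to $q$, and the weighted Nash theorem, applied with $P_1,P_2$ equal to the frozen conditionals, identifies $p_q$ as the bargaining point.

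Your version is more careful than the paper's one-line proof in three ways. First, you derive the utility-gap hypotheses from basin separation, whereas the paper's appendix corollary simply assumes them. Second, you go through the non-collapse lemma, so you do not need $a_t$ to converge to a point in $(0,1)$. Third, you make ``approaches'' precise through $\|p_\infty-p_q\|_{TV}\le|a_\infty-q|$; the paper instead just says the proposition ``identifies $p_\infty$ with $p_q$.''
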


Rather than fighting the selection pressure, pluralistic curation redirects it.
Single-reward retraining sharpens toward one region.
Pluralistic retraining still sharpens, but it sharpens toward multiple near-optimal basins and stabilizes as a two-component mixture.
In the hard-separation regime, the mixture weight is exactly $q$, and more generally it is tightly controlled by leakage and converges to $q$ as the basins become better separated.

\textbf{Extension to multiple preferences.}
The update and large-$K$ tilt extend to $M$ rewards with sampling probabilities $\{q_i\}_{i=1}^M$ satisfying $\sum_{i=1}^M q_i=1$:
\[
p_{t+1}(x)=p_t(x)\sum_{i=1}^M q_i\,H^{K,r_i}_{p_t}(x).
\]
Under the outside-domination and cross-basin leakage controls, mass concentrates on $\cup_i S_{i,\varepsilon}$ and limit weights are controlled by leakage, approaching $\{q_i\}$ in the vanishing-leakage regime.
Details are deferred to Appendix~\ref{appendix:multi-reward}.

\begin{figure*}[t!]
\centering
\includegraphics[width=\textwidth]{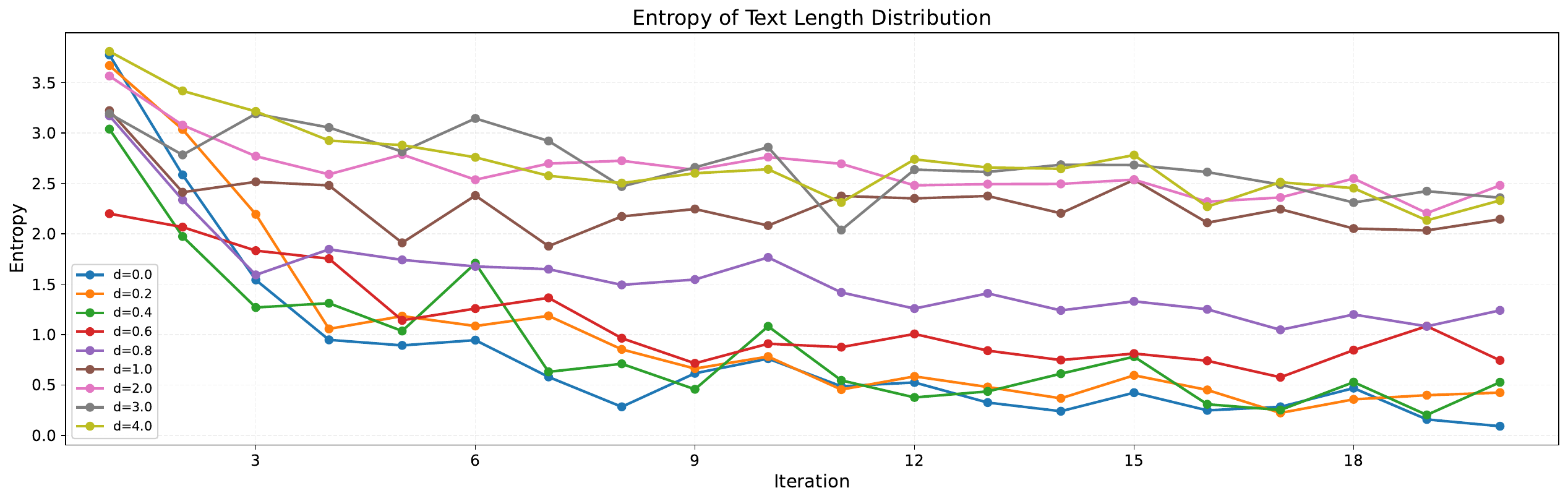}
\caption{\textbf{Evolution of output length entropy for text with GPT-2}. Two‑preference retraining under varying preferred length distances.}
\label{fig:text_entropy}
\end{figure*}

\section{Experiments}
\label{sec:experiments}

We empirically test whether pluralistic curation (i) prevents collapse under recursive retraining, (ii) yields controllable long-run mixtures under competing rewards, and (iii) exhibits the predicted dependence on preference separation and polarization.
We report four core experiments in the main paper:
(E1) synthetic GMM retraining dynamics, comparing pluralistic and single-reward curation (Fig.~\ref{fig:polarized_dynamics});
(E2) synthetic phase transition as reward separation varies (Fig.~\ref{fig:phase});
(E3) CIFAR-10 flow retraining under classifier rewards, varying the number of preferences and polarization $q$
(Table~\ref{tab:pluralistic_metrics}, Fig.~\ref{fig:cifar_entropy});
and (E4) text-domain retraining with GPT-2 under conflicting length preferences (Fig.~\ref{fig:text_entropy}).
Additional experiments and implementation details are in Appendix~\ref{app:experiments}, including:
(E5) extended $q$-sweeps and mixture-weight tracking;
(E6) finite-$K$ ablations for the discrete-choice pool;
(E7) additional reward-separation visualizations and robustness checks;
(E8) leakage corroborations;
(E9) baseline comparisons;
(E10) CIFAR-10 retraining with learned reward models;
and (E11) RLHF-style iterative retraining with helpfulness and safety rewards.

\subsection{Synthetic Gaussian Mixture}
\label{sec:gmm_main}

We begin with a 2D Gaussian mixture setup, and use a trainable Gaussian Mixture Model (GMM) \cite{bishop2006pattern} with two reward functions centered at distinct modes to simulate preferences. In each iteration, pluralistic curation selects high-reward samples from candidates drawn from the current model, which is then updated via MLE.

\textbf{(E1) Pluralistic curation avoids collapse.}
Fig.~\ref{fig:polarized_dynamics} shows the retraining trajectories.
Under pluralistic curation, probability mass remains near both high-reward regions and the reward variances stay bounded away from zero.
Under single-reward curation, the distribution concentrates into a narrow region and reward variance collapses, matching the expected single-objective degeneration.

\textbf{(E2) Preference separation induces a phase transition.}
Fig.~\ref{fig:phase} varies the distance between reward optima.
When optima are well-separated, pluralistic curation yields a stable two-mode mixture with positive variance for both rewards.
As optima approach each other, variance drops sharply, and the learned distribution collapses toward a compromise region.
Empirically, we observe a clear transition around distance $\approx 2$, supporting the theory that meaningful pluralism requires sufficiently distinct preference basins.

\subsection{CIFAR-10: Flow-Based Retraining}
\label{sec:cifar_main}

We evaluate on the CIFAR-10 dataset using the same setup as \citet{ferbach2024self}. We train a normalizing flow using \textit{optimal transport conditional flow matching} (OT-CFM) \cite{lipman2022flow, shaul2023kinetic, tong2023improving}. The initial model is pretrained on all 50,000 training images from CIFAR-10. At each retraining iteration, we generate $5 \cdot 10^4$ samples from the current model. From these, we retain $2.5 \cdot 10^3$ samples using a discrete $K$-BT model. Rewards $r(x)$ are derived from class probabilities $\pi_0(x), \dots, \pi_9(x)$ output by a pretrained VGG11 classifier \cite{simonyan2015very}, achieving 92.39\% accuracy on the CIFAR-10 test set. We define $r(x) = \gamma \cdot \pi_i(x)$ for a fixed class (e.g., \textit{airplanes}). 
We report FID for sample quality and multiple diversity proxies (class entropy, KL-to-uniform, and feature-level variances).
All generator and selection hyperparameters are provided in the Appendix~\ref{app:exp_map}.

\textbf{(E3) More preferences improve diversity and quality.}
In the balanced regime (uniform sampling across $M$ rewards), Table~\ref{tab:pluralistic_metrics} shows consistent improvements as $M$ increases from 2 to 5:
entropy increases and KL decreases (broader, more balanced class coverage), and feature and intra-class variances rise (greater perceptual and semantic diversity).
These diversity gains coincide with improved FID. Fixing two rewards and varying the polarization parameter $q$, Table~\ref{tab:pluralistic_metrics} shows rapid degradation as $q$ becomes imbalanced:
entropy and variance proxies drop and FID worsens, indicating partial collapse toward the dominant preference.
Fig.~\ref{fig:cifar_entropy} summarizes the dynamics over retraining iterations, where balanced mixtures sustain higher entropy while skewed mixtures collapse toward low-entropy behavior.

The main takeaway is that $q$ controls a trade-off between quality and diversity. When $q$ is close to $0$ or $1$, curation is effectively single-objective and the model collapses toward the dominant preference. Balanced values of $q$ expose the model to both reward basins and preserve higher entropy and intra-class diversity. 

\subsection{Text-Domain Retraining}
\label{sec:gpt2_main}

We test pluralistic curation in a discrete text setting where the preference signal is directly verifiable: output length.
We fine-tune a lightweight GPT-2 model in iterative rounds under two length-based preferences with targets $T_A$ and $T_B$,
using reward $R(y;T) = -|L(y)-T|$, where $L(y)$ is the word count.
We fix $T_A$ and vary $T_B$ to control the distance between the conflicts $d = |T_A-T_B|$.
The reward depends only on an interpretable statistic: output length, and changes in entropy can be attributed to the retraining dynamics.

\textbf{(E4) Increasing conflict increases sustained uncertainty rather than collapse.}
We track the discrete entropy of the generated length distribution, $H(L)$, across retraining iterations.
Fig.~\ref{fig:text_entropy} shows that larger distances $d$ lead to higher sustained entropy.
This indicates that, under balanced pluralistic selection, the model does not collapse to a single compromise length, and instead maintains a broader policy that hedges between competing objectives.

\section{Conclusion}

As generative models increasingly rely on synthetic data to scale, the consequences of recursive retraining demand attention. This paper challenges the prevailing view that such training loops inevitably lead to collapse, showing instead that diversity can be preserved when data are curated using pluralistic preferences. Our analysis demonstrates that using multiple curation rewards provably prevents collapse, yielding a stable multimodal distribution that satisfies a weighted Nash bargaining solution. We validated these results across synthetic and real-world domains, observing improvements in both inter- and intra-mode diversity.

Our findings align with current practices in RLHF, where models are often fine-tuned using multiple alignment objectives such as helpfulness, harmlessness, and honesty. In many of these pipelines, objectives are applied sequentially or stochastically, and trade-offs between them are implicitly encoded through reward aggregation or classifier mixing. Our pluralistic curation experiments simulate this regime: when one objective dominates (e.g., excessive emphasis on harmlessness), diversity and generalization quickly degrade, mirroring the reward overoptimization and collapse seen in alignment drift. Conversely, when objectives are balanced, the system stabilizes, supporting richer and more inclusive outputs. This suggests that pluralistic curation captures the same tensions as RLHF multi-objective post-training, but makes them explicit and theoretically interpretable. More broadly, pluralistic retraining acts as a regularizer, resisting overfitting to narrow alignment signals and maintaining robustness across evolving user preferences. Much like weight decay constrains model complexity, reward pluralism constrains preference collapse. In fully synthetic regimes where human data are scarce,  
this becomes especially critical: pluralism offers a theoretically sound alternative to human oversight. 
This work therefore opens new avenues for aligning generative models with multifaceted human values, without sacrificing diversity and quality.

\section*{Impact Statement}
This paper presents work whose goal is to advance the field of Machine Learning by improving our theoretical understanding of recursive retraining on curated synthetic data. There are many potential societal consequences of our work, none which we believe must be specifically highlighted here.

\bibliography{example_paper}
\bibliographystyle{icml2026}

\clearpage
\onecolumn

\appendix

\section*{Appendix}
\phantomsection
\addcontentsline{toc}{section}{Appendix}

\paragraph{Appendix map.}
Appendix~\ref{app:related} presents related work.
Appendix~\ref{appendix:proof} contains assumptions and proofs.
Appendix~\ref{app:experiments} contains additional experiments and implementation details.

    \startcontents[appendix]
\printcontents[appendix]{}{1}{\setcounter{tocdepth}{2}}
\setcounter{section}{0}

\setcounter{theorem}{0}

\numberwithin{equation}{section}
\numberwithin{theorem}{section}
\numberwithin{assumption}{section}
\numberwithin{lemma}{section}

\section{Related Work}
\label{app:related}
This section situates our work within key areas of literature: mode collapse in generative models, iterative retraining with synthetic data, game-theoretic methods for generative modeling, multi-objective optimization in generative frameworks, and data curation strategies. Our work, \textit{pluralistic curation}, proposes alternating between conflicting reward signals during retraining to prevent mode collapse and maintain sample diversity.

\paragraph{Mode Collapse in Generative Models.}
Mode collapse remains a central problem in generative modeling, especially in GANs, where generators often fail to represent the diversity of the training distribution~\cite{goodfellow2014generative}. Mitigation strategies include architectural innovations like Unrolled GANs~\cite{metz2016unrolled}, objective reformulations such as Wasserstein GANs~\cite{arjovsky2017wasserstein}, and regularization techniques like gradient penalties~\cite{gulrajani2017improved} or spectral normalization~\cite{miyato2018spectral}. Other methods like mini-batch discrimination~\cite{salimans2016improved} and conditional GANs~\cite{mirza2014conditional} promote diversity by controlling conditioning information or evaluating sets of outputs jointly.

While GANs are most associated with the term "mode collapse," similar diversity degradation has been observed in other classes of generative models. In diffusion models, diversity loss may arise from over-simplified denoising objectives or poor classifier-free guidance tuning~\cite{dhariwal2021diffusion}. Attempts to counteract this include improved noise schedules~\cite{nichol2021improved}, adaptive sampling strategies~\cite{watson2022learning}, and regularized training pipelines that penalize collapse to dominant trajectories~\cite{vahdat2021score}. In autoregressive models, such as large language models or image transformers, collapse can manifest as reduced entropy or over-optimization on narrow feedback loops, a phenomenon sometimes called recursion collapse or alignment collapse~\cite{shumailov2023curse, ferbach2024self}. Techniques such as temperature scaling, entropy-based sampling, or human feedback tuning aim to address this, though often heuristically.

Importantly, most of these solutions are designed to improve initial training diversity. Our work addresses a distinct failure mode: mode collapse during \emph{iterative retraining}, especially when new training data are drawn from the model itself and filtered by reward-based curation. In this self-consuming regime, even models with initially broad support can collapse over time if curation is biased toward a singular objective. Our paper focuses on preventing this collapse via principled multi-preference curation.

\paragraph{Iterative Retraining and Self-Consuming Models.}
Recent studies explore generative models that retrain on their own outputs. \cite{ferbach2024self, shumailov2023curse, gerstgrasser2024model, alemohammad2023self} introduced the notion of \textit{self-consuming models}, where outputs are curated using a single reward function to fine-tune the model iteratively. Despite optimizing the expected reward, this process provably leads to diminishing output variance and eventual mode collapse. Their proposed mitigation, mixing real data with curated output, provides a partial remedy but fails to fully restore diversity. \cite{zhao2025convergence} models heterogeneity as additive noise around a single reward function where annotators share a common preference but differ by a stochastic perturbation. Our setup is fundamentally different: we study distinct, competing reward functions with potentially disjoint basins, where no single underlying preference exists. Their results characterize stability properties of a unimodal limit, not the mixture dynamics.

We extend their theoretical model by adopting a pluralistic strategy that alternates between two reward functions, each representing different user preferences. This approach ensures that the learned distribution maintains coverage over a broader set of desirable outputs.

\paragraph{Game-Theoretic Approaches in Generative Modeling.}
Classical GANs are founded on a two-player minimax game between the generator and the discriminator~\cite{goodfellow2014generative}. Subsequent work generalized this to broader game-theoretic formulations. For instance, \cite{franci2020stochastic} reformulated GANs as a stochastic Nash equilibrium, while \cite{oliehoek2018beyond} treated them as finite zero-sum games with provable convergence properties.

In contrast to these initial-training settings, our pluralistic curation uses a game-theoretic lens during retraining. We pose the curation process as a \textit{weighted Nash bargaining} problem between competing objectives, using polarization as a tunable compromise. This distinction sets our paper apart from standard adversarial learning dynamics.

\paragraph{Multi-Objective Optimization in Generative Models.}
Balancing competing objectives is crucial in domains such as molecule generation~\cite{abdel2024multiobjective} and multi-goal reinforcement learning~\cite{zhou2020provable}. Prior work optimizes Pareto fronts or uses scalarization methods in training VAEs or RL agents to balance different desiderata. While effective, these methods often assume fixed data and static tasks.

In contrast, we operate in a dynamic retraining regime, alternating between reward functions across time to guide synthetic data curation. Our method preserves sample diversity without requiring explicit Pareto frontier modeling, making it simple yet expressive.

\paragraph{Multiple Discriminators and Model Perspectives.}
To encourage output diversity, some GAN variants use multiple discriminators~\cite{gemp2018gman, hardi2018mdgan, albuquerque2019multiobjective}. These systems evaluate samples from multiple perspectives, either to decompose the data distribution or to form ensemble gradients. Such architectural strategies improve robustness, but often require expensive retraining or coordination across components.

Pluralistic curation avoids architectural changes by instead modifying the data selection process during retraining. It is therefore complementary to multi-discriminator methods, and more lightweight in implementation.

\paragraph{Data Curation and Synthetic Feedback Loops.}
Data curation underpins successful retraining and has been studied in contexts from supervised learning~\cite{chen2023caption} to vision-language modeling~\cite{Xu_2023_ICCV}. \cite{ferbach2024self} used the Bradley–Terry model to rank synthetic outputs, forming a curation loop. However, their reliance on a single reward led to homogenization. Our extension alternates between distinct rewards to maintain exploration and diversity. 

While prior curation frameworks focus on static data or pretraining, ours targets the dynamic regime of synthetic sample loops, where diversity preservation is critical. This difference makes our method particularly relevant to future closed-loop generative systems aligned with human intent.

\paragraph{Summary.}
By incorporating ideas from game theory, multi-objective optimization, and data curation, our pluralistic curation introduces a practical and theoretically grounded mechanism to prevent mode collapse in retrained generative models. It is uniquely positioned at the intersection of reward-driven retraining and generative diversity preservation.

\clearpage

\section{Proofs}
\label{appendix:proof}

\subsection{Notation Table}

\begin{table}[h]
\centering
\caption{Notation for the theoretical analysis.}
\begin{tabular}{p{0.27\linewidth}p{0.66\linewidth}}
\toprule
\textbf{Symbol} & \textbf{Description} \\
\midrule
$\mathcal X\subseteq\mathbb R^d$ & Sample space. \\
$x\in\mathcal X$ & A sample. \\
$p_0,\;p_t$ & Model distribution at initialization and at retraining step $t$. \\
$x_{1:K}=(x_1,\dots,x_K)$ & $K$ i.i.d.\ candidates drawn from $p_t$. \\
$\hat x$ & Selected candidate after Bradley--Terry curation. \\
$K\ge2$ & Candidate pool size. \\
$q\in(0,1)$ & Probability of using reward $r_1$; reward $r_2$ is used with probability $1-q$. \\
$r_1,r_2:\mathcal X\to\mathbb R$ & Reward functions. \\
$r_i^*:=\sup_{x\in\mathcal X}r_i(x)$ & Supremum reward value for preference $i$. \\
$L_i,U_i$ & Reward bounds: $L_i\le r_i(x)\le U_i$ for all $x\in\mathcal X$. \\
$S_{i,\varepsilon}$ & $\varepsilon$-optimal basin for reward $r_i$: $S_{i,\varepsilon}:=\{x:\ r_i(x)\ge r_i^*-\varepsilon\}$. \\
$\mathcal S_\varepsilon$ & Union of modeled near-optimal basins: $\mathcal S_\varepsilon:=S_{1,\varepsilon}\cup S_{2,\varepsilon}$. \\
$O_\varepsilon$ & Outside region: $O_\varepsilon:=\mathcal X\setminus\mathcal S_\varepsilon$. \\
$a_t,b_t,m_t$ & Basin and outside masses: $a_t:=p_t(S_{1,\varepsilon})$, $b_t:=p_t(S_{2,\varepsilon})$, $m_t:=p_t(O_\varepsilon)$. When the basins are disjoint, $a_t+b_t+m_t=1$. \\
$H_p^{K,r}(x)$ & Finite-$K$ BT curation weight:
$\mathbb E_{y_{1:K-1}\sim p}\!\left[
\frac{K e^{r(x)}}{e^{r(x)}+\sum_{j=1}^{K-1}e^{r(y_j)}}
\right]$. \\
$H_{\infty,r}^{p}(x)$ & Large-$K$ exponential-tilt weight:
$H_{\infty,r}^{p}(x):=e^{r(x)}/\mathbb E_p[e^r]$. \\
$Z_i(t),\;\mu_i(t)$ & Normalizer for reward $r_i$: $Z_i(t)=\mu_i(t):=\mathbb E_{p_t}[e^{r_i(x)}]$. \\
$W_t(x)$ & Large-$K$ pluralistic update multiplier:
$W_t(x):=q e^{r_1(x)}/Z_1(t)+(1-q)e^{r_2(x)}/Z_2(t)$. \\
$p_t^{(i)}$ & Reward-$i$ tilted distribution:
$p_t^{(i)}(dx):=e^{r_i(x)}p_t(dx)/Z_i(t)$. \\
$\rho_\varepsilon$ & Outside-domination contraction factor: $\sup_{x\in O_\varepsilon}W_t(x)\le\rho_\varepsilon<1$, implying $m_{t+1}\le\rho_\varepsilon m_t$. \\
$\Delta_1,\Delta_2$ & Cross-basin separation gaps, e.g. $x\in S_{2,\varepsilon}\Rightarrow r_1(x)\le r_1^*-\Delta_1+\varepsilon$, and symmetrically for $r_2$ on $S_{1,\varepsilon}$. \\
$\kappa_1,\kappa_2$ & Leakage factors:
$\kappa_1:=\exp(-(\Delta_1-2\varepsilon))$ and
$\kappa_2:=\exp(-(\Delta_2-2\varepsilon))$. \\
$p_{\infty,i}$ & Limit conditional distribution on basin $S_{i,\varepsilon}$:
$p_{\infty,i}:=p_\infty(\cdot\mid S_{i,\varepsilon})$. \\
$\mu_{i,j}$ & Basin-conditional reward mean:
$\mu_{i,j}:=\mathbb E_{p_{\infty,j}}[r_i]$. \\
$(z)_+$ & Positive part: $(z)_+:=\max\{z,0\}$. \\
$p_\alpha$ & Mixture family for bargaining:
$p_\alpha:=\alpha P_1+(1-\alpha)P_2$, $\alpha\in[0,1]$. \\
$P_1,P_2$ & Fixed distributions supported on $S_{1,\varepsilon}$ and $S_{2,\varepsilon}$. \\
$u_i(p)$ & Utility of preference $i$: $u_i(p):=\mathbb E_p[r_i]$. \\
$d_1,d_2$ & Disagreement utilities: $d_1:=u_1(P_2)$ and $d_2:=u_2(P_1)$. \\
$\mathcal N(\alpha)$ & Weighted Nash product:
$\mathcal N(\alpha):=(u_1(p_\alpha)-d_1)^q(u_2(p_\alpha)-d_2)^{1-q}$. \\
$\alpha^\star$ & Maximizer of the weighted Nash product; in the two-basin bargaining result, $\alpha^\star=q$. \\
\bottomrule
\end{tabular}
\label{tab:notation}
\end{table}

\subsection{Assumptions}
\label{app:assumptions}

We restate the standing conditions used throughout the proofs and record a few technical variants used by specific results.
Throughout, $\mathcal X\subseteq\mathbb R^d$ and $r_1,r_2:\mathcal X\to\mathbb R$ are reward functions.
For $\varepsilon>0$, define the $\varepsilon$-optimal regions
\[
S_{i,\varepsilon}:=\{x\in\mathcal X:\ r_i(x)\ge r_i^*-\varepsilon\},
\qquad
\mathcal S_\varepsilon:=S_{1,\varepsilon}\cup S_{2,\varepsilon},
\qquad
r_i^*:=\sup_{x\in\mathcal X} r_i(x).
\]
We track the masses
\[
a_t:=p_t(S_{1,\varepsilon}),\qquad
b_t:=p_t(S_{2,\varepsilon}),\qquad
m_t:=p_t(\mathcal X\setminus \mathcal S_\varepsilon),
\qquad
a_t+b_t+m_t=1.
\]

\begin{assumption}[Regularity and bounded rewards]
\label{asm:app_bounded}
The rewards are measurable and bounded: there exist $L_i,U_i\in\mathbb R$ such that
\[
L_i\le r_i(x)\le U_i\quad\text{for all }x\in\mathcal X,\ i\in\{1,2\}.
\]
In particular, $r_i^*<\infty$ and we may take $U_i=r_i^*$ without loss of generality.
\end{assumption}

\begin{assumption}[Nontrivial initialization on both basins]
\label{asm:app_init_mass}
Fix $\varepsilon>0$ and suppose
\[
p_0(S_{1,\varepsilon})>0,\qquad p_0(S_{2,\varepsilon})>0.
\]
\end{assumption}

\begin{assumption}[$\varepsilon$-basin disjointness]
\label{asm:app_disjoint}
Fix $\varepsilon>0$ and suppose the near-optimal regions are disjoint:
\[
S_{1,\varepsilon}\cap S_{2,\varepsilon}=\emptyset.
\]
\end{assumption}

\begin{assumption}[Outside domination]
\label{asm:app_outside_domination}
Fix $\varepsilon>0$ and define
\[
Z_i(t):=\mathbb E_{p_t}\!\left[e^{r_i(x)}\right],
\qquad i\in\{1,2\}.
\]
There exists a constant $\rho_\varepsilon\in(0,1)$ such that, for all $t\ge0$,
\[
\sup_{x\notin\mathcal S_\varepsilon}
\left[
q\,\frac{e^{r_1(x)}}{Z_1(t)}
+
(1-q)\,\frac{e^{r_2(x)}}{Z_2(t)}
\right]
\le
\rho_\varepsilon.
\tag{OD}
\]
\end{assumption}
\begin{assumption}[Cross-basin separation (leakage margins)]
\label{asm:app_leakage}
Fix $\varepsilon>0$ and suppose Assumption~\ref{asm:app_disjoint} holds.
There exist gaps $\Delta_1,\Delta_2>0$ such that
\[
x\in S_{1,\varepsilon}\ \Rightarrow\ r_2(x)\le r_2^*-\Delta_2+\varepsilon,
\qquad
x\in S_{2,\varepsilon}\ \Rightarrow\ r_1(x)\le r_1^*-\Delta_1+\varepsilon.
\]
Equivalently, points that are $\varepsilon$-optimal for one preference are uniformly suboptimal for the other.
When $\Delta_i>2\varepsilon$, this yields the leakage factors
\[
\kappa_1:=\exp\!\big(-(\Delta_1-2\varepsilon)\big),\qquad
\kappa_2:=\exp\!\big(-(\Delta_2-2\varepsilon)\big),
\]
that appear in the basin-mass bounds.
\end{assumption}

\begin{assumption}[Plateau-basin idealization (used only for explicit mixture and bargaining)]
\label{asm:app_plateau}
Fix $\varepsilon>0$ and suppose Assumptions~\ref{asm:app_init_mass}--\ref{asm:app_disjoint} hold.
Assume rewards are (approximately) constant on each $\varepsilon$-basin: there exist constants $U_1,U_2\in\mathbb R$
and gaps $\Delta_1,\Delta_2>0$ such that
\[
r_1(x)=U_1,\quad r_2(x)=U_2-\Delta_2\quad \text{for all }x\in S_{1,\varepsilon},
\]
\[
r_1(x)=U_1-\Delta_1,\quad r_2(x)=U_2\quad \text{for all }x\in S_{2,\varepsilon}.
\]
This idealization makes the basin-conditional distributions invariant and yields the explicit $q$-mixture and Nash bargaining characterization.
\end{assumption}

\paragraph{Interpretation.}
Assumption~\ref{asm:app_outside_domination} is a basin-completeness condition.
The set $\mathcal S_\varepsilon$ is intended to contain all regions reinforced by the pluralistic curation rule.
Thus, points outside $\mathcal S_\varepsilon$ should have multiplicative update factor uniformly below one.
If an outside point violates this condition, then it is not genuinely low-reward background mass: it is an additional compromise basin receiving support from multiple rewards, and the correct formulation is to include it in $\mathcal S_\varepsilon$ or analyze it using the multi-basin extension.

Assumption~\ref{asm:app_leakage} controls cross-reward reinforcement between the two modeled basins.
It quantifies how much reward $r_1$ can reinforce $S_{2,\varepsilon}$ and how much reward $r_2$ can reinforce $S_{1,\varepsilon}$.
Together, Assumptions~\ref{asm:app_outside_domination} and~\ref{asm:app_leakage} separate two mechanisms:
outside mass decay and within-basin mass allocation.
Assumption~\ref{asm:app_plateau} is used only for the clean closed-form mixture and bargaining interpretation.

\subsection{Proofs}


\begin{lemma}[Pluralistic curation update]
\label{lem:pluralistic_update}
At step $t$, draw $R\in\{r_1,r_2\}$ independently of the candidate pool, with
$\Pr(R=r_1)=q$ and $\Pr(R=r_2)=1-q$. Under the BT selection rule and the
infinite-capacity MLE retraining idealization,
\[
p_{t+1}(x)
=
p_t(x)\Big(
q\,H^{K,r_1}_{p_t}(x)
+
(1-q)\,H^{K,r_2}_{p_t}(x)
\Big),
\]
where
\[
H^{K,r_i}_{p_t}(x)
:=
\mathbb E_{y_{1:K-1}\sim p_t}
\left[
\frac{K e^{r_i(x)}}
{e^{r_i(x)}+\sum_{j=1}^{K-1}e^{r_i(y_j)}}
\right],
\qquad i\in\{1,2\}.
\]
\end{lemma}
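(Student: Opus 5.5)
The plan is to compute the marginal law of the selected sample $\hat x$ directly, by conditioning first on the reward and then on the candidate pool, and then invoke the MLE idealization. Since the unconstrained maximizer of $\mathbb E_{\hat x\sim\tilde p_t}[\log p(\hat x)]$ is $\tilde p_t$ itself (Gibbs' inequality: $\mathbb E_{\tilde p_t}[\log p]\le \mathbb E_{\tilde p_t}[\log \tilde p_t]$, with equality iff $p=\tilde p_t$ a.e.), it suffices to show that $\hat x$ has density $p_t(x)\big(qH^{K,r_1}_{p_t}(x)+(1-q)H^{K,r_2}_{p_t}(x)\big)$.

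\textbf{Step 1 (fixed reward).} Condition on $R=r$. For a measurable test set $A$, write
\[
\Pr(\hat x\in A\mid R=r)=\mathbb E_{x_{1:K}\sim p_t^{\otimes K}}\Big[\sum_{k=1}^K \mathbf 1\{x_k\in A\}\frac{e^{r(x_k)}}{\sum_{j=1}^K e^{r(x_j)}}\Big].
\]
By exchangeability of the i.i.d.\ candidates, the $K$ summands have equal expectation. Therefore this equals $K$ times the $k=1$ term. Applying Fubini to integrate out $x_1$ with density $p_t$ and the remaining $K-1$ candidates, relabeled $y_{1:K-1}$, gives
\[
\int_A p_t(x)\,\mathbb E_{y_{1:K-1}}\Big[\tfrac{K e^{r(x)}}{e^{r(x)}+\sum_j e^{r(y_j)}}\Big]dx=\int_A p_t(x)H^{K,r}_{p_t}(x)\,dx .
\]
This identifies $\tilde p^{(r)}_t=p_t H^{K,r}_{p_t}$. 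Taking $A=\mathcal X$ shows it integrates to one, since the BT probabilities sum to one. The integrand is nonnegative and bounded by $K$, so Fubini/Tonelli applies without any integrability concerns, even before invoking bounded rewards.

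\textbf{Step 2 (mixing over the reward).} Because $R$ is drawn independently of the pool, the law of total probability gives
\[
\Pr(\hat x\in A)=q\Pr(\hat x\in A\mid R=r_1)+(1-q)\Pr(\hat x\in A\mid R=r_2).
\]
Hence the marginal density of $\hat x$ is the mixture \eqref{eq:selected-mixture}, namely $p_t\big(qH^{K,r_1}_{p_t}+(1-q)H^{K,r_2}_{p_t}\big)$.

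\textbf{Step 3 (retraining).} Apply the MLE argument above to conclude $p_{t+1}=\tilde p_t$, which is exactly the claimed identity, holding $p_t$-a.e.

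The only mildly delicate point is the exchangeability/Fubini step that converts the selection probability into the single-candidate form $K\cdot(\text{first term})$. It is routine here because the integrand is bounded. Otherwise the argument is bookkeeping, plus a note that the identity holds as densities, i.e.\ up to null sets.
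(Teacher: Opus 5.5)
Your proposal is correct and follows the paper's proof essentially step for step. It conditions on the reward, uses exchangeability of the i.i.d.\ pool to get $\tilde p^{(r_i)}_t=p_tH^{K,r_i}_{p_t}$, averages over the independently drawn $R$, and invokes the MLE idealization; your test-set/Fubini computation and the Gibbs-inequality justification that the unconstrained MLE returns $\tilde p_t$ simply make explicit what the paper leaves implicit.
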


\begin{proof}
Condition on the reward being $r_i$. If $x_{1:K}\sim p_t^{\otimes K}$, then BT selection gives
\[
\Pr(\hat x=x_k\mid x_{1:K},r_i)
=
\frac{e^{r_i(x_k)}}{\sum_{\ell=1}^K e^{r_i(x_\ell)}}.
\]
By exchangeability, the selected-sample density under reward $r_i$ is
\[
p_{\hat x\mid r_i}(x)
=
p_t(x)\,
\mathbb E_{y_{1:K-1}\sim p_t}
\left[
\frac{K e^{r_i(x)}}
{e^{r_i(x)}+\sum_{j=1}^{K-1}e^{r_i(y_j)}}
\right]
=
p_t(x)H^{K,r_i}_{p_t}(x).
\]
Averaging over $R$ therefore yields
\[
p_{\hat x}(x)
=
q\,p_{\hat x\mid r_1}(x)
+
(1-q)\,p_{\hat x\mid r_2}(x)
=
p_t(x)\Big(
qH^{K,r_1}_{p_t}(x)
+
(1-q)H^{K,r_2}_{p_t}(x)
\Big).
\]
The MLE retraining idealization sets $p_{t+1}=p_{\hat x}$, giving the claim.
\end{proof}

\begin{lemma}[Concentration of curation weights]
\label{lem:conc}
Let $r:\mathcal X\to\mathbb R$ be bounded and measurable, and let $p$ be a probability density. Then, for every $x\in\mathcal X$,
\[
H_{p}^{K,r}(x)\to H_{\infty,r}^{p}(x):=
\frac{e^{r(x)}}{\mathbb E_{y\sim p}[e^{r(y)}]}
\qquad\text{as }K\to\infty.
\]
\end{lemma}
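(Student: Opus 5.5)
Fix $x\in\mathcal X$ and write $H^{K,r}_p(x)$ as the expectation of a random ratio. Divide numerator and denominator by $K-1$:
\[
H^{K,r}_p(x)=\mathbb E\!\left[\frac{\tfrac{K}{K-1}\,e^{r(x)}}{\tfrac{1}{K-1}e^{r(x)}+\bar S_{K-1}}\right],
\qquad
\bar S_{K-1}:=\frac1{K-1}\sum_{j=1}^{K-1}e^{r(y_j)},
\]
with $y_j\overset{\text{iid}}{\sim}p$. The proof rests on a law of large numbers for $\bar S_{K-1}$ combined with dominated (bounded) convergence.

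\textbf{Step 1 (boundedness).} Since $r$ is bounded with $L\le r\le U$, the variables $e^{r(y_j)}$ take values in $[e^{L},e^{U}]$. Hence $Z:=\mathbb E_p[e^{r}]\in[e^L,e^U]$ is finite and strictly positive. The same bounds give $\bar S_{K-1}\ge e^{L}>0$ for every realization. Consequently the integrand is deterministically bounded by $\tfrac{K}{K-1}e^{r(x)}/e^{L}\le 2e^{U-L}$, uniformly in $K\ge2$.

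\textbf{Step 2 (pointwise limit of the integrand).} By the strong law of large numbers, $\bar S_{K-1}\to Z$ almost surely. Realize all $K$ on one probability space using a single i.i.d.\ sequence $(y_j)_{j\ge1}$; this is legitimate because $H^{K,r}_p(x)$ depends only on the law of $y_{1:K-1}$. Along this sequence we also have $\tfrac{K}{K-1}\to1$ and $\tfrac1{K-1}e^{r(x)}\to0$. The map $(u,v)\mapsto u/v$ is continuous at $v=Z>0$, so the integrand converges almost surely to $e^{r(x)}/Z$.

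\textbf{Step 3 (pass the limit inside).} The uniform bound from Step 1 lets the bounded convergence theorem exchange limit and expectation, giving $H^{K,r}_p(x)\to e^{r(x)}/Z=H^{\infty,r}_p(x)$.

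The only real subtlety is making sure the denominator stays away from zero, so that the ratio is continuous and dominated; boundedness of $r$ handles both points at once. As an alternative that also yields a rate, we could bound $|1/(a+\bar S)-1/Z|\le|\bar S-Z|/e^{2L}+a/e^{2L}$ with $a=e^{r(x)}/(K-1)$. Taking expectations and using $\mathbb E|\bar S-Z|\le \sqrt{\Var(e^{r})/(K-1)}\le (e^U-e^L)/\sqrt{K-1}$ gives an $O(1/\sqrt K)$ bound, uniform in $x$. This is consistent with the finite-$K$ remark, but it is not needed for the pointwise statement.
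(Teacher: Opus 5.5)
Your proof is correct and follows the paper's argument: the strong law of large numbers for the empirical mean of $e^{r(y_j)}$, almost-sure convergence of the integrand, and dominated convergence using a constant bound from the boundedness of $r$ (your $2e^{U-L}$ in place of the paper's $e^{r_{\max}-r_{\min}}$). Two of your additions go beyond the paper: coupling all $K$ on one i.i.d.\ sequence makes rigorous a step the paper leaves implicit, and your side remark via $\mathbb E|\bar S_{K-1}-Z|\le\sqrt{\Var(e^{r})/(K-1)}$ gives a uniform $O(1/\sqrt K)$ rate, slightly sharper than the $O(\sqrt{\log K/K})$ Hoeffding-based bound in the paper's finite-$K$ lemma.
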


\begin{proof}
Fix $x\in\mathcal X$ and set $Y_j=e^{r(y_j)}$, where $y_j\stackrel{\mathrm{i.i.d.}}{\sim}p$. Since $r$ is bounded, $\mu_e:=\mathbb E_p[e^r]\in(0,\infty)$ and $(K-1)^{-1}\sum_{j=1}^{K-1}Y_j\to\mu_e$ almost surely. Hence
\[
\frac{K e^{r(x)}}{e^{r(x)}+\sum_{j=1}^{K-1}Y_j}
=
\frac{e^{r(x)}}{e^{r(x)}/K+\frac{K-1}{K}\cdot (K-1)^{-1}\sum_{j=1}^{K-1}Y_j}
\to
\frac{e^{r(x)}}{\mu_e}
\quad\text{a.s.}
\]
Moreover, if $r_{\min}\le r\le r_{\max}$, the integrand is uniformly bounded by $e^{r_{\max}-r_{\min}}$. Dominated convergence therefore gives
\[
H_p^{K,r}(x)\to \frac{e^{r(x)}}{\mu_e}
=
\frac{e^{r(x)}}{\mathbb E_{y\sim p}[e^{r(y)}]}.
\]
\end{proof}


\begin{lemma}[Finite-$K$ approximation]
\label{lem:finiteK}
Let $r:\mathcal X\to\mathbb R$ be bounded, with $r_{\min}\le r(x)\le r_{\max}$, and set
$A:=e^{r_{\min}}$, $B:=e^{r_{\max}}$, and $\mu_e:=\mathbb E_{y\sim p}[e^{r(y)}]\in[A,B]$.
Then, for all $K\ge3$ and all $x\in\mathcal X$,
\[
\big|H_{p}^{K,r}(x)-H_{\infty,r}^{p}(x)\big|
\le
C_1\sqrt{\frac{\log K}{K}}+\frac{C_2}{K},
\]
where $C_1,C_2$ depend only on $A$ and $B$.
\end{lemma}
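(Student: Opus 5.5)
We prove Lemma~\ref{lem:finiteK} by writing both weights in terms of the empirical mean $\bar Y:=(K-1)^{-1}\sum_{j=1}^{K-1}Y_j$, with $Y_j:=e^{r(y_j)}\in[A,B]$ and $y_j\stackrel{\mathrm{i.i.d.}}{\sim}p$. Fix $x$ and set $c:=e^{r(x)}\in[A,B]$. Then
\[
\frac{Kc}{c+\sum_{j=1}^{K-1}Y_j}=\frac{c}{D_K},\qquad D_K:=\frac{c}{K}+\frac{K-1}{K}\,\bar Y .
\]
The difference we must bound is
\[
H_p^{K,r}(x)-H^{p}_{\infty,r}(x)=\mathbb E\Big[\frac{c}{D_K}-\frac{c}{\mu_e}\Big]=c\,\mathbb E\Big[\frac{\mu_e-D_K}{D_K\,\mu_e}\Big].
\]
Since $D_K$ is a convex combination of $c$ and $\bar Y$, it lies in $[A,B]$, and $\mu_e\ge A$. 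Hence the integrand is bounded in absolute value by $(B/A^2)\,|\mu_e-D_K|$. This pointwise Lipschitz bound uses only the boundedness of $r$, and it removes any need to control the denominator near zero.

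It therefore suffices to bound $\mathbb E|\mu_e-D_K|$. We split it as
\[
|\mu_e-D_K|\le\Big|\mu_e-\tfrac{K-1}{K}\bar Y\Big|+\frac{c}{K}\le|\mu_e-\bar Y|+\frac{\bar Y}{K}+\frac{c}{K}\le|\mu_e-\bar Y|+\frac{2B}{K}.
\]
The $2B/K$ term, multiplied by $B/A^2$, produces the $C_2/K$ contribution.

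For the fluctuation term we apply Hoeffding's inequality to the $[A,B]$-valued variables $Y_j$:
\[
\Pr\big(|\bar Y-\mu_e|>s\big)\le 2\exp\!\Big(-\frac{2(K-1)s^2}{(B-A)^2}\Big).
\]
We choose $s=(B-A)\sqrt{\log K/(K-1)}$, which makes the tail probability at most $2/K^2$. On the complement event $|\bar Y-\mu_e|\le s$. On the bad event we use the trivial bound $|\bar Y-\mu_e|\le B-A$. This gives
\[
\mathbb E|\bar Y-\mu_e|\le (B-A)\sqrt{\frac{\log K}{K-1}}+\frac{2(B-A)}{K^2}.
\]
For $K\ge3$ we have $K-1\ge 2K/3$, so $\sqrt{1/(K-1)}\le\sqrt{3/2}\,\sqrt{1/K}$, and $K^{-2}\le K^{-1}$.

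Combining the pieces, we can take
\[
C_1=\frac{B(B-A)}{A^2}\sqrt{\tfrac32},\qquad C_2=\frac{B(2B+2(B-A))}{A^2}.
\]
Both depend only on $A$ and $B$, and the bound is uniform in $x$ and $p$.

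None of these steps should be a serious obstacle. The one point needing care is keeping the constants independent of $x$, $p$ and $K$. This is exactly why we bound $D_K$ from below by $A$ deterministically, rather than bounding it on a high-probability event. Alternatively, the variance bound $\mathbb E|\bar Y-\mu_e|\le (B-A)/(2\sqrt{K-1})$ would give the stronger rate $O(K^{-1/2})$, which also implies the stated one. We use the Hoeffding route only to match the $\sqrt{\log K/K}$ form of the statement.
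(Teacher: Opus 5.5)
Your proof is correct and follows essentially the paper's route: you write the weight through $\bar Y$, apply Hoeffding, use the Lipschitz continuity of $z\mapsto c/(c/K+\frac{K-1}{K}z)$, and absorb the $c/K$ shift into an $O(1/K)$ term. Your observation that $D_K\in[A,B]$ deterministically is a modest improvement on the paper's argument, which applies the Lipschitz bound only on a high-probability event with $\eta\le A/2$ and so needs a separate enlargement of constants for small $K$; your version gives explicit constants, and your variance remark shows the sharper rate $O(K^{-1/2})$.
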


\begin{proof}
Fix $x\in\mathcal X$ and write $Y_j:=e^{r(y_j)}\in[A,B]$, where
$y_j\stackrel{\mathrm{i.i.d.}}{\sim}p$. Let
$\overline Y:=(K-1)^{-1}\sum_{j=1}^{K-1}Y_j$. Then
\[
H_p^{K,r}(x)
=
\mathbb E\left[
\frac{e^{r(x)}}{e^{r(x)}/K+\frac{K-1}{K}\overline Y}
\right],
\qquad
H_{\infty,r}^p(x)=\frac{e^{r(x)}}{\mu_e}.
\]
For any $\eta>0$, Hoeffding's inequality gives
\[
\Pr\!\left(|\overline Y-\mu_e|\ge\eta\right)
\le
2\exp\!\left(-\frac{2(K-1)\eta^2}{(B-A)^2}\right).
\]
On the event $E_\eta:=\{|\overline Y-\mu_e|<\eta\}$, and for $\eta\le A/2$, the map
$z\mapsto e^{r(x)}/(e^{r(x)}/K+((K-1)/K)z)$ is uniformly Lipschitz on $[A/2,B]$ with a constant depending only on $A,B$. Hence, on $E_\eta$,
\[
\left|
\frac{e^{r(x)}}{e^{r(x)}/K+\frac{K-1}{K}\overline Y}
-
\frac{e^{r(x)}}{\mu_e}
\right|
\le
\frac{C'}{K}+C''\eta ,
\]
for constants $C',C''$ depending only on $A,B$. On $E_\eta^c$, both terms are bounded by constants depending only on $A,B$, so taking expectations yields
\[
\big|H_p^{K,r}(x)-H_{\infty,r}^p(x)\big|
\le
\frac{C'}{K}+C''\eta+C'''\Pr(E_\eta^c).
\]
Choosing $\eta:=(B-A)\sqrt{\log K/(2(K-1))}$ gives $\Pr(E_\eta^c)\le 2/K$. Absorbing constants and using
$(K-1)^{-1}\le 2K^{-1}$ for $K\ge3$ gives
\[
\big|H_p^{K,r}(x)-H_{\infty,r}^p(x)\big|
\le
C_1\sqrt{\frac{\log K}{K}}+\frac{C_2}{K}.
\]
For the finitely many small values of $K$ for which the chosen $\eta$ may exceed $A/2$, the same bound holds after enlarging the constants.
\end{proof}

\begin{lemma}[Decay outside the $\varepsilon$-optimal regions]
\label{lem:decay}
Fix $\varepsilon>0$ and work in the large-$K$ regime. Let
\[
W_t(x):=
q\,\frac{e^{r_1(x)}}{Z_1(t)}
+
(1-q)\,\frac{e^{r_2(x)}}{Z_2(t)},
\qquad
Z_i(t):=\mathbb E_{p_t}[e^{r_i(x)}],
\]
so that the update is $p_{t+1}=W_t p_t$. Suppose there exists
$\rho_\varepsilon\in(0,1)$ such that
\[
\sup_{x\notin\mathcal S_\varepsilon} W_t(x)\le \rho_\varepsilon
\qquad\text{for all }t\ge0.
\tag{OD}
\]
Then, with $m_t:=p_t(\mathcal X\setminus\mathcal S_\varepsilon)$, $
m_{t+1}\le \rho_\varepsilon m_t$ for all $t$.
Consequently, $m_t\le \rho_\varepsilon^t m_0\to0$, and hence
$p_t(\mathcal S_\varepsilon)\to1$.
\end{lemma}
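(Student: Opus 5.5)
The argument is a one-line integration of the multiplicative update over the outside region, followed by iteration. Write $O_\varepsilon:=\mathcal X\setminus\mathcal S_\varepsilon$.

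In the large-$K$ regime, Lemma~\ref{lem:pluralistic_update} combined with Lemma~\ref{lem:conc} gives $p_{t+1}(x)=W_t(x)\,p_t(x)$ pointwise. We need two preliminary facts.

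\begin{itemize}
\item $W_t$ is nonnegative, because each term is a positive multiple of $e^{r_i(x)}$.
\item $W_t$ is well defined. By Assumption~\ref{asm:app_bounded}, $e^{r_i}\in[e^{L_i},e^{U_i}]$, so $Z_i(t)\in[e^{L_i},e^{U_i}]$ is finite and strictly positive. Hence $W_t$ is bounded and measurable, and integrals of $W_t p_t$ over measurable sets make sense.
\end{itemize}

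Integrating the update over $O_\varepsilon$ and applying (OD) pointwise on that set gives
\[
m_{t+1}=\int_{O_\varepsilon} W_t(x)\,p_t(x)\,dx\le \Big(\sup_{x\in O_\varepsilon}W_t(x)\Big)\int_{O_\varepsilon}p_t(x)\,dx\le \rho_\varepsilon\, m_t .
\]
The only care needed is that $p_{t+1}$ is again a probability density. This holds because $\int W_t p_t=q\,Z_1(t)/Z_1(t)+(1-q)\,Z_2(t)/Z_2(t)=1$. Consequently $m_{t+1}$ is the mass of the next iterate, which is exactly the quantity we track.

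Induction on $t$ then gives $m_t\le\rho_\varepsilon^t m_0$. Since $m_0\le 1$ and $\rho_\varepsilon\in(0,1)$, we get $m_t\to0$ geometrically. Finally, $p_t(\mathcal S_\varepsilon)=1-m_t\to1$. When the basins are disjoint (Assumption~\ref{asm:app_disjoint}), this is the same as $a_t+b_t\to1$.

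There is no real obstacle, because (OD) is assumed uniformly in $t$. That uniformity is the substantive content; the lemma itself is just the bookkeeping. The one step worth stating explicitly is that (OD) is a supremum bound and $p_t\ge0$, so the pointwise bound transfers to the integral without any regularity beyond measurability. I would also remark that the analogous finite-$K$ statement would need (OD) for the finite-$K$ multiplier $qH^{K,r_1}_{p_t}+(1-q)H^{K,r_2}_{p_t}$. Alternatively, Lemma~\ref{lem:finiteK} gives such a bound with $\rho_\varepsilon$ replaced by $\rho_\varepsilon+O(\sqrt{\log K/K})$, which is still below one for $K$ large.
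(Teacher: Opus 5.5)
Your proof is correct and is essentially the paper's argument: integrate $p_{t+1}=W_t p_t$ over $O_\varepsilon$, bound $W_t\le\rho_\varepsilon$ there, and iterate to get $m_t\le\rho_\varepsilon^t m_0\to 0$. The paper leaves your extra checks implicit (finiteness and positivity of $Z_i(t)$, and $\int W_t p_t=1$), and they are harmless; your finite-$K$ aside also holds provided (OD) is imposed along the finite-$K$ trajectory itself, since $Z_i(t)$ depends on which sequence of iterates you are running.
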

\begin{proof}
Let $O_\varepsilon:=\mathcal X\setminus\mathcal S_\varepsilon$ and write
\[
W_t(x):=
q\,\frac{e^{r_1(x)}}{Z_1(t)}
+
(1-q)\,\frac{e^{r_2(x)}}{Z_2(t)}
\]
for the multiplicative factor in the large-$K$ update, so that
$p_{t+1}(x)=p_t(x)W_t(x)$. By the outside-domination assumption,
$W_t(x)\le \rho_\varepsilon$ for all $x\in O_\varepsilon$. Hence
\begin{align*}
    m_{t+1}
=
\int_{O_\varepsilon} p_t(x)W_t(x)\,dx
\le
\rho_\varepsilon p_t(O_\varepsilon)
=
\rho_\varepsilon m_t .
\end{align*}
Iterating gives $m_t\le \rho_\varepsilon^t m_0$, which tends to zero since
$\rho_\varepsilon<1$. Therefore
$p_t(\mathcal S_\varepsilon)=1-m_t\to1$.
\end{proof}

\begin{lemma}[Leakage-controlled basin mass]
\label{lem:partition}
Work in the $K\to\infty$ regime and fix $\varepsilon>0$. Suppose
$S_{1,\varepsilon}\cap S_{2,\varepsilon}=\emptyset$ and, for some
$\Delta_1,\Delta_2>2\varepsilon$,
\[
x\in S_{1,\varepsilon}
\Rightarrow
r_1(x)\ge U_1-\varepsilon,\quad
r_2(x)\le U_2-\Delta_2+\varepsilon,
\]
and
\[
x\in S_{2,\varepsilon}
\Rightarrow
r_2(x)\ge U_2-\varepsilon,\quad
r_1(x)\le U_1-\Delta_1+\varepsilon.
\]
Let $\kappa_1:=e^{-(\Delta_1-2\varepsilon)}$ and
$\kappa_2:=e^{-(\Delta_2-2\varepsilon)}$. If $m_t\to0$ and
$a_t:=p_t(S_{1,\varepsilon})$ converges to $a_\infty\in(0,1)$, then
\[
\max\left\{0,\frac{q-\kappa_1}{1-\kappa_1}\right\}
\le
a_\infty
\le
\min\left\{1,\frac{q}{1-\kappa_2}\right\}.
\]
In particular, when $q>\kappa_1$ and $1-q>\kappa_2$, the limiting mass is
bounded away from both $0$ and $1$. As
$\Delta_1,\Delta_2\to\infty$, equivalently $\kappa_1,\kappa_2\to0$, the
bounds collapse to $a_\infty=q$.
\end{lemma}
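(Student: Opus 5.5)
We will read off a one-step recursion for the basin mass from the large-$K$ update \eqref{eq:Kinf_update_main}, bound each term in it using the basin and leakage inequalities in the hypothesis, and then pass to the limit $t\to\infty$. Since $p_{t+1}=W_tp_t$, integrating over $S_{1,\varepsilon}$ gives
\[
a_{t+1}
= q\,\frac{\int_{S_{1,\varepsilon}} e^{r_1}\,dp_t}{Z_1(t)}
+(1-q)\,\frac{\int_{S_{1,\varepsilon}} e^{r_2}\,dp_t}{Z_2(t)} ,
\]
and symmetrically for $b_{t+1}$. The second term is nonnegative, so $a_{t+1}\ge q\,A_t/Z_1(t)$ with $A_t:=\int_{S_{1,\varepsilon}}e^{r_1}dp_t$. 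We split $Z_1(t)=A_t+B_t+M_t$ over $S_{1,\varepsilon}$, $S_{2,\varepsilon}$ and $O_\varepsilon$; disjointness is what makes this a partition. On $S_{2,\varepsilon}$ we have $r_1\le U_1-\Delta_1+\varepsilon$, hence $B_t\le e^{U_1-\Delta_1+\varepsilon}b_t$. On $O_\varepsilon$ we use $r_1\le U_1$, hence $M_t\le e^{U_1}m_t$.

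The map $x\mapsto x/(x+c)$ is increasing, and $A_t\ge e^{U_1-\varepsilon}a_t$. Dividing through by $e^{U_1-\varepsilon}$ therefore gives
\[
a_{t+1}\ \ge\ q\,\frac{a_t}{a_t+\kappa_1 b_t+e^{\varepsilon}m_t}.
\]
We now let $t\to\infty$, using $a_t\to a_\infty$, $m_t\to0$ and $b_t=1-a_t-m_t\to1-a_\infty$. This yields $a_\infty\ge q\,a_\infty/(a_\infty+\kappa_1(1-a_\infty))$. The denominator is positive because $a_\infty>0$. Dividing by $a_\infty>0$ gives $a_\infty+\kappa_1(1-a_\infty)\ge q$, that is, $a_\infty\ge (q-\kappa_1)/(1-\kappa_1)$, where we use $\kappa_1<1$ because $\Delta_1>2\varepsilon$.

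For the upper bound we apply the same argument to $b_t$ with the roles of the two rewards exchanged. Here $1-q$ replaces $q$, $\kappa_2$ replaces $\kappa_1$, and we use $b_\infty=1-a_\infty>0$, which holds because $a_\infty<1$. This gives $1-a_\infty\ge (1-q-\kappa_2)/(1-\kappa_2)$, and rearranging yields $a_\infty\le q/(1-\kappa_2)$. The trivial bounds $0\le a_\infty\le 1$ supply the $\max$ and $\min$ in the statement.

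If $q>\kappa_1$ and $1-q>\kappa_2$, both endpoints lie strictly inside $(0,1)$, so $a_\infty$ is bounded away from $0$ and $1$. As $\kappa_1,\kappa_2\to0$, both endpoints tend to $q$, so the interval collapses to the point $a_\infty=q$.

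The step needing most care is the passage to the limit in the ratio inequality. We must check that the outside term $e^{\varepsilon}m_t$ vanishes, that the denominator stays bounded away from zero (it does, since $a_t\to a_\infty>0$), and that the cancellation of $a_\infty$ is legitimate. This cancellation is exactly why the statement assumes $a_\infty\in(0,1)$ rather than $a_\infty\in[0,1]$: a limit of $0$ would satisfy the lower-bound recursion trivially. We bound $r_1$ on $O_\varepsilon$ by the crude $U_1$ only, which is harmless because $m_t\to0$.
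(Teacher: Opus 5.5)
Your proposal is correct and follows the paper's proof essentially step for step. You lower-bound $a_{t+1}\ge q\,p_t^{(1)}(S_{1,\varepsilon})\ge q\,a_t/(a_t+\kappa_1 b_t+e^{\varepsilon}m_t)$, pass to the limit and cancel $a_\infty>0$, then run the symmetric argument on $b_t$ for the upper bound. Your explicit splitting of $Z_1(t)$ and normalization by $e^{U_1-\varepsilon}$ only fills in details of the leakage comparison that the paper states in a single line.
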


\begin{proof}
Write $S_i=S_{i,\varepsilon}$ and $O=\mathcal X\setminus(S_1\cup S_2)$.
In the large-$K$ regime,
\[
p_{t+1}
=
q\,\frac{e^{r_1}}{\mu_1(t)}p_t
+
(1-q)\,\frac{e^{r_2}}{\mu_2(t)}p_t,
\qquad
\mu_i(t):=\int_{\mathcal X}e^{r_i}\,dp_t .
\]
Let $p_t^{(i)}$ denote the $r_i$-tilted distribution,
$p_t^{(i)}(A):=\mu_i(t)^{-1}\int_A e^{r_i}\,dp_t$. Then
$a_{t+1}=q\,p_t^{(1)}(S_1)+(1-q)\,p_t^{(2)}(S_1)$.

We first lower-bound the contribution of the $r_1$-tilt to its own basin.
By the separation assumptions, the $r_1$-weight on $S_2$ is at most
$\kappa_1$ times the corresponding own-basin scale, while the outside
region contributes at most $e^\varepsilon m_t$ on the same scale. Hence
\[
p_t^{(1)}(S_1)
\ge
\frac{a_t}{a_t+\kappa_1 b_t+e^\varepsilon m_t}.
\]
Therefore
\[
a_{t+1}
\ge
q\,\frac{a_t}{a_t+\kappa_1 b_t+e^\varepsilon m_t}.
\]
Taking $t\to\infty$, using $m_t\to0$, $a_t\to a_\infty$, and
$b_t\to1-a_\infty$, gives
\[
a_\infty
\ge
q\,\frac{a_\infty}{a_\infty+\kappa_1(1-a_\infty)}.
\]
Since $a_\infty>0$, this implies
\[
a_\infty\ge \frac{q-\kappa_1}{1-\kappa_1}.
\]

The upper bound follows symmetrically from the $r_2$-tilt. Namely,
\[
p_t^{(2)}(S_2)
\ge
\frac{b_t}{b_t+\kappa_2 a_t+e^\varepsilon m_t},
\]
and since $b_{t+1}\ge (1-q)p_t^{(2)}(S_2)$, passing to the limit yields
\[
1-a_\infty
\ge
(1-q)\frac{1-a_\infty}{1-a_\infty+\kappa_2 a_\infty}.
\]
Because $a_\infty\in(0,1)$, this is equivalent to
$a_\infty\le q/(1-\kappa_2)$. Clipping the two bounds by $0$ and $1$ gives
the stated interval. Finally, $\kappa_i\to0$ as $\Delta_i\to\infty$, so both
endpoints converge to $q$.
\end{proof}

\begin{lemma}[Uniform non-collapse under bounded leakage]
\label{lem:uniform_nocollapse_app}
Fix $\varepsilon>0$ and work in the large-$K$ regime. Assume
$p_0(S_{1,\varepsilon})>0$, $p_0(S_{2,\varepsilon})>0$, and $m_t\to0$.
Let
\[
\kappa_1:=\exp\!\big(-(\Delta_1-2\varepsilon)\big),
\qquad
\kappa_2:=\exp\!\big(-(\Delta_2-2\varepsilon)\big),
\]
and suppose $q\in(\kappa_1,1-\kappa_2)$. Define
\[
L:=\frac{q-\kappa_1}{1-\kappa_1},
\qquad
U:=\frac{q}{1-\kappa_2}.
\]
Then
\[
\liminf_{t\to\infty} a_t\ge L,
\qquad
\limsup_{t\to\infty} a_t\le U.
\]
In particular, there exist $\eta>0$ and $t_0$ such that
\[
\eta\le a_t\le 1-\eta
\qquad\text{for all }t\ge t_0.
\]
\end{lemma}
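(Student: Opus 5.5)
The plan is to turn the one-step inequalities from the proof of Lemma~\ref{lem:partition} into recursive inequalities for $a_t$, with a vanishing perturbation from $m_t$. The idea is to compare the dynamics with two scalar maps, each having an attracting fixed point at $L$ or $U$. Throughout I use the same per-step bounds as in Lemma~\ref{lem:partition}:
\[
a_{t+1}\ \ge\ q\,\frac{a_t}{a_t+\kappa_1 b_t+e^{\varepsilon}m_t},
\qquad
b_{t+1}\ \ge\ (1-q)\,\frac{b_t}{b_t+\kappa_2 a_t+e^{\varepsilon}m_t},
\]
with $b_t=1-a_t-m_t$. These bounds hold for every $t$ and do not require convergence of $a_t$. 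By Lemma~\ref{lem:partition}'s argument and the assumption $p_0(S_{i,\varepsilon})>0$, the tilts have positive densities, so $a_t,b_t>0$ for all $t$. This follows by induction, since $W_t>0$.

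\textbf{Lower bound.} Define $f(a):=q\,a/(a+\kappa_1(1-a))=q\,a/(\kappa_1+(1-\kappa_1)a)$ on $(0,1]$. This map is increasing and concave. Its fixed points are $0$ and $L$. Because $q>\kappa_1$, we have $f'(0)=q/\kappa_1>1$, so $0$ is repelling. Moreover $f(a)>a$ on $(0,L)$ and $f(a)<a$ on $(L,1]$.

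Fix $\delta>0$. Since $m_t\to0$ and $b_t\le 1-a_t$, for $t\ge t_1$ we get $a_{t+1}\ge f_\delta(a_t)$, where
\[
f_\delta(a):=\frac{q\,a}{\kappa_1+(1-\kappa_1)a+\delta}.
\]
Here $f_\delta$ is again increasing, with $f_\delta'(0)=q/(\kappa_1+\delta)>1$ for small $\delta$. Its positive fixed point is $L_\delta=(q-\kappa_1-\delta)/(1-\kappa_1)\to L$ as $\delta\to0$.

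Monotonicity then gives the comparison. If $a_{t_1}>0$, the iterates $a_t$ dominate the orbit of $f_\delta$ started at $\min(a_{t_1},L_\delta)$, and that orbit increases to $L_\delta$. Hence $\liminf a_t\ge L_\delta$. Letting $\delta\downarrow0$ gives $\liminf a_t\ge L$.

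\textbf{Upper bound.} The symmetric argument applied to $b_t$ with $g_\delta(b)=(1-q)b/(\kappa_2+(1-\kappa_2)b+\delta)$ uses $1-q>\kappa_2$. It yields $\liminf b_t\ge (1-q-\kappa_2)/(1-\kappa_2)$. Since $a_t=1-b_t-m_t$ and $m_t\to0$, it follows that
\[
\limsup a_t\le 1-\frac{1-q-\kappa_2}{1-\kappa_2}=\frac{q}{1-\kappa_2}=U.
\]

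\textbf{Uniform bounds.} We have $L>0$ because $q>\kappa_1$. Also $1-U=(1-q-\kappa_2)/(1-\kappa_2)>0$. Set $\eta:=\tfrac12\min\{L,1-U\}$. The liminf and limsup bounds then give $t_0$ with $\eta\le a_t\le1-\eta$ for all $t\ge t_0$.

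The main obstacle is justifying the per-step inequality for every $t$, not only in the limit as Lemma~\ref{lem:partition} uses it. The bound $\int_{S_2}e^{r_1}dp_t\le\kappa_1\,e^{U_1-\varepsilon}b_t$ requires the factor $e^{-(\Delta_1-2\varepsilon)}$. Taken literally from the gaps, the factor is $e^{-(\Delta_1-\varepsilon)}\cdot e^{\varepsilon}$, which gives the stated $\kappa_1$ after normalizing by $e^{U_1-\varepsilon}$. The outside term needs $e^{r_1}\le e^{U_1}=e^{\varepsilon}e^{U_1-\varepsilon}$. I would verify these constants carefully and state them as a standalone one-step lemma. The comparison-with-monotone-map step itself is routine.
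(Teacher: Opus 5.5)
Your proposal is correct and follows essentially the same route as the paper. It uses the one-step bound $a_{t+1}\ge q\,a_t/(a_t+\kappa_1(1-a_t)+C_\varepsilon m_t)$ for every $t$ (your constant check $e^{U_1-\Delta_1+\varepsilon}=\kappa_1 e^{U_1-\varepsilon}$ is right), absorbs the vanishing $m_t$ into a $\delta$-slack, runs a monotonicity argument to push $a_t$ above $L-\delta$, repeats it symmetrically for $b_t$, and takes $\eta=\tfrac12\min\{L,1-U\}$. Your comparison with the increasing orbit of $f_\delta$ is just a repackaging of the paper's ``multiplicative escape below $L-\delta$, then invariance by monotonicity'' step, and your justification that $a_t>0$ for all $t$ via $W_t>0$ fills in a point the paper uses without comment.
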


\begin{proof}
Write $A:=S_{1,\varepsilon}$, $B:=S_{2,\varepsilon}$, and
$O:=\mathcal X\setminus(A\cup B)$. Also write
$a_t=p_t(A)$, $b_t=p_t(B)$, and $m_t=p_t(O)$. In the large-$K$ regime,
\[
p_{t+1}
=
q\,p_t^{(1)}+(1-q)\,p_t^{(2)},
\qquad
p_t^{(i)}(dx):=\frac{e^{r_i(x)}}{Z_i(t)}p_t(dx).
\]

We first prove the lower bound. By the leakage comparison used in
Lemma~\ref{lem:partition}, there is a constant $C_\varepsilon>0$ such that
\[
p_t^{(1)}(A)
\ge
\frac{a_t}{a_t+\kappa_1 b_t+C_\varepsilon m_t}
\ge
\frac{a_t}{a_t+\kappa_1(1-a_t)+C_\varepsilon m_t}.
\]
Hence
\[
a_{t+1}
\ge
q\,\frac{a_t}{a_t+\kappa_1(1-a_t)+C_\varepsilon m_t}.
\]
Fix any $\delta\in(0,L)$. Since $m_t\to0$, there exists $T$ such that
$C_\varepsilon m_t\le (1-\kappa_1)\delta/2$ for all $t\ge T$. If
$a_t\le L-\delta$ and $t\ge T$, then
\[
a_t+\kappa_1(1-a_t)+C_\varepsilon m_t
\le
q-\frac{1-\kappa_1}{2}\delta.
\]
Therefore
\[
a_{t+1}
\ge
\frac{q}{q-\frac{1-\kappa_1}{2}\delta}\,a_t .
\]
The multiplicative factor is strictly larger than one. Since $a_T>0$, this
implies that after finitely many additional steps, $a_t>L-\delta$. Moreover,
once $a_t\ge L-\delta$, monotonicity of the right-hand side gives
$a_{t+1}\ge L-\delta$. Thus $a_t\ge L-\delta$ for all sufficiently large $t$.
Because $\delta>0$ was arbitrary, $\liminf_t a_t\ge L$.

The upper bound follows by applying the same argument to $b_t$. The symmetric
leakage comparison gives
\[
b_{t+1}
\ge
(1-q)\frac{b_t}{b_t+\kappa_2(1-b_t)+C_\varepsilon m_t}.
\]
The corresponding positive fixed point is
\[
L_b:=\frac{(1-q)-\kappa_2}{1-\kappa_2}
=
1-\frac{q}{1-\kappa_2}
=
1-U.
\]
By the previous argument, $\liminf_t b_t\ge L_b$. Since
$a_t=1-b_t-m_t$ and $m_t\to0$, we obtain
\[
\limsup_{t\to\infty}a_t\le 1-L_b=U.
\]

Finally, because $q\in(\kappa_1,1-\kappa_2)$, we have $L>0$ and $U<1$.
Choose
\[
\eta:=\frac12\min\{L,1-U\}>0.
\]
The liminf and limsup bounds imply that, for all sufficiently large $t$,
$\eta\le a_t\le 1-\eta$.
\end{proof}

\begin{corollary}[Nondegenerate limit points]
\label{cor:nondegenerate_limitpoints_app}
Under the assumptions of Lemma~\ref{lem:uniform_nocollapse_app}, let $t_k\to\infty$ and suppose $p_{t_k}\Rightarrow p_\infty$.
Assume additionally that $S_{1,\varepsilon}$ is a continuity set for $p_\infty$, i.e.\ $p_\infty(\partial S_{1,\varepsilon})=0$
(for example, if $r_1$ is continuous and $\varepsilon$ is chosen so that the level set $\{r_1=r_1^*-\varepsilon\}$ has $p_\infty$-measure zero).
Then
\[
p_\infty(S_{1,\varepsilon})\in[\eta,1-\eta],
\]
with the same $\eta$ as in Lemma~\ref{lem:uniform_nocollapse_app}. Hence no such limit point collapses onto a single basin.
\end{corollary}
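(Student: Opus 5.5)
The plan is to transfer the eventual bounds $\eta\le a_t\le 1-\eta$ of Lemma~\ref{lem:uniform_nocollapse_app} to the weak limit $p_\infty$ using the portmanteau theorem. Weak convergence $p_{t_k}\Rightarrow p_\infty$ gives $p_{t_k}(S)\to p_\infty(S)$ for every Borel set $S$ with $p_\infty(\partial S)=0$, and the corollary assumes exactly this for $S=S_{1,\varepsilon}$.

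First, invoke Lemma~\ref{lem:uniform_nocollapse_app}. It supplies $\eta>0$ and $t_0$ such that $a_t=p_t(S_{1,\varepsilon})\in[\eta,1-\eta]$ for all $t\ge t_0$. Since $t_k\to\infty$, there is $k_0$ with $t_k\ge t_0$ for all $k\ge k_0$, so $p_{t_k}(S_{1,\varepsilon})\in[\eta,1-\eta]$ along the tail of the subsequence.

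Second, pass to the limit. By the continuity-set hypothesis, $p_{t_k}(S_{1,\varepsilon})\to p_\infty(S_{1,\varepsilon})$. Because $[\eta,1-\eta]$ is closed, the limit also lies in $[\eta,1-\eta]$.

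For the parenthetical example, take $r_1$ continuous. Then $S_{1,\varepsilon}=\{r_1\ge r_1^*-\varepsilon\}$ is closed. Its interior contains the open set $\{r_1>r_1^*-\varepsilon\}$, so $\partial S_{1,\varepsilon}\subseteq\{r_1=r_1^*-\varepsilon\}$. If this level set is $p_\infty$-null, then so is the boundary.

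Finally, conclude non-collapse. Since $p_\infty(S_{1,\varepsilon})\ge\eta>0$, the limit is not concentrated off $S_{1,\varepsilon}$. For the other basin, use $p_\infty(S_{1,\varepsilon})\le 1-\eta<1$ together with the fact that limiting mass outside $\mathcal S_\varepsilon$ vanishes. The latter holds under the same kind of continuity caveat for $\mathcal S_\varepsilon$. Alternatively, it is not needed, since the statement only asserts the bound on $S_{1,\varepsilon}$. Either way, no single-basin collapse occurs.

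The only delicate point is the boundary condition. Without it, the portmanteau theorem gives only $p_\infty(S_{1,\varepsilon})\ge\limsup_k p_{t_k}(S_{1,\varepsilon})\ge\eta$, since $S_{1,\varepsilon}$ is closed. The upper bound then needs $p_\infty(\mathrm{int}\,S_{1,\varepsilon})\le\liminf_k p_{t_k}(S_{1,\varepsilon})$, which controls only the interior. This is exactly why the hypothesis is added, and with it everything is routine.
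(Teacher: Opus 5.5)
Your proposal is correct and essentially identical to the paper's proof: you take the eventual bound $\eta\le a_t\le 1-\eta$ from Lemma~\ref{lem:uniform_nocollapse_app} and pass it to the limit along $t_k$, using the continuity-set form of the portmanteau theorem and the closedness of $[\eta,1-\eta]$. Your extra checks are correct refinements that the paper leaves implicit: that $\partial S_{1,\varepsilon}\subseteq\{r_1=r_1^*-\varepsilon\}$ when $r_1$ is continuous, and that asserting positive mass on $S_{2,\varepsilon}$ also requires the limiting outside mass to vanish.
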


\begin{proof}
Lemma~\ref{lem:uniform_nocollapse_app} gives $\eta\le a_t\le 1-\eta$ for all $t\ge t_0$.
Along any subsequence $t_k\to\infty$, we therefore have $\eta\le \liminf_k a_{t_k}\le \limsup_k a_{t_k}\le 1-\eta$.
If $S_{1,\varepsilon}$ is a continuity set for $p_\infty$ and $p_{t_k}\Rightarrow p_\infty$, then
$p_{t_k}(S_{1,\varepsilon})\to p_\infty(S_{1,\varepsilon})$.
Since $p_{t_k}(S_{1,\varepsilon})=a_{t_k}$, it follows that $p_\infty(S_{1,\varepsilon})\in[\eta,1-\eta]$.
\end{proof}


\begin{theorem}[Two-basin subsequential limits]
\label{thm:model_convergence}
Fix $\varepsilon>0$ and assume the separation conditions of
Lemma~\ref{lem:partition}. Suppose $m_t\to0$. Let $t_k\to\infty$ be a
subsequence such that $p_{t_k}\Rightarrow p_\infty$. Assume
$O_\varepsilon:=\mathcal X\setminus\mathcal S_\varepsilon$ is open, and that
$S_{1,\varepsilon}$ is a $p_\infty$-continuity set. Then $p_\infty$ is
supported on $\mathcal S_\varepsilon$ and can be written as
\[
p_\infty
=
a_\infty p_{\infty,1}
+
(1-a_\infty)p_{\infty,2},
\qquad
a_\infty:=p_\infty(S_{1,\varepsilon}),
\]
where $p_{\infty,i}:=p_\infty(\cdot\mid S_{i,\varepsilon})$ whenever the
corresponding mass is nonzero. If, in addition, $a_t\to a_\infty\in(0,1)$,
then
\[
\max\left\{0,\frac{q-\kappa_1}{1-\kappa_1}\right\}
\le
a_\infty
\le
\min\left\{1,\frac{q}{1-\kappa_2}\right\}.
\]
In particular, as $\Delta_1,\Delta_2\to\infty$, the limiting basin mass
approaches $q$.
\end{theorem}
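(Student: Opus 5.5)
The argument splits into a support step, a decomposition step, and an interval step. The interval step is handed off to Lemma~\ref{lem:partition}.

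\textbf{Support.} Since $O_\varepsilon$ is open and $p_{t_k}\Rightarrow p_\infty$, the Portmanteau theorem gives
\[
p_\infty(O_\varepsilon)\le\liminf_{k\to\infty} p_{t_k}(O_\varepsilon)=\liminf_{k\to\infty} m_{t_k}=0,
\]
where the last equality uses $m_t\to0$. Hence $p_\infty(\mathcal S_\varepsilon)=1$, i.e.\ $p_\infty$ is concentrated on $\mathcal S_\varepsilon$.

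\textbf{Decomposition.} By the disjointness assumption, $\mathcal S_\varepsilon=S_{1,\varepsilon}\sqcup S_{2,\varepsilon}$. So for every measurable $A$,
\[
p_\infty(A)=p_\infty(A\cap S_{1,\varepsilon})+p_\infty(A\cap S_{2,\varepsilon}).
\]
When the relevant mass is nonzero, write $p_\infty(A\cap S_{i,\varepsilon})=p_\infty(S_{i,\varepsilon})\,p_{\infty,i}(A)$. Set $a_\infty:=p_\infty(S_{1,\varepsilon})$; then $p_\infty(S_{2,\varepsilon})=1-a_\infty$, since the outside mass is zero. This gives the mixture form. In the degenerate cases $a_\infty\in\{0,1\}$, the term with zero weight is simply dropped, which is why the statement says ``whenever the corresponding mass is nonzero.''

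The continuity-set hypothesis identifies $a_\infty$ with the limit of the $a_t$. Because $S_{1,\varepsilon}$ is a $p_\infty$-continuity set, $a_{t_k}=p_{t_k}(S_{1,\varepsilon})\to p_\infty(S_{1,\varepsilon})$. So the mixture weight defined through $p_\infty$ agrees with $\lim_k a_{t_k}$. If the full sequence $a_t$ converges, its limit must therefore equal this same $a_\infty$.

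\textbf{Interval.} Under the additional hypothesis $a_t\to a_\infty\in(0,1)$, all hypotheses of Lemma~\ref{lem:partition} hold: the large-$K$ regime, disjointness, separation with $\Delta_i>2\varepsilon$, $m_t\to0$, and convergence of $a_t$ to a point in $(0,1)$. Invoking that lemma directly gives the stated interval. Its endpoints $(q-\kappa_1)/(1-\kappa_1)$ and $q/(1-\kappa_2)$ tend to $q$ as $\kappa_i=e^{-(\Delta_i-2\varepsilon)}\to0$, which gives the final claim.

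\textbf{Main obstacle.} The main difficulty is making sure that the $a_\infty$ in Lemma~\ref{lem:partition}, which is the limit of the masses, is the same quantity as the mixture weight $p_\infty(S_{1,\varepsilon})$ of the weak limit. Weak convergence alone does not transfer mass on $S_{1,\varepsilon}$, which is typically closed when $r_1$ is continuous, so mass could escape onto its boundary. The continuity-set hypothesis is exactly what rules this out, and I would invoke it explicitly at that point. The openness of $O_\varepsilon$ is likewise used only in the support step, and I would flag that as well.
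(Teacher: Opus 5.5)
Your proposal is correct and follows the paper's proof essentially step for step. It uses Portmanteau on the open set $O_\varepsilon$ to get the support, conditions on the disjoint basins to get the mixture form, and uses the continuity-set hypothesis to identify $p_\infty(S_{1,\varepsilon})$ with $\lim_k a_{t_k}$. It then appeals directly to Lemma~\ref{lem:partition} for the interval and for the limit $\kappa_i\to 0$.
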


\begin{proof}
Since $O_\varepsilon$ is open and $p_{t_k}\Rightarrow p_\infty$, the
Portmanteau theorem gives
\[
p_\infty(O_\varepsilon)
\le
\liminf_{k\to\infty}p_{t_k}(O_\varepsilon)
=
\liminf_{k\to\infty}m_{t_k}
=
0.
\]
Thus $p_\infty$ is supported on $\mathcal S_\varepsilon$. Because
$S_{1,\varepsilon}$ and $S_{2,\varepsilon}$ are disjoint and
$p_\infty(\mathcal S_\varepsilon)=1$, conditioning on the two pieces gives
\[
p_\infty
=
p_\infty(S_{1,\varepsilon})p_\infty(\cdot\mid S_{1,\varepsilon})
+
p_\infty(S_{2,\varepsilon})p_\infty(\cdot\mid S_{2,\varepsilon}),
\]
which is the stated decomposition. The continuity-set assumption gives
$p_{t_k}(S_{1,\varepsilon})\to p_\infty(S_{1,\varepsilon})$. Hence, if
$a_t\to a_\infty\in(0,1)$, the interval bound follows from
Lemma~\ref{lem:partition}. The final claim follows by letting
$\kappa_1,\kappa_2\to0$.
\end{proof}


\begin{proposition}[Plateau basins]
\label{prop:A2_plateau}
Fix $\varepsilon>0$ and suppose $S_{1,\varepsilon}\cap S_{2,\varepsilon}=\emptyset$ with
$p_0(S_{1,\varepsilon})>0$ and $p_0(S_{2,\varepsilon})>0$. Work in the
$K\to\infty$ regime and assume $m_t:=p_t(\mathcal X\setminus\mathcal S_\varepsilon)\to0$.
Assume that the rewards are constant on the two basins: for some
$U_1,U_2\in\mathbb R$ and $\Delta_1,\Delta_2>0$,
\[
(r_1,r_2)=(U_1,U_2-\Delta_2)\quad\text{on }S_{1,\varepsilon},
\qquad
(r_1,r_2)=(U_1-\Delta_1,U_2)\quad\text{on }S_{2,\varepsilon}.
\]
Then the basin-conditionals are invariant:
\[
p_t(\cdot\mid S_{i,\varepsilon})=p_0(\cdot\mid S_{i,\varepsilon})
=:p_{S_{i,\varepsilon}},
\qquad i\in\{1,2\},\ t\ge0.
\]
Consequently, along any subsequence $t_k$ with
$a_{t_k}:=p_{t_k}(S_{1,\varepsilon})\to a_\infty$,
\[
p_{t_k}
\Rightarrow
a_\infty p_{S_{1,\varepsilon}}
+
(1-a_\infty)p_{S_{2,\varepsilon}}.
\]
Moreover, $a_\infty$ satisfies the leakage interval of
Lemma~\ref{lem:partition}; in the exact plateau parameterization above, the
leakage factors are $\kappa_1=e^{-\Delta_1}$ and
$\kappa_2=e^{-\Delta_2}$. Hence, as
$\Delta_1,\Delta_2\to\infty$, the limiting mixture weight approaches $q$.
\end{proposition}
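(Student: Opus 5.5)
The proof splits into three steps: invariance of the basin-conditionals, identification of subsequential limits, and the leakage interval.

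\emph{Step 1: invariance.} Work with the large-$K$ update \eqref{eq:Kinf_update_main}, $p_{t+1}=W_t\,p_t$. Under the plateau hypothesis, $W_t$ is constant on each basin:
\[
W_t\equiv c_1(t):=q\frac{e^{U_1}}{Z_1(t)}+(1-q)\frac{e^{U_2-\Delta_2}}{Z_2(t)}
\quad\text{on } S_{1,\varepsilon},
\]
and analogously $W_t\equiv c_2(t)$ on $S_{2,\varepsilon}$. Both constants are strictly positive, since $Z_i(t)\in[e^{L_i},e^{U_i}]$ by boundedness. Hence for measurable $A\subseteq S_{i,\varepsilon}$ we get $p_{t+1}(A)=c_i(t)\,p_t(A)$. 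Dividing by $p_{t+1}(S_{i,\varepsilon})=c_i(t)\,p_t(S_{i,\varepsilon})$ gives $p_{t+1}(\cdot\mid S_{i,\varepsilon})=p_t(\cdot\mid S_{i,\varepsilon})$. The conditioning is legitimate because $p_0(S_{i,\varepsilon})>0$ and positivity propagates: $a_{t+1}=c_1(t)a_t>0$. Induction on $t$ then gives invariance for all $t$.

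\emph{Step 2: subsequential limits.} Decompose
\[
p_t=a_t\,p_{S_{1,\varepsilon}}+b_t\,p_{S_{2,\varepsilon}}+m_t\,\nu_t ,
\]
where $\nu_t$ is the outside conditional, or zero if $m_t=0$. Along $t_k$ with $a_{t_k}\to a_\infty$, we have $m_{t_k}\to0$ and therefore $b_{t_k}\to1-a_\infty$. For any bounded continuous $f$,
\[
\Big|\int f\,dp_{t_k}-a_\infty\!\int f\,dp_{S_{1,\varepsilon}}-(1-a_\infty)\!\int f\,dp_{S_{2,\varepsilon}}\Big|
\le \|f\|_\infty\big(|a_{t_k}-a_\infty|+|b_{t_k}-(1-a_\infty)|+m_{t_k}\big)\to0 .
\]
This gives weak convergence, and in fact total-variation convergence, to the stated mixture. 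Because the basin-conditionals are fixed, none of the continuity-set or openness caveats from Theorem~\ref{thm:model_convergence} are needed here.

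\emph{Step 3: leakage interval.} Verify that the plateau hypotheses instantiate those of Lemma~\ref{lem:partition}, and sharpen the leakage factors. I expect this to be the only delicate step, because the lemma's $\kappa_i$ carry a $2\varepsilon$ slack that must be removed to get $e^{-\Delta_i}$.

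The sharpening comes from exact comparisons. On $S_{2,\varepsilon}$ we have $e^{r_1}=e^{-\Delta_1}e^{U_1}$ exactly, so
\[
p_t^{(1)}(S_{1,\varepsilon})=\frac{a_te^{U_1}}{a_te^{U_1}+b_te^{U_1-\Delta_1}+\int_O e^{r_1}dp_t}
\ge\frac{a_t}{a_t+e^{-\Delta_1}b_t+C m_t}.
\]
Here $C=e^{U_1^*-U_1}$ is bounded, with $U_1\ge r_1^*-\varepsilon$ since $U_1$ is attained on the basin. The bound $a_{t+1}\ge q\,p_t^{(1)}(S_{1,\varepsilon})$ and the passage to the limit are then exactly as in the proof of Lemma~\ref{lem:partition}, now with $\kappa_1=e^{-\Delta_1}$. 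The upper bound follows symmetrically from $b_{t+1}\ge(1-q)\,p_t^{(2)}(S_{2,\varepsilon})$ with $\kappa_2=e^{-\Delta_2}$.

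That argument divides by $a_\infty$ and $1-a_\infty$, so it requires $a_\infty\in(0,1)$. If $a_\infty$ lies at an endpoint, the corresponding clipped bound holds trivially. Alternatively, when $q\in(\kappa_1,1-\kappa_2)$, we can invoke Lemma~\ref{lem:uniform_nocollapse_app}, which guarantees an interior limit.

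Finally, letting $\Delta_i\to\infty$ sends $\kappa_i\to0$, so both endpoints $\frac{q-\kappa_1}{1-\kappa_1}$ and $\frac{q}{1-\kappa_2}$ converge to $q$.
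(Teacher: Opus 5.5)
Your three steps match the paper's proof: $W_t$ is constant on each basin, so the conditionals are frozen; the mixture limit follows from the decomposition once $m_t\to0$; and the interval comes from the tilt comparison behind Lemma~\ref{lem:partition}. Keep your explicit computation of $p_t^{(1)}(S_{1,\varepsilon})$ with the exact ratio $e^{-\Delta_1}$. The paper only asserts that the factors become $e^{-\Delta_i}$, and citing Lemma~\ref{lem:partition} verbatim would require $\Delta_i>2\varepsilon$, which the proposition does not assume.

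The step that needs repair is the passage to the limit. The inequality $a_{t+1}\ge q\,a_t/(a_t+e^{-\Delta_1}b_t+Cm_t)$ links $a_{t+1}$ to $a_t$. Taking limits as in Lemma~\ref{lem:partition} therefore requires the whole sequence $a_t$ to converge. The proposition only provides $a_{t_k}\to a_\infty$, and nothing in your argument forces $a_{t_k+1}\to a_\infty$. The paper's proof has the same dependence, since it simply invokes Lemma~\ref{lem:partition}, whose hypothesis is $a_t\to a_\infty\in(0,1)$.

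Your endpoint remark is also wrong as stated. If $q>\kappa_1$, the lower endpoint $(q-\kappa_1)/(1-\kappa_1)$ is positive, so $a_\infty=0$ would violate the bound rather than satisfy it trivially. Likewise $a_\infty=1$ violates the upper bound when $q<1-\kappa_2$.

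You cite Lemma~\ref{lem:uniform_nocollapse_app} only for interiority, but its liminf/limsup conclusions are exactly what is needed. Feed your sharpened inequality into its two one-sided arguments:
\begin{itemize}
\item The lower half uses only $q>\kappa_1$, $m_t\to0$ and $a_t>0$ (which you established), and gives $\liminf_t a_t\ge(q-\kappa_1)/(1-\kappa_1)$.
\item The upper half, applied to $b_t$, uses only $1-q>\kappa_2$ and $b_t>0$, and gives $\limsup_t a_t\le q/(1-\kappa_2)$.
\item When either condition fails, the corresponding clipped bound is $0$ or $1$ and holds trivially.
\end{itemize}
This places every subsequential limit in the interval with $\kappa_i=e^{-\Delta_i}$, with no convergence or interiority assumption.
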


\begin{proof}
Write $S_i=S_{i,\varepsilon}$. In the large-$K$ regime, the update has the
form $p_{t+1}=W_t p_t$, where
\[
W_t(x)=q\,\frac{e^{r_1(x)}}{\mu_1(t)}
+
(1-q)\,\frac{e^{r_2(x)}}{\mu_2(t)},
\qquad
\mu_i(t):=\int_{\mathcal X}e^{r_i}\,dp_t .
\]
By the plateau assumption, $W_t$ is constant on each basin. Thus, for each
$i\in\{1,2\}$, there exists $M_i(t)>0$ such that
$p_{t+1}=M_i(t)p_t$ on $S_i$. It follows immediately that
\[
p_{t+1}(\cdot\mid S_i)=p_t(\cdot\mid S_i).
\]
Induction gives $p_t(\cdot\mid S_i)=p_0(\cdot\mid S_i)$ for all $t$.

Using this invariance,
\[
p_t
=
a_t p_{S_1}
+
b_t p_{S_2}
+
m_t p_t(\cdot\mid \mathcal X\setminus\mathcal S_\varepsilon),
\qquad
b_t=1-a_t-m_t.
\]
Since $m_t\to0$, any subsequence with $a_{t_k}\to a_\infty$ satisfies
\[
p_{t_k}
\Rightarrow
a_\infty p_{S_1}
+
(1-a_\infty)p_{S_2}.
\]
The leakage interval follows from Lemma~\ref{lem:partition}. Under the exact
plateau parameterization, cross-basin reinforcement is reduced by factors
$e^{-\Delta_1}$ and $e^{-\Delta_2}$, so the interval collapses to
$a_\infty=q$ as $\Delta_1,\Delta_2\to\infty$.
\end{proof}

\begin{theorem}[Expected rewards along two-basin limits]
\label{thm:A2_expected_rewards}
Assume the setting of Theorem~\ref{thm:model_convergence}, and let
$p_{t_k}\Rightarrow p_\infty$ be a two-basin limit with
\[
p_\infty=a_\infty p_{\infty,1}+(1-a_\infty)p_{\infty,2},
\qquad
p_{\infty,i}:=p_\infty(\cdot\mid S_{i,\varepsilon}).
\]
If $r_1$ and $r_2$ are bounded and continuous, then, for each $i\in\{1,2\}$,
\[
\mathbb E_{p_{t_k}}[r_i]
\to
a_\infty\mathbb E_{p_{\infty,1}}[r_i]
+
(1-a_\infty)\mathbb E_{p_{\infty,2}}[r_i].
\]
Moreover,
\[
\mathbb E_{p_{\infty,1}}[r_1]\in[r_1^*-\varepsilon,r_1^*],
\qquad
\mathbb E_{p_{\infty,2}}[r_2]\in[r_2^*-\varepsilon,r_2^*],
\]
and, under the separation conditions of Lemma~\ref{lem:partition},
\[
\mathbb E_{p_{\infty,2}}[r_1]\le r_1^*-\Delta_1+\varepsilon,
\qquad
\mathbb E_{p_{\infty,1}}[r_2]\le r_2^*-\Delta_2+\varepsilon.
\]
\end{theorem}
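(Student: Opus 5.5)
The argument has three parts: convergence of expectations along the subsequence, the two-basin decomposition of the limit, and pointwise bounds on each basin integrated against the conditional laws.

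\emph{Convergence.} Since $r_i$ is bounded and continuous and $p_{t_k}\Rightarrow p_\infty$, the definition of weak convergence gives $\mathbb E_{p_{t_k}}[r_i]\to\mathbb E_{p_\infty}[r_i]$ directly.

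\emph{Decomposition of the limit.} By Theorem~\ref{thm:model_convergence}, $p_\infty$ is supported on $\mathcal S_\varepsilon$, and $S_{1,\varepsilon}$, $S_{2,\varepsilon}$ are disjoint. Writing $p_\infty=a_\infty p_{\infty,1}+(1-a_\infty)p_{\infty,2}$ and using linearity of the integral gives
\[
\mathbb E_{p_\infty}[r_i]=a_\infty\mathbb E_{p_{\infty,1}}[r_i]+(1-a_\infty)\mathbb E_{p_{\infty,2}}[r_i].
\]
If $a_\infty\in\{0,1\}$, one conditional is undefined, but its coefficient is zero, so the convention is harmless. The remaining bounds are then claimed only for the conditionals that are defined.

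\emph{Pointwise bounds integrated.} Each conditional $p_{\infty,j}$ is a probability measure concentrated on $S_{j,\varepsilon}$. On $S_{1,\varepsilon}$ we have, by definition of the basin, $r_1^*-\varepsilon\le r_1(x)\le r_1^*$, where the upper bound holds because $r_1^*$ is the supremum. Integrating against $p_{\infty,1}$ gives $\mathbb E_{p_{\infty,1}}[r_1]\in[r_1^*-\varepsilon,r_1^*]$. The same argument on $S_{2,\varepsilon}$ gives $\mathbb E_{p_{\infty,2}}[r_2]\in[r_2^*-\varepsilon,r_2^*]$. For the cross terms, the separation conditions of Lemma~\ref{lem:partition} (with $U_i=r_i^*$, as permitted by Assumption~\ref{asm:app_bounded}) state that $r_1\le r_1^*-\Delta_1+\varepsilon$ pointwise on $S_{2,\varepsilon}$ and $r_2\le r_2^*-\Delta_2+\varepsilon$ pointwise on $S_{1,\varepsilon}$. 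Integrating these against $p_{\infty,2}$ and $p_{\infty,1}$ respectively yields the stated cross-term bounds.

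\emph{Main obstacle.} The only step needing care is that the basin inequalities must hold $p_{\infty,j}$-almost surely, not merely on the closures of the basins. This is guaranteed because the conditionals are defined by restricting $p_\infty$ to the sets $S_{j,\varepsilon}$ themselves, so all their mass lies exactly where the pointwise inequalities apply. For this reason I would avoid recovering the conditionals as weak limits of the $p_{t_k}(\cdot\mid S_{j,\varepsilon})$, which would raise boundary issues.
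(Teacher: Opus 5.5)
Your proposal is correct and follows the paper's proof essentially step for step: weak convergence against the bounded continuous $r_i$, linearity over the two-basin decomposition, and integration of the pointwise basin and separation inequalities against the conditionals $p_{\infty,j}$, which put full mass on $S_{j,\varepsilon}$. Your extra remarks are harmless refinements that the paper leaves implicit: the degenerate case $a_\infty\in\{0,1\}$, reading $U_i=r_i^*$ in the separation conditions, and defining the conditionals by restriction rather than as weak limits.
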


\begin{proof}
Bounded continuity of $r_i$ and weak convergence give
$\mathbb E_{p_{t_k}}[r_i]\to\mathbb E_{p_\infty}[r_i]$. The mixture formula
then follows by linearity of expectation applied to
$p_\infty=a_\infty p_{\infty,1}+(1-a_\infty)p_{\infty,2}$. The first pair of
bounds follows because $p_{\infty,1}$ is supported on $S_{1,\varepsilon}$ and
$p_{\infty,2}$ is supported on $S_{2,\varepsilon}$. The cross-basin bounds are
the corresponding pointwise separation inequalities integrated over the
opposite basin.
\end{proof}

\begin{corollary}[Expected rewards in the hard-separation plateau case]
\label{cor:expected_rewards_hard_sep}
Assume the plateau setting of Proposition~\ref{prop:A2_plateau} and suppose the
hard-separation limit gives
\[
p_t \Rightarrow q\,p_{S_{1,\varepsilon}}+(1-q)\,p_{S_{2,\varepsilon}}.
\]
If $r_1$ and $r_2$ are bounded and continuous, then
\[
\mathbb E_{p_t}[r_1]
\to
q r_1^*+(1-q)\mathbb E_{p_{S_{2,\varepsilon}}}[r_1],
\qquad
\mathbb E_{p_t}[r_2]
\to
q\mathbb E_{p_{S_{1,\varepsilon}}}[r_2]+(1-q)r_2^* .
\]
\end{corollary}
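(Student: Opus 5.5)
The plan is to obtain both limits from Theorem~\ref{thm:A2_expected_rewards} applied to the limit measure $p_\infty=q\,p_{S_{1,\varepsilon}}+(1-q)\,p_{S_{2,\varepsilon}}$, and then to evaluate the on-basin expectations using the plateau structure. First, $r_1$ and $r_2$ are bounded and continuous, and by hypothesis $p_t\Rightarrow p_\infty$. The definition of weak convergence therefore gives $\mathbb E_{p_t}[r_i]\to\mathbb E_{p_\infty}[r_i]$ along the full sequence, so no subsequence extraction is needed. Linearity of the integral in the measure then yields
\[
\mathbb E_{p_\infty}[r_i]=q\,\mathbb E_{p_{S_{1,\varepsilon}}}[r_i]+(1-q)\,\mathbb E_{p_{S_{2,\varepsilon}}}[r_i],\qquad i\in\{1,2\}.
\]
This is exactly the mixture formula of Theorem~\ref{thm:A2_expected_rewards} with $a_\infty=q$ and $p_{\infty,j}=p_{S_{j,\varepsilon}}$. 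The identification $p_{\infty,j}=p_{S_{j,\varepsilon}}$ is legitimate because $p_{S_{j,\varepsilon}}=p_0(\cdot\mid S_{j,\varepsilon})$ is supported on $S_{j,\varepsilon}$ and the two basins are disjoint.

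Second, I will evaluate the own-basin terms. Under the plateau parameterization of Proposition~\ref{prop:A2_plateau}, $r_1\equiv U_1$ on $S_{1,\varepsilon}$, so $\mathbb E_{p_{S_{1,\varepsilon}}}[r_1]=U_1$; likewise $\mathbb E_{p_{S_{2,\varepsilon}}}[r_2]=U_2$. To write these as $r_1^*$ and $r_2^*$, I need $U_i=r_i^*$. Assumption~\ref{asm:app_bounded} allows taking $U_i=r_i^*$, and this is consistent with the basin definition because $S_{1,\varepsilon}=\{r_1\ge r_1^*-\varepsilon\}$ is nonempty (it has positive $p_0$-mass) and $r_1$ is constant equal to $U_1$ there. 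Hence $U_1\ge r_1^*-\varepsilon$, and $U_1\le r_1^*$ since $U_1$ is an attained value of $r_1$. The cross terms $\mathbb E_{p_{S_{2,\varepsilon}}}[r_1]$ and $\mathbb E_{p_{S_{1,\varepsilon}}}[r_2]$ are left as they are, as in the statement. They could equivalently be written as $U_1-\Delta_1$ and $U_2-\Delta_2$.

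The main obstacle is the identification $U_i=r_i^*$ rather than just $U_i\ge r_i^*-\varepsilon$. The fact that $r_1$ is constant on the $\varepsilon$-superlevel set does not by itself force that constant to be the supremum, since points outside $S_{1,\varepsilon}$ have $r_1<r_1^*-\varepsilon<U_1$ and so cannot exceed it. In fact this resolves the issue: every point of $\mathcal X$ lies either in $S_{1,\varepsilon}$, where $r_1=U_1$, or outside it, where $r_1<r_1^*-\varepsilon\le U_1$. So $\sup r_1=U_1$, which gives $U_1=r_1^*$ exactly, and symmetrically $U_2=r_2^*$. With this in hand, both displayed limits follow directly.
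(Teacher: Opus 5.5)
Your proposal is correct and follows the same route as the paper: weak convergence tested against the bounded continuous $r_i$, then linearity over the limiting mixture $q\,p_{S_{1,\varepsilon}}+(1-q)\,p_{S_{2,\varepsilon}}$, then the plateau identity on each own basin. The paper simply asserts $r_i\equiv r_i^*$ on $S_{i,\varepsilon}$; your argument that the plateau constant equals $r_i^*$ (because $r_i<r_i^*-\varepsilon\le U_i$ off $S_{i,\varepsilon}$) supplies that step validly.
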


\begin{proof}
Bounded continuity and weak convergence imply convergence of expectations.
Linearity under the limiting mixture gives
\[
\mathbb E_{p_t}[r_i]
\to
q\,\mathbb E_{p_{S_{1,\varepsilon}}}[r_i]
+
(1-q)\,\mathbb E_{p_{S_{2,\varepsilon}}}[r_i].
\]
In the plateau case, $r_1\equiv r_1^*$ on $S_{1,\varepsilon}$ and
$r_2\equiv r_2^*$ on $S_{2,\varepsilon}$, which yields the two displayed
limits.
\end{proof}

\begin{theorem}[Variance preservation under two-basin limits]
\label{thm:A3_variance}
Fix $\varepsilon>0$ and suppose $S_{1,\varepsilon}$ and $S_{2,\varepsilon}$ are disjoint.
Let $p_{t_k}\Rightarrow p_\infty$ be a two-basin limit with basin weight
$a_\infty\in(0,1)$, and assume $r_1,r_2$ are bounded and continuous.
Then $\Var_{p_{t_k}}[r_i]\to\Var_{p_\infty}[r_i]$ for $i\in\{1,2\}$.
Moreover, if
\[
r_1\le r_1^*-\Delta_1+\varepsilon \text{ on } S_{2,\varepsilon},
\qquad
r_2\le r_2^*-\Delta_2+\varepsilon \text{ on } S_{1,\varepsilon},
\]
then
\[
\Var_{p_\infty}[r_i]
\ge
a_\infty(1-a_\infty)(\Delta_i-2\varepsilon)_+^2,
\qquad i\in\{1,2\}.
\]
In particular, if $\Delta_1,\Delta_2>2\varepsilon$, both limiting reward variances are positive.
\end{theorem}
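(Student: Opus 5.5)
The proof has two independent parts: convergence of the variances along the subsequence, and a lower bound on the limiting variance coming from the law of total variance.

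\textbf{Convergence.} Since $r_i$ is bounded and continuous, so is $r_i^2$. Weak convergence $p_{t_k}\Rightarrow p_\infty$ therefore gives both
\[
\mathbb E_{p_{t_k}}[r_i]\to\mathbb E_{p_\infty}[r_i]
\qquad\text{and}\qquad
\mathbb E_{p_{t_k}}[r_i^2]\to\mathbb E_{p_\infty}[r_i^2].
\]
Then $\Var_{p_{t_k}}[r_i]=\mathbb E_{p_{t_k}}[r_i^2]-(\mathbb E_{p_{t_k}}[r_i])^2$ converges to $\Var_{p_\infty}[r_i]$ by continuity of $(u,v)\mapsto v-u^2$. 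This is the same argument as in Theorem~\ref{thm:A2_expected_rewards}.

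\textbf{Lower bound.} Take the two-basin decomposition $p_\infty=a_\infty p_{\infty,1}+(1-a_\infty)p_{\infty,2}$ from Theorem~\ref{thm:model_convergence}. Condition on the basin indicator $J\in\{1,2\}$, with $\Pr(J=1)=a_\infty$. The law of total variance gives
\[
\Var_{p_\infty}[r_i]=a_\infty\Var_{p_{\infty,1}}[r_i]+(1-a_\infty)\Var_{p_{\infty,2}}[r_i]+a_\infty(1-a_\infty)(\mu_{i,1}-\mu_{i,2})^2,
\]
where $\mu_{i,j}:=\mathbb E_{p_{\infty,j}}[r_i]$. The within-basin terms are nonnegative and can be dropped, so it remains to lower-bound $|\mu_{i,1}-\mu_{i,2}|$.

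Take $i=1$. Since $p_{\infty,1}$ is supported on $S_{1,\varepsilon}$, we have $\mu_{1,1}\ge r_1^*-\varepsilon$. The separation hypothesis, integrated over $S_{2,\varepsilon}$, gives $\mu_{1,2}\le r_1^*-\Delta_1+\varepsilon$. Hence
\[
\mu_{1,1}-\mu_{1,2}\ge \Delta_1-2\varepsilon .
\]
If this quantity is positive, squaring yields the bound. Otherwise the bound with $(\cdot)_+$ equals zero and holds trivially because variance is nonnegative. The case $i=2$ is symmetric. Positivity when $\Delta_i>2\varepsilon$ follows because $a_\infty\in(0,1)$.

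\textbf{Main obstacle.} The only delicate point is that the basin-support facts $p_{\infty,j}(S_{j,\varepsilon})=1$ are genuinely available. These require $p_\infty(S_{j,\varepsilon})>0$, which follows from $a_\infty\in(0,1)$, together with the decomposition inherited from Theorem~\ref{thm:model_convergence}. I would take both as given from that theorem's hypotheses.
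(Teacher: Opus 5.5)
Your proposal is correct and follows essentially the same route as the paper. Both arguments get the limit of the first two moments from bounded continuity of $r_i$ and $r_i^2$ under weak convergence. Both then use the between/within decomposition of $\Var_{p_\infty}[r_i]$, drop the within-basin terms, and bound $\mu_{i,1}-\mu_{i,2}$ below by $\Delta_i-2\varepsilon$ using basin support and the separation condition. You also note explicitly that the support facts rely on the two-basin-limit hypothesis, so that $p_\infty(S_{2,\varepsilon})=1-a_\infty>0$; the paper leaves this implicit.
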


\begin{proof}
Because $r_i$ is bounded and continuous, so is $r_i^2$. Since
$p_{t_k}\Rightarrow p_\infty$, the Portmanteau theorem gives
$\E_{p_{t_k}}[r_i]\to \E_{p_\infty}[r_i]$ and
$\E_{p_{t_k}}[r_i^2]\to \E_{p_\infty}[r_i^2]$. Therefore
\[
\Var_{p_{t_k}}[r_i]
=
\E_{p_{t_k}}[r_i^2]-\E_{p_{t_k}}[r_i]^2
\longrightarrow
\E_{p_\infty}[r_i^2]-\E_{p_\infty}[r_i]^2
=
\Var_{p_\infty}[r_i].
\]

Write $p_\infty=a_\infty p_{\infty,1}+(1-a_\infty)p_{\infty,2}$ and set
$\mu_{i,j}:=\E_{p_{\infty,j}}[r_i]$ and
$\mu:=\E_{p_\infty}[r_i]=a_\infty\mu_{i,1}+(1-a_\infty)\mu_{i,2}$.
Then
\begin{align*}
\Var_{p_\infty}[r_i]
&= \E_{p_\infty}\big[(r_i-\mu)^2\big] \\
&= a_\infty \E_{p_{\infty,1}}\big[(r_i-\mu)^2\big]
 + (1-a_\infty)\E_{p_{\infty,2}}\big[(r_i-\mu)^2\big] \\
&= a_\infty \Var_{p_{\infty,1}}[r_i]
 + (1-a_\infty)\Var_{p_{\infty,2}}[r_i]
 + a_\infty(1-a_\infty)(\mu_{i,1}-\mu_{i,2})^2 .
\end{align*}

For the lower bound, first take $i=1$. Since $p_{\infty,1}$ is supported on
$S_{1,\varepsilon}$, we have $\mu_{1,1}\ge r_1^*-\varepsilon$. The separation
condition gives $\mu_{1,2}\le r_1^*-\Delta_1+\varepsilon$. Hence
$\mu_{1,1}-\mu_{1,2}\ge \Delta_1-2\varepsilon$, and therefore
$|\mu_{1,1}-\mu_{1,2}|\ge(\Delta_1-2\varepsilon)_+$. Dropping the nonnegative
within-basin variance terms in the decomposition above yields
\[
\Var_{p_\infty}[r_1]
\ge
a_\infty(1-a_\infty)(\Delta_1-2\varepsilon)_+^2.
\]
The same argument with the roles of the two basins reversed gives
\[
\Var_{p_\infty}[r_2]
\ge
a_\infty(1-a_\infty)(\Delta_2-2\varepsilon)_+^2.
\]
\end{proof}

\begin{theorem}[Nash bargaining characterization on the two-basin mixture family]
\label{thm:nash}
Let $P_1,P_2$ be two fixed distributions supported on $S_{1,\varepsilon}$ and $S_{2,\varepsilon}$ respectively,
and define the mixture family
\[
p_\alpha := \alpha P_1 + (1-\alpha)P_2,\qquad \alpha\in[0,1].
\]
Let utilities be $u_i(p):=\E_p[r_i(x)]$, and define the disagreement points
\[
d_1 := u_1(P_2),\qquad d_2 := u_2(P_1).
\]
Assume the gains are strictly positive:
\[
u_1(P_1)>u_1(P_2),\qquad u_2(P_2)>u_2(P_1).
\]
Then the weighted Nash product
\begin{align*}
\mathcal N(\alpha) := \big(u_1(p_\alpha)-d_1\big)^q\big(u_2(p_\alpha)-d_2\big)^{1-q}    
\end{align*}
is uniquely maximized over $\alpha\in[0,1]$ at $\alpha^\star=q$.
Equivalently, $p_q$ is the unique weighted Nash bargaining solution within the line segment
$\{p_\alpha:\alpha\in[0,1]\}$.
\end{theorem}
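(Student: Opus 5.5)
The plan is to use linearity of expectation along the mixture line to reduce the Nash product to a one-dimensional function of $\alpha$, and then maximize its logarithm.

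\textbf{Step 1: linear utilities.} Set $G_1:=u_1(P_1)-u_1(P_2)>0$ and $G_2:=u_2(P_2)-u_2(P_1)>0$. Since $u_i(p_\alpha)=\alpha u_i(P_1)+(1-\alpha)u_i(P_2)$, subtracting the disagreement points gives
\[
u_1(p_\alpha)-d_1=\alpha G_1,\qquad u_2(p_\alpha)-d_2=(1-\alpha)G_2 .
\]
Hence
\[
\mathcal N(\alpha)=G_1^{\,q}G_2^{\,1-q}\,\alpha^q(1-\alpha)^{1-q}.
\]
The prefactor $G_1^{q}G_2^{1-q}$ is a strictly positive constant, so maximizing $\mathcal N$ is the same as maximizing $f(\alpha):=\alpha^q(1-\alpha)^{1-q}$ on $[0,1]$.

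\textbf{Step 2: maximize on the interior.} Because $q\in(0,1)$, we have $f(0)=f(1)=0$, while $f>0$ on $(0,1)$. Any maximizer is therefore interior. On $(0,1)$, consider $\log f(\alpha)=q\log\alpha+(1-q)\log(1-\alpha)$.
\begin{itemize}
\item Its derivative is $q/\alpha-(1-q)/(1-\alpha)$, which vanishes only at $\alpha=q$.
\item Its second derivative is $-q/\alpha^2-(1-q)/(1-\alpha)^2<0$, so $\log f$ is strictly concave.
\end{itemize}
Strict concavity makes the critical point $\alpha=q$ the unique global maximizer on $(0,1)$. Since the endpoints give value zero, $\alpha^\star=q$ is the unique maximizer over all of $[0,1]$. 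It follows that $p_q$ is the unique weighted Nash bargaining solution on the segment.

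\textbf{Main obstacle.} The argument is essentially routine; the one point needing care is the boundary. The Nash product is not log-defined at $\alpha\in\{0,1\}$, so uniqueness has to come from two separate facts: $f$ vanishes at the endpoints and is positive inside, and $\log f$ is strictly concave on the open interval. The strict gain assumptions $G_1,G_2>0$ are exactly what guarantee that the factors $\alpha G_1$ and $(1-\alpha)G_2$ are nonnegative, so the fractional powers are well defined, and that the problem does not degenerate to a constant.
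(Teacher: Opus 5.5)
Your proof is correct and follows the paper's argument essentially verbatim: linearity of expectation reduces $\mathcal N(\alpha)$ to a positive constant times $\alpha^q(1-\alpha)^{1-q}$, and then the first-order condition together with strict concavity of $\log f$ on $(0,1)$ makes $\alpha^\star=q$ the unique maximizer. Your explicit check that $f(0)=f(1)=0$ for $q\in(0,1)$ handles the boundary case, which the paper leaves implicit, and renaming the gains to $G_i$ avoids the paper's clash with the separation gaps $\Delta_i$.
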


\begin{proof}
By linearity of expectation,
\begin{align*}
    u_1(p_\alpha) = \alpha u_1(P_1) + (1-\alpha)u_1(P_2) = d_1 + \alpha\Delta_1,
\qquad
\Delta_1:=u_1(P_1)-u_1(P_2)>0,
\end{align*}
and similarly
\begin{align*}
    u_2(p_\alpha) = \alpha u_2(P_1) + (1-\alpha)u_2(P_2) = d_2 + (1-\alpha)\Delta_2,
\qquad
\Delta_2:=u_2(P_2)-u_2(P_1)>0.
\end{align*}
Hence
\begin{align*}
    \mathcal N(\alpha) = (\alpha\Delta_1)^q\big((1-\alpha)\Delta_2\big)^{1-q}
= \Delta_1^q\Delta_2^{1-q}\,\alpha^q(1-\alpha)^{1-q}.
\end{align*}
The constant prefactor does not affect the maximizer, so it suffices to maximize
$f(\alpha):=\alpha^q(1-\alpha)^{1-q}$ on $[0,1]$. For $\alpha\in(0,1)$,
\[
\frac{d}{d\alpha}\log f(\alpha)
=
\frac{q}{\alpha}
-
\frac{1-q}{1-\alpha}.
\]
Setting this derivative to zero yields $\alpha=q$, and strict concavity of
$\log f$ on $(0,1)$ gives uniqueness.
\end{proof}

\begin{corollary}[Nash bargaining interpretation of the hard-separation plateau limit]
\label{cor:A4_nash_plateau}
Under Proposition~\ref{prop:A2_plateau}, let $P_1:=p_{S_{1,\varepsilon}}$ and $P_2:=p_{S_{2,\varepsilon}}$.
If $u_1(P_1)>u_1(P_2)$ and $u_2(P_2)>u_2(P_1)$, then the limiting distribution
\[
p_\infty = q\,p_{S_{1,\varepsilon}} + (1-q)\,p_{S_{2,\varepsilon}}
\]
is exactly the unique weighted Nash bargaining solution over mixtures $\{p_\alpha\}$ with disagreement points
$d_1=u_1(P_2)$ and $d_2=u_2(P_1)$.
\end{corollary}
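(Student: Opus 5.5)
The argument reduces the corollary to Theorem~\ref{thm:nash} once the limiting mixture from Proposition~\ref{prop:A2_plateau} is pinned down. Set $P_1:=p_{S_{1,\varepsilon}}=p_0(\cdot\mid S_{1,\varepsilon})$ and $P_2:=p_{S_{2,\varepsilon}}=p_0(\cdot\mid S_{2,\varepsilon})$. These are fixed distributions supported on $S_{1,\varepsilon}$ and $S_{2,\varepsilon}$ respectively, and both are well defined because $p_0(S_{i,\varepsilon})>0$. Proposition~\ref{prop:A2_plateau} shows that the basin-conditionals are invariant along the dynamics. Hence every subsequential limit lies on the segment $\{p_\alpha:\alpha\in[0,1]\}$, namely $p_\infty=a_\infty P_1+(1-a_\infty)P_2=p_{a_\infty}$. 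This is the key structural point: the plateau regime guarantees that the limit lies exactly on the mixture line over which the bargaining problem is posed, rather than on some mixture of drifted conditionals.

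Next, I would identify the weight. In the hard-separation regime, Proposition~\ref{prop:A2_plateau} places $a_\infty$ in the leakage interval of Lemma~\ref{lem:partition}, with $\kappa_i=e^{-\Delta_i}$. As $\Delta_1,\Delta_2\to\infty$, both endpoints $\frac{q-\kappa_1}{1-\kappa_1}$ and $\frac{q}{1-\kappa_2}$ converge to $q$, so $a_\infty=q$ and $p_\infty=p_q$. The corollary states this as ``the limiting distribution $q\,p_{S_{1,\varepsilon}}+(1-q)p_{S_{2,\varepsilon}}$'', so I would simply take that hard-separation limit as given, exactly as the corollary's displayed formula does.

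Finally, I would apply Theorem~\ref{thm:nash} with this $P_1,P_2$. Its hypotheses are precisely the assumed strict gains $u_1(P_1)>u_1(P_2)$ and $u_2(P_2)>u_2(P_1)$, with $d_1=u_1(P_2)$ and $d_2=u_2(P_1)$. The gains actually hold automatically in the plateau case, since there $u_1(P_1)-u_1(P_2)=\Delta_1>0$, and symmetrically for $u_2$. The theorem then gives the unique maximizer $\alpha^\star=q$, so $p_\infty=p_q$ is the unique weighted Nash bargaining solution on the segment.

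The main obstacle is interpretive rather than technical. The statement ``as leakage vanishes, the limit approaches $p_q$'' mixes a limit in the dynamics ($t\to\infty$) with a limit in the landscape ($\Delta_i\to\infty$). I would handle this by stating the conclusion for the limiting object: for each fixed $\Delta_i$, the limit is $p_{a_\infty(\Delta)}$ with $a_\infty(\Delta)\to q$, and $\|p_{a}-p_q\|_{TV}\le |a-q|$. This gives the approach to the Nash point in total variation.
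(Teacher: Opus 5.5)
Your proposal is correct and follows the paper's proof. The paper makes the same one-line reduction: it applies Theorem~\ref{thm:nash} with $P_i=p_{S_{i,\varepsilon}}$ and uses Proposition~\ref{prop:A2_plateau} to identify the hard-separation limit with $p_q$. Your extra remarks are valid refinements of points the paper glosses over: under the plateau parameterization the strict gains are automatic (they equal $\Delta_1$ and $\Delta_2$), and $\|p_a-p_q\|_{TV}=|a-q|\,\|P_1-P_2\|_{TV}\le|a-q|$ makes the interplay between $t\to\infty$ and $\Delta_i\to\infty$ precise.
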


\begin{proof}
This is Theorem~\ref{thm:nash} applied to $P_1=p_{S_{1,\varepsilon}}$ and $P_2=p_{S_{2,\varepsilon}}$,
together with Proposition~\ref{prop:A2_plateau}, which identifies $p_\infty$ with $p_q$.
\end{proof}

\subsection{Extension to multiple reward functions}
\label{appendix:multi-reward}

The two-reward analysis extends directly to any finite family of rewards
$\{r_1,\ldots,r_M\}$ sampled with probabilities $q_i>0$ and
$\sum_{i=1}^M q_i=1$. For each reward, define
$S_{i,\varepsilon}:=\{x\in\mathcal X:\ r_i(x)\ge r_i^*-\varepsilon\}$ and
$\mathcal S_\varepsilon:=\bigcup_{i=1}^M S_{i,\varepsilon}$. Let
$m_t:=p_t(\mathcal X\setminus\mathcal S_\varepsilon)$ and
$a_{i,t}:=p_t(S_{i,\varepsilon})$.

At each retraining step, the reward $R$ is drawn from
$\{r_1,\ldots,r_M\}$ with $\Pr(R=r_i)=q_i$. The finite-$K$ update is
\[
p_{t+1}(x)
=
p_t(x)\sum_{i=1}^M q_i H_{p_t}^{K,r_i}(x),
\]
where $H_{p_t}^{K,r_i}$ is the same BT curation weight as in the two-reward
case. By Lemma~\ref{lem:conc}, in the large-$K$ regime this becomes
\[
p_{t+1}(x)
=
p_t(x)\sum_{i=1}^M q_i
\frac{e^{r_i(x)}}{\mu_i(t)},
\qquad
\mu_i(t):=\mathbb E_{p_t}[e^{r_i(x)}].
\]
Thus the only change from the two-reward case is that the update multiplier is
now the convex combination of $M$ exponential tilts rather than two.

The outside-mass argument also extends verbatim. Namely, if there exists
$\rho_\varepsilon\in(0,1)$ such that
\[
\sup_{x\notin\mathcal S_\varepsilon}
\sum_{i=1}^M q_i\frac{e^{r_i(x)}}{\mu_i(t)}
\le
\rho_\varepsilon
\qquad\text{for all }t\ge0,
\tag{OD-M}
\]
then $m_{t+1}\le\rho_\varepsilon m_t$, and hence $m_t\to0$. As in the
two-reward case, this is best interpreted as a basin-completeness condition:
all regions reinforced by the pluralistic update should be included in
$\mathcal S_\varepsilon$.

For the allocation of mass inside $\mathcal S_\varepsilon$, assume for
simplicity that the basins are disjoint. If they overlap, one may replace them
by a measurable partition $\{\widetilde S_{i,\varepsilon}\}_{i=1}^M$ with
$\widetilde S_{i,\varepsilon}\subseteq S_{i,\varepsilon}$ and
$\bigcup_i\widetilde S_{i,\varepsilon}=\mathcal S_\varepsilon$. Suppose that
for each ordered pair $i\neq j$ there is a separation gap
$\Delta_{i\leftarrow j}>2\varepsilon$ such that
\[
x\in S_{j,\varepsilon}
\quad\Rightarrow\quad
r_i(x)\le r_i^*-\Delta_{i\leftarrow j}+\varepsilon,
\]
and define
\[
\kappa_{i\leftarrow j}:=\exp\!\big(-(\Delta_{i\leftarrow j}-2\varepsilon)\big),
\qquad
\kappa_{\max}:=\max_{i\neq j}\kappa_{i\leftarrow j}.
\]
Then the same tilt comparison used in Lemma~\ref{lem:partition} shows that
selection under reward $r_i$ assigns almost all of its mass to its own basin
$S_{i,\varepsilon}$, with leakage into other basins controlled by
$\kappa_{\max}$.

Consequently, once $m_t\to0$ and the limiting basin masses remain nontrivial,
the one-step basin update satisfies
\[
a_{i,t+1}
=
q_i
+
O(\kappa_{\max})
+
o(1),
\qquad i=1,\ldots,M.
\]
Equivalently, any subsequential limit $p_\infty$ supported on
$\mathcal S_\varepsilon$ decomposes as
\[
p_\infty
=
\sum_{i=1}^M a_{i,\infty}p_{\infty,i},
\qquad
p_{\infty,i}:=p_\infty(\cdot\mid S_{i,\varepsilon}),
\]
with
\[
|a_{i,\infty}-q_i|
\le
C\,\kappa_{\max}
\]
for a constant $C$ depending on the minimum limiting basin mass. In particular,
in the vanishing-leakage regime $\kappa_{\max}\to0$, the limiting weights
converge to the reward-sampling probabilities:
\[
a_{i,\infty}\to q_i,
\qquad i=1,\ldots,M.
\]

Thus the multi-reward case has the same structure as the two-reward case:
outside domination drives mass into the union of relevant basins, leakage
controls deviations from the target weights, and in the hard-separation limit
the pluralistic retraining loop converges to the mixture with weights
$(q_1,\ldots,q_M)$.
\section{Additional Experimental Details and Ablations}
\label{app:experiments}

\subsection{Appendix map}
\label{app:exp_map}

The main paper reports four core experiments:
\textbf{(E1)} synthetic GMM dynamics under pluralistic vs.\ single-reward curation,
\textbf{(E2)} synthetic phase transition as reward optima approach,
\textbf{(E3)} CIFAR-10 flow retraining under class-conditional rewards,
and \textbf{(E4)} GPT-2 retraining under conflicting length preferences.

This appendix contains the remaining experiments and reproducibility details:
\textbf{(E5)} extended polarization ($q$) sweeps and limiting-mixture diagnostics in the synthetic setting
(Figs.~\ref{fig:mult}, \ref{fig:q_controlled_variance}, \ref{fig:diff_qs});
\textbf{(E6)} finite-pool effects via sweeps over the discrete-choice pool size $K$ (Fig.~\ref{fig:ablate_k});
\textbf{(E7)} additional reward-separation visualizations and robustness checks (Fig.~\ref{fig:mode-separation});
\textbf{(E8)} leakage corroborations and implementation details (Sec.~\ref{app:leakage});
\textbf{(E9)} baseline comparisons;
\textbf{(E10)} CIFAR-10 retraining with learned neural reward models, including PickScore, ImageReward, and LAION Aesthetic Score;
and \textbf{(E11)} RLHF-style iterative retraining on Qwen-2.5-1.5B with helpfulness and safety rewards.
\subsection{Hardware and environment}
\label{app:hardware}

All experiments were conducted on a dedicated server with:
\begin{itemize}
\item \textbf{GPU:} 4 \texttt{NVIDIA H200} cards,    \item \textbf{CPU:} \texttt{Intel(R) Core(TM) i9-7920X CPU @ 2.90GHz} (12 physical cores, 24 threads),
    \item \textbf{RAM:} 128GB DDR4.
\end{itemize}
We fix random seeds when supported by the underlying libraries and report them in our code release.
Unless stated otherwise, all figures report per-iteration statistics computed from samples drawn from the current model at that iteration.

\subsection{Common retraining and selection protocol}
\label{app:common_protocol}

Across domains, retraining proceeds in rounds.
At round $t$, the current model $p_t$ produces a candidate pool $\{x_1,\dots,x_K\}\sim p_t$.
An active reward is drawn from a preference mixture policy (for two rewards: choose $r_1$ with probability $q$ and $r_2$ with probability $1-q$).
We then apply a discrete-choice rule (BT-style sampling) to select one item from the pool.
With temperature $\tau>0$,
\begin{equation}
\Pr\{x = x_i \mid r, \{x_j\}_{j=1}^K\}
\;=\;
\frac{\exp(r(x_i)/\tau)}{\sum_{j=1}^K \exp(r(x_j)/\tau)}.
\label{eq:bt_choice_app}
\end{equation}
Repeating this selection step $n_{\text{curated}}$ times produces a curated dataset $\mathcal{C}_t$ (sampling with replacement unless stated otherwise),
and the model is retrained on $\mathcal{C}_t$ to obtain $p_{t+1}$.

\textbf{Logged quantities.}
When rewards are available for analysis, we log at each iteration:
(i) expected rewards $\E_{x\sim p_t}[r_i(x)]$ and variances $\Var_{x\sim p_t}(r_i(x))$,
(ii) mixture allocations or proxy assignments when available (synthetic GMM),
(iii) entropy-like diversity proxies (CIFAR-10 and text).

\subsection{Synthetic Gaussian mixture details}
\label{app:gmm_details}

We instantiate pluralistic curation in a 2D Gaussian mixture environment to allow direct visualization and controlled ablations.

\textbf{Initialization.}
The generator is initialized as a one-component Gaussian mixture centered at $[5,5]$ with covariance $\Sigma=3I$, fit to 1000 points sampled from $\mathcal{N}([5,5],3I)$.
The reward modes are placed at $\mu_1=[2,2]$ and $\mu_2=[8,8]$ unless stated otherwise.

\textbf{Rewards.}
We use quadratic rewards centered at the reward modes:
\begin{equation}
r_1(x) = -\|x-\mu_1\|^2,
\qquad
r_2(x) = -\|x-\mu_2\|^2.
\end{equation}
To study reward separation, we vary the distance $d=\|\mu_1-\mu_2\|$ while keeping the midpoint fixed.

\textbf{Curation.}
At each iteration, we sample $K=100$ candidates from the current model and curate $n_{\text{curated}}=500$ samples using \eqref{eq:bt_choice_app}.
For each curated draw, the active reward is chosen independently: $r=r_1$ with probability $q$ and $r=r_2$ with probability $1-q$ (default $q=0.5$).

\textbf{Retraining regime.}
We run $T=50$ iterations.
For $t<10$, we refit a single-component GMM to simulate an early low-capacity regime.
For $t\ge 10$, we refit a two-component GMM to permit multimodality.
We fit GMMs via EM using scikit-learn with \texttt{random\_state=42} and default convergence settings.

\textbf{Evaluation.}
At each iteration, we sample 1000 points from the updated GMM and compute $\E[r_i]$ and $\Var(r_i)$ for $i\in\{1,2\}$.
In the two-component regime, we also record component means and mixture weights.

\textbf{Single-reward baseline.}
We include a baseline that uses only $r_1$ (equivalently, $q=1$) with all other settings fixed.

\subsubsection{Synthetic pseudocode}
\label{app:gmm_algorithms}

\begin{algorithm}[H]
\caption{Pluralistic Curation with Gaussian Mixture Retraining}
\label{alg:gmm_pluralistic}
\begin{algorithmic}[1]
\REQUIRE Reward centers $\mu_1,\mu_2$, iterations $T$, pool size $K$, curated draws $n_{\text{curated}}$, polarization $q$, temperature $\tau$
\STATE Define rewards $r_1(x)=-\|x-\mu_1\|^2$, $r_2(x)=-\|x-\mu_2\|^2$
\STATE Initialize generator $G_0$ as a 1-component GMM centered at $(\mu_1+\mu_2)/2$ with large covariance
\FOR{$t=1$ to $T$}
    \STATE Sample candidates $\{x_1,\dots,x_K\}\sim G_{t-1}$
    \STATE Initialize curated set $\mathcal{C}_t\leftarrow\emptyset$
    \FOR{$j=1$ to $n_{\text{curated}}$}
        \STATE Choose reward $r\leftarrow r_1$ w.p.\ $q$, else $r\leftarrow r_2$
        \STATE Compute scores $s_i=r(x_i)$ for $i=1,\dots,K$
        \STATE Normalize $\tilde{s}_i \leftarrow s_i - \max_\ell s_\ell$
        \STATE Set $p_i \propto \exp(\tilde{s}_i/\tau)$
        \STATE Sample $x^{(j)}$ from $\{x_i\}$ using $\{p_i\}$ and add to $\mathcal{C}_t$
    \ENDFOR
    \STATE Set \#components $k\leftarrow 1$ if $t<10$, else $k\leftarrow 2$
    \STATE Fit a $k$-component GMM $G_t$ to $\mathcal{C}_t$ via EM
\ENDFOR
\STATE \textbf{return} $G_T$
\end{algorithmic}
\end{algorithm}

\begin{algorithm}[H]
\caption{Single-Reward Curation Baseline (GMM)}
\label{alg:gmm_single}
\begin{algorithmic}[1]
\REQUIRE Reward center $\mu$, iterations $T$, pool size $K$, curated draws $n_{\text{curated}}$, temperature $\tau$
\STATE Define reward $r(x)=-\|x-\mu\|^2$
\STATE Initialize generator $G_0$ as a 1-component GMM with large covariance
\FOR{$t=1$ to $T$}
    \STATE Sample candidates $\{x_1,\dots,x_K\}\sim G_{t-1}$
    \STATE Initialize curated set $\mathcal{C}_t\leftarrow\emptyset$
    \FOR{$j=1$ to $n_{\text{curated}}$}
        \STATE Compute scores $s_i=r(x_i)$ for $i=1,\dots,K$
        \STATE Normalize $\tilde{s}_i \leftarrow s_i - \max_\ell s_\ell$
        \STATE Set $p_i \propto \exp(\tilde{s}_i/\tau)$
        \STATE Sample $x^{(j)}$ from $\{x_i\}$ using $\{p_i\}$ and add to $\mathcal{C}_t$
    \ENDFOR
    \STATE Set \#components $k\leftarrow 1$ if $t<10$, else $k\leftarrow 2$
    \STATE Fit a $k$-component GMM $G_t$ to $\mathcal{C}_t$ via EM
\ENDFOR
\STATE \textbf{return} $G_T$
\end{algorithmic}
\end{algorithm}


\subsection{CIFAR-10 experimental details}
\label{app:cifar_details}

We evaluate pluralistic curation on CIFAR-10~\cite{krizhevsky2009learning} using the same overall setup as \cite{ferbach2024self}.
We train a normalizing flow using optimal transport conditional flow matching (OT-CFM)~\cite{lipman2022flow,shaul2023kinetic,tong2023improving}.
The initial model is pretrained on all 50{,}000 training images.

\textbf{Per-round sampling and curation.}
At each retraining iteration, we generate $5\cdot 10^4$ samples from the current model and retain $2.5\cdot 10^3$ samples via discrete-choice selection governed by the BT rule.
Due to the computational cost of flow retraining, we run $T=25$ iterations.

\textbf{Rewards.}
Rewards are computed from class probabilities output by a pretrained VGG11 classifier~\cite{simonyan2015very} with 92.39\% CIFAR-10 test accuracy.
Let $q_i(x)$ denote the predicted probability of class $i$ for image $x$.
For a fixed target class $i$, we define $r(x)=\gamma\,q_i(x)$ for a constant scaling $\gamma$.

\textbf{Retraining regimes.}
\textbf{Balanced multi-preference:} choose $M\in\{1,\dots,5\}$ target classes and select the active reward uniformly each curated draw.
\textbf{Polarized two-preference:} fix two target classes and select the active reward with probability $q$ versus $1-q$.

\textbf{Metrics.}
We report FID~\cite{fid} and diversity proxies computed from classifier predictions and features:
(i) class entropy (higher implies broader class coverage),
(ii) KL divergence to uniform (lower implies more balanced coverage),
(iii) feature variance (higher implies greater inter-sample diversity in feature space),
(iv) intra-class variance (higher implies finer-grained diversity within predicted classes).

\subsection{Text-domain experimental details (GPT-2 length preferences)}
\label{app:text_details}

This section provides full details for the GPT-2 experiment, which tests pluralistic curation under conflicting and verifiable objectives.

\textbf{Data and task.}
We use WikiText-2~\cite{wikitext} as the initial seed pool for open-ended continuation.
We initialize with 1{,}000 sequences.

\textbf{Model.}
We use a lightweight GPT-2 style decoder with 6 transformer layers, 6 attention heads, and embedding dimension 384 (vocabulary size 50{,}257) to enable multi-round fine-tuning.

\textbf{Preferences and reward.}
We define two preferences over output length targets $T_A$ and $T_B$ (in words).
For a generated sequence $y$, let $L(y)$ be its word count.
The reward is
\begin{equation}
R(y;T) = -|L(y)-T|.
\end{equation}
We fix $T_A$ and vary $T_B$ to control the conflict distance $d=|T_A-T_B|$.

\textbf{Retraining loop.}
We run $N=20$ rounds.
Each round: (i) compute rewards for the current pool, (ii) curate 200 samples using a balanced mixture policy ($q=0.5$) with BT sampling proportional to $\exp(R/\tau)$ and $\tau=0.5$,
(iii) fine-tune using AdamW with learning rate $5\times 10^{-5}$ for 2 epochs per round (batch size 8),
(iv) generate 200 new samples via nucleus sampling (temperature $t=0.8$), filter with the same selection rule, and add survivors to the pool.

\textbf{Metric.}
We compute the discrete entropy $H(L)$ of the generated length distribution at each round.
Higher entropy indicates a broader policy over lengths and serves as a non-collapse proxy under competing length objectives.

\begin{figure}[t!]
\centering
\includegraphics[width=\textwidth]{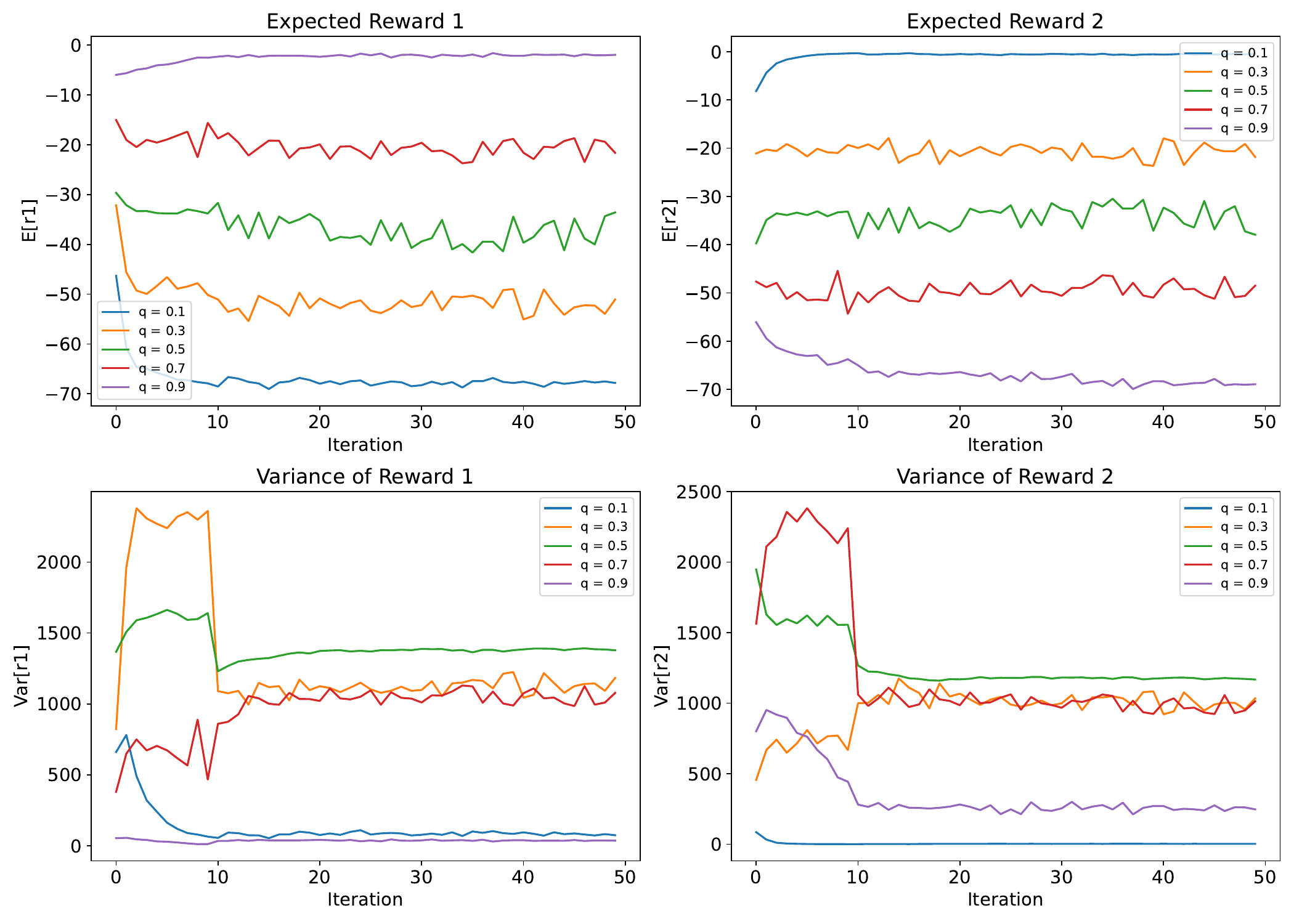}
\caption{\textbf{Effect of Curation Probability \(q\) in the Gaussian Synthetic Setting.} \(q\) in the 2D Gaussian Setup. Pluralistic curation is applied with varying \(q \in \{0.1, 0.3, 0.5, 0.7, 0.9\}\), controlling the preference toward \(\texttt{reward}_1\). Top two rows show the expected rewards for \(\texttt{reward}_1\) and \(\texttt{reward}_2\); bottom rows show the corresponding variances across iterations on synthetic data.}
 \label{fig:mult}
\end{figure}

\subsection{Leakage corroborations and proxies}
\label{app:leakage}

Our theory is phrased in terms of cross-basin leakage, meaning the probability that selection under one preference draws samples that belong to another preference basin.
In experiments, we estimate leakage using domain-specific proxies.

\textbf{Synthetic (GMM).}
We define basin assignments by nearest reward center:
$x$ is assigned to basin $i$ if $\|x-\mu_i\|\le \|x-\mu_{3-i}\|$.
When $r_1$ is active, we estimate leakage as the fraction of selected samples assigned to basin 2, and symmetrically for $r_2$.
Leakage increases as reward optima approach, consistent with the phase transition in Fig.~\ref{fig:phase} in the main paper.

\textbf{CIFAR-10.}
We use the classifier's predicted class as a coarse basin label.
When the active reward targets class $i$, we estimate leakage as the fraction of curated samples whose predicted class is not $i$.
This proxy is conservative because it mixes semantic leakage with classifier uncertainty, but it tracks the same qualitative trend: balanced pluralism sustains coverage, while extreme polarization reduces it.

\textbf{Text (GPT-2).}
We define length bins around each target (for example, $|L(y)-T_A|\le \delta$ and $|L(y)-T_B|\le \delta$) as basin proxies.
Leakage is the fraction of selected samples that fall into the non-active bin under the active reward.
As $d$ increases, both bins remain populated under balanced pluralistic selection, consistent with the entropy trends in Fig.~\ref{fig:text_entropy} in the main paper.

\begin{figure}[t!]
\centering
\includegraphics[width=\textwidth]{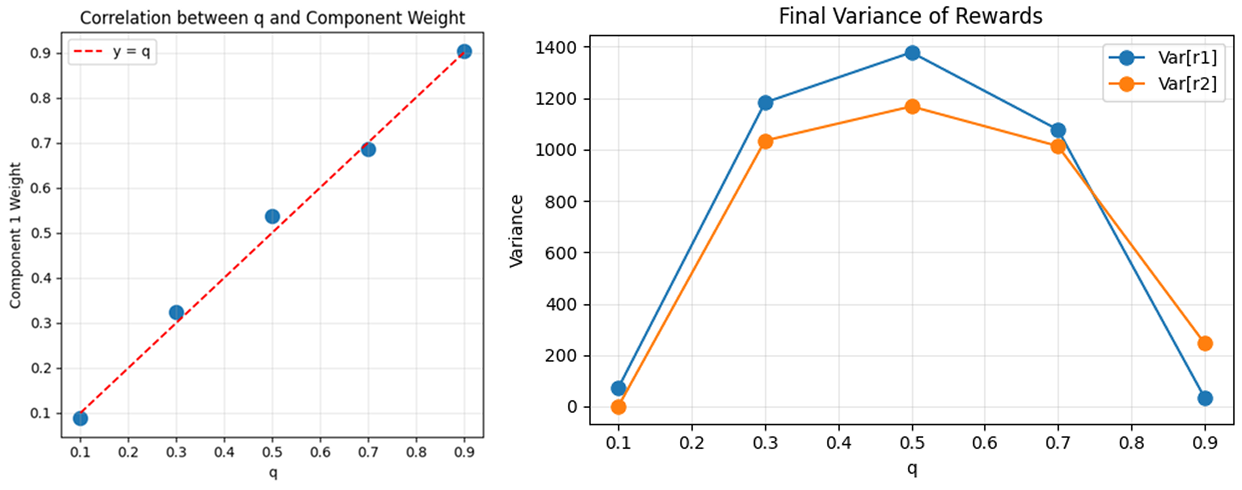}
\caption{
\textbf{Effect of Polarization Parameter \( q \) on Mixture Weight and Diversity.}
\textit{(left)} The component weight tracks \( q \), confirming convergence to a \( (q, 1-q) \) mixture.  
\textit{(right)} Reward variance peaks near \( q = 0.5 \), indicating maximal diversity, and declines at extremes due to partial collapse.}
\label{fig:q_controlled_variance}
\end{figure}

\subsection{Additional Experiments}
\label{appendix:additional_exp}

\begin{figure*}[t!]
\centering
\includegraphics[width=1\textwidth]{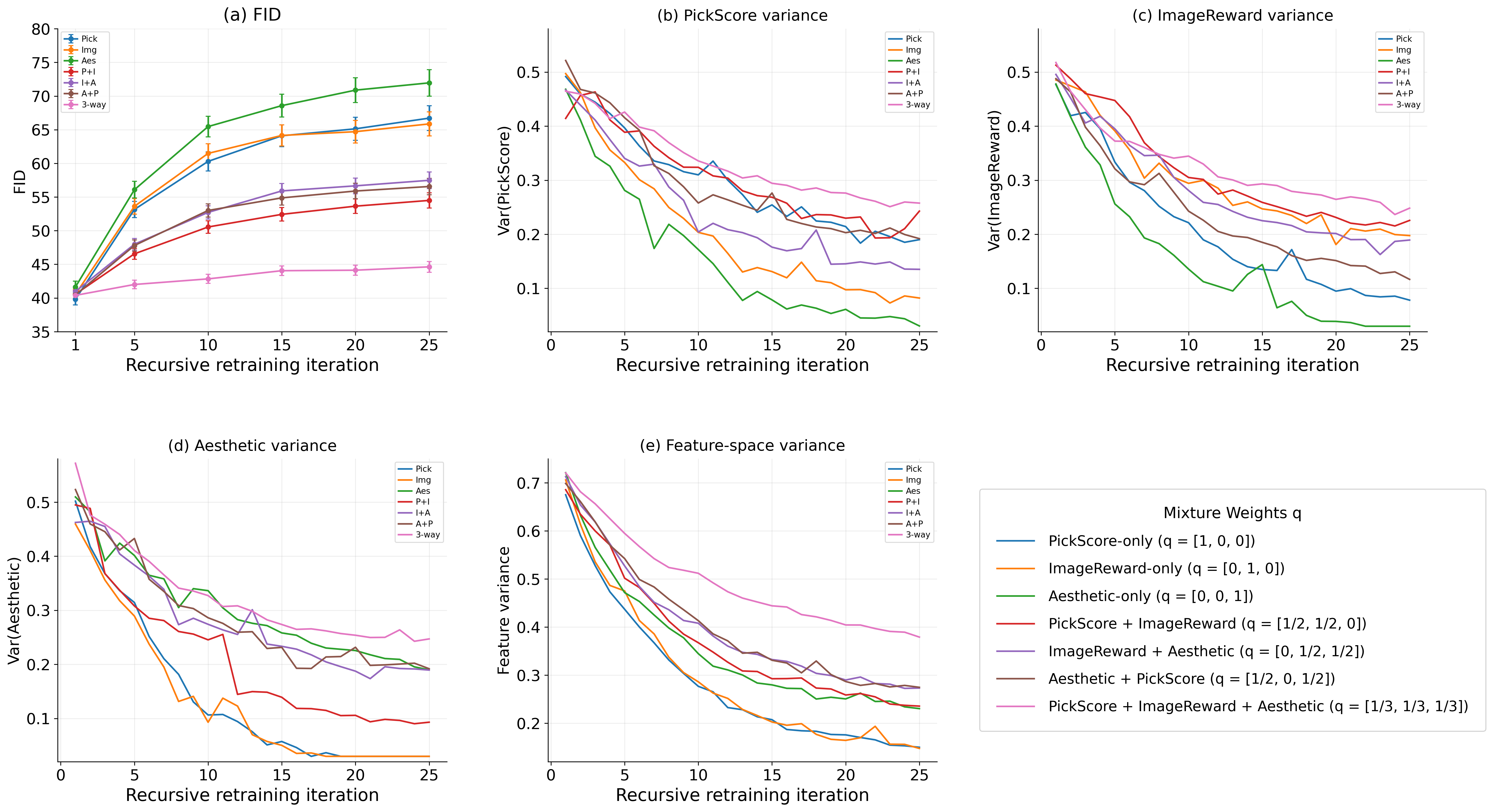}
\caption{\textbf{CIFAR-10 retraining with learned reward models.}
We replace classifier rewards with PickScore, ImageReward, and LAION Aesthetic Score, and compare single-reward curation, balanced pairwise mixtures, and a uniform 3-way mixture over 25 retraining rounds using the same OT-CFM flow. Single-reward curation degrades FID and collapses feature diversity, pairwise mixtures partially mitigate collapse, and the 3-way mixture performs best. Reward variance is preserved primarily when that reward participates in the curation mixture, matching the leakage mechanism predicted by Lemma~\ref{lem:partition}.}
\label{fig:learned_rewards_cifar}
\end{figure*}

\begin{figure*}[t!]
\centering
\includegraphics[width=\textwidth]{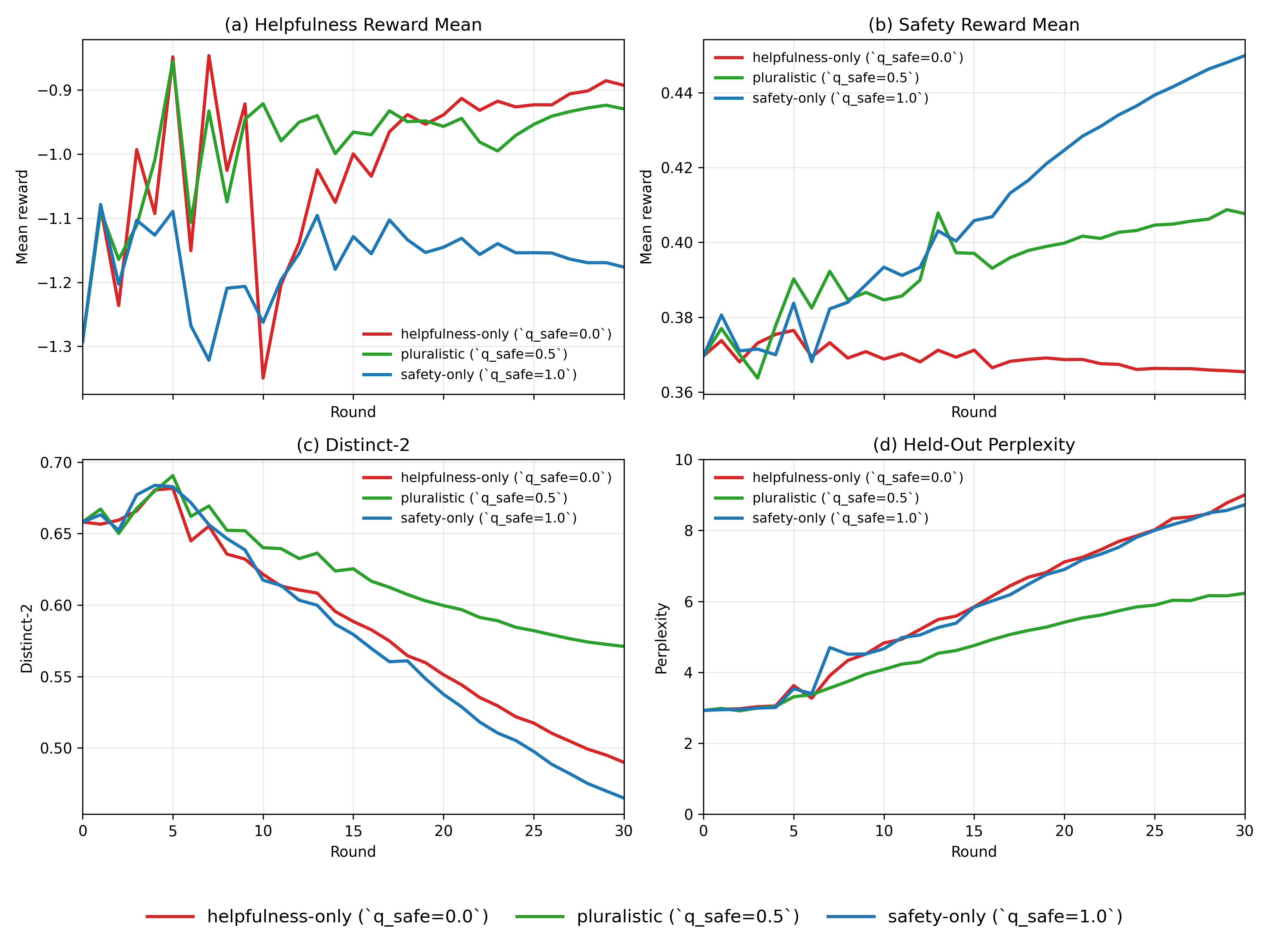}
\caption{\textbf{Iterative Qwen-2.5 retraining with RLHF-relevant rewards.}
We fine-tune Qwen-2.5-1.5B for 30 recursive rounds using BT-curated synthetic outputs under helpfulness and safety rewards. Helpfulness-only and safety-only curation over-optimize one objective while degrading the other. Balanced pluralistic curation maintains both objectives, preserves lexical diversity, and yields lower held-out perplexity, indicating less degeneration under recursive retraining.}
\label{fig:qwen_rlhf}
\end{figure*}

\paragraph{Learned reward models on CIFAR-10}
\label{app:learned_reward_cifar}

We also test pluralistic curation with learned neural reward models rather than classifier-based rewards. Specifically, we use PickScore~\citep{pickscore}, ImageReward~\citep{imagereward}, and the LAION Aesthetic Score predictor~\citep{laionaesthetics} as three black-box preferences. All experiments use the same OT-CFM retraining setup as the main CIFAR-10 experiment: at each round, the current model generates $5\times 10^4$ samples and $2.5\times 10^3$ are retained by BT curation.

We compare three types of curation policies: single-reward curation using only one reward, balanced pairwise mixtures using two rewards with equal probability, and a uniform 3-way mixture. Fig.~\ref{fig:learned_rewards_cifar} shows that single-reward retraining degrades most, pairwise mixtures are more stable, and the 3-way mixture gives the best overall quality and diversity. In particular, single-reward FID degrades to roughly $60$--$72$, pairwise mixtures stabilize around $52$--$58$, and the 3-way mixture stays near $44$.

The reward-variance curves also match the leakage interpretation. A reward's variance is preserved mainly when that reward participates in the curation mixture. For example, the PickScore+ImageReward mixture, which excludes the Aesthetic reward, collapses Aesthetic variance to roughly the single-reward level, while mixtures that include Aesthetic preserve substantially higher Aesthetic variance. This supports the claim that pluralistic curation preserves diversity by directly reinforcing multiple reward basins.

\paragraph{RLHF-style iterative retraining}
\label{app:qwen_rlhf}

To test the theory in a setting closer to RLHF practice, we run iterative retraining with Qwen-2.5-1.5B~\citep{qwen25} under two competing alignment rewards. At each round, the model generates $N=500$ continuations on prompts sampled from UltraFeedback~\citep{ultrafeedback}. The continuations are curated with BT selection and the model is then fine-tuned by SFT on the selected outputs. We repeat this loop for 30 rounds. We use ArmoRM-Llama3-8B-v0.1~\citep{armorm} as the helpfulness reward and Llama-Guard-3-1B~\citep{llamaguard3} as the safety evaluator, with $r_{\mathrm{safe}}(y)=1-P_{\mathrm{unsafe}}(y)$. We compare helpfulness-only curation, safety-only curation, and balanced pluralistic curation with $q_{\mathrm{safe}}=0.5$. Fig.~\ref{fig:qwen_rlhf} shows the same qualitative behavior predicted by the theory. Single-reward curation improves its target objective but degrades the other: helpfulness-only curation reduces safety, while safety-only curation reduces helpfulness. Balanced pluralistic curation maintains both objectives at competitive levels. It also preserves diversity better, with Distinct-2 near $0.60$ compared to $0.47$--$0.49$ for single-reward curation, and it limits held-out perplexity growth to roughly $6$ instead of $8$--$9$. Thus, even outside the exact-MLE abstraction, pluralistic curation reduces recursive degeneration relative to single-objective retraining.

\paragraph{Impact of Polarization Parameter \( q \).}
Fig.~\ref{fig:mult} shows how changing the polarization parameter \( q \), which dictates the probability of selecting \( r_1 \), affects the pluralistic curation dynamics. Results for \( q \in \{0.1, 0.3, 0.5, 0.7, 0.9\} \) demonstrate that reward convergence and variance behavior depend on this parameter. With low \( q \) values (0.1 and 0.3), the model favors the second Gaussian mode, reflected in lower expected rewards and variances for \( r_1 \) and higher corresponding values for \( r_2 \). Conversely, higher \( q \) values (0.7 and 0.9) show an opposite behavior by selecting samples around the first mode. The balanced setting \( q=0.5 \) results in a stable equilibrium distribution. Thus, pluralistic curation mitigates mode collapse, maintains diversity, and converges toward stable mixed distributions.

\textbf{Limiting distributions and mixture weights under varying $q$.}
Figures~\ref{fig:q_controlled_variance} and~\ref{fig:diff_qs} visualize the limiting distributions for different values of $q$.
The mass allocated to each component closely tracks the polarization parameter, confirming the theoretical prediction that the model converges to a mixture of the two high-reward regions with weights approximately $(q,1-q)$.
Importantly, each component remains distinct and sharp, indicating that pluralistic curation avoids collapse while preserving structure within each basin.
This is aligned with the weighted bargaining interpretation in which the final distribution acts as a $q$-weighted compromise between competing preferences.

\begin{figure}[t!]
\centering
\includegraphics[width=\textwidth]{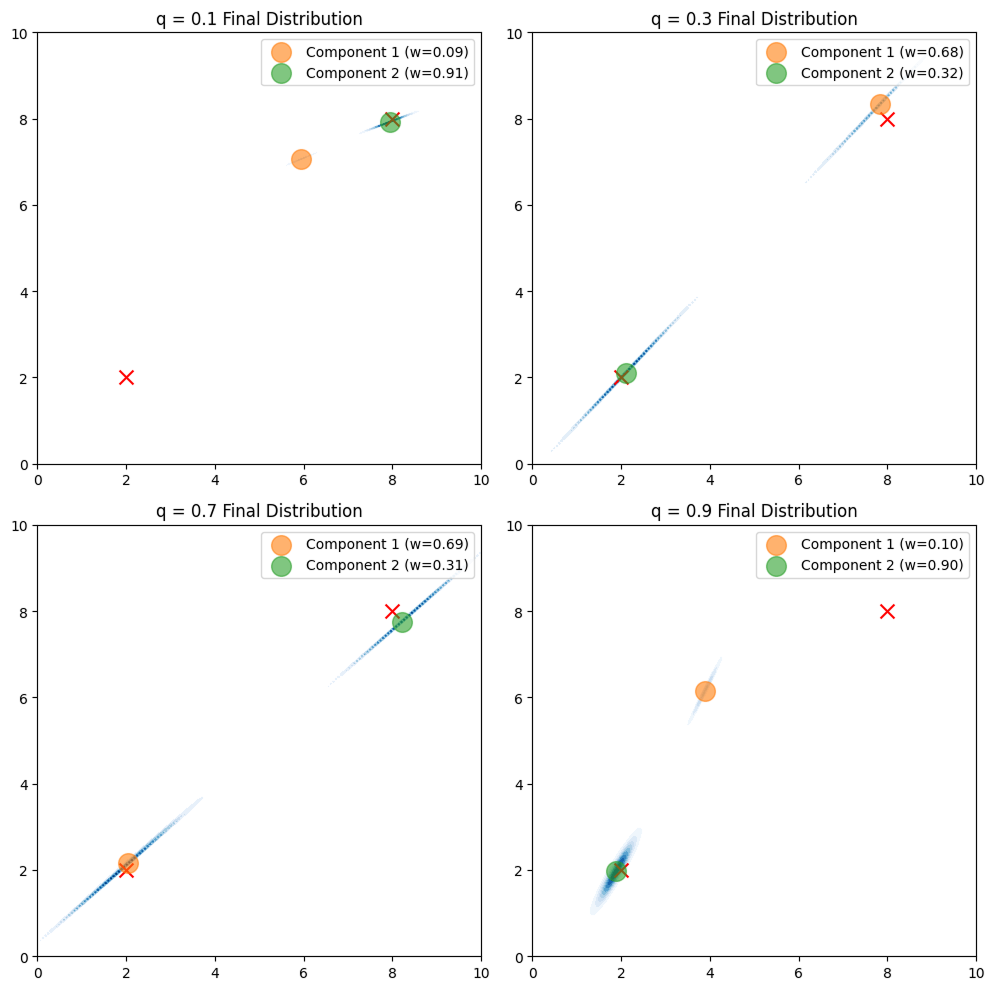}
\caption{
\textbf{Limiting distributions under varying parameter $q$.}
Final model distributions for $q\in\{0.1,0.3,0.7,0.9\}$ show that the learned mixture allocates mass across components in proportion to $q$.}
\label{fig:diff_qs}
\end{figure}

\begin{figure}[t!]
\centering
\includegraphics[width=\textwidth]{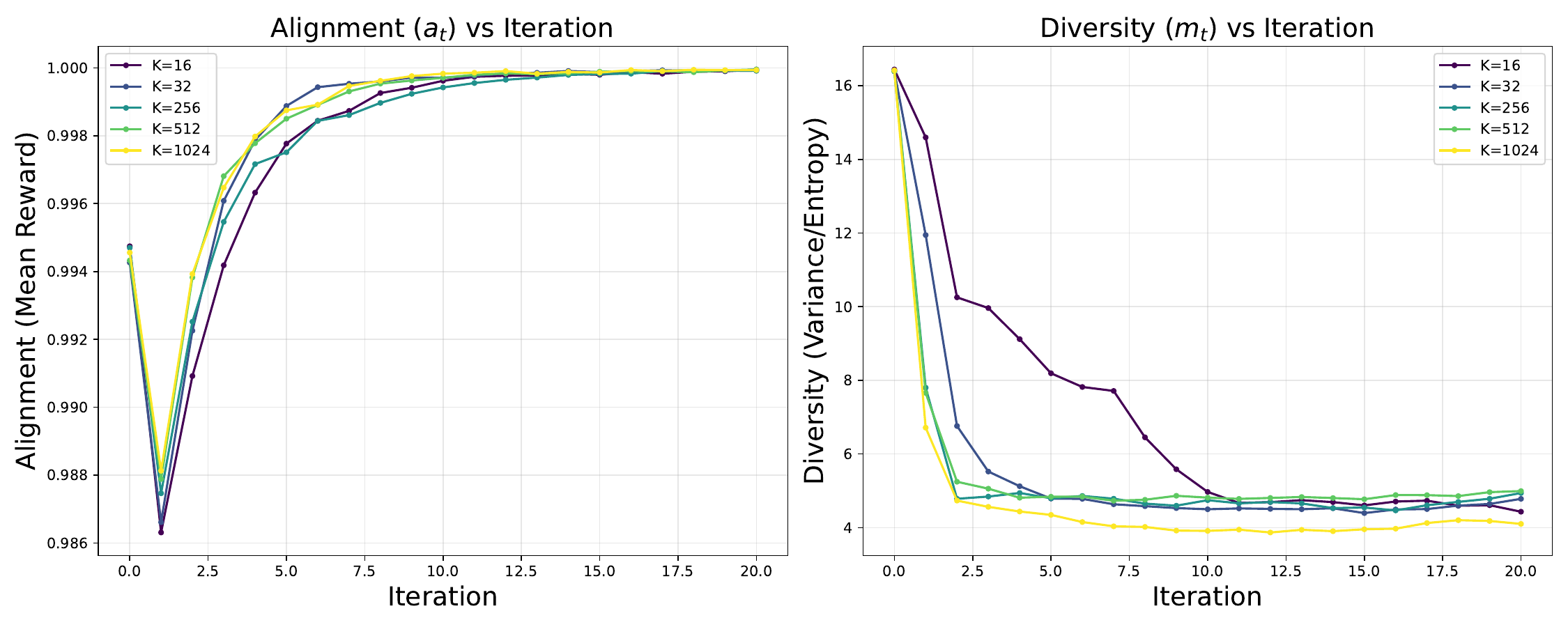}
\caption{\textbf{Finite-$K$ ablation validates the $K\to\infty$ theory.}
Toy retraining dynamics under Best-of-$K$ curation (with replacement) for varying pool size $K$.
\textbf{Left:} alignment $a_t$ (mean reward of samples drawn from $p_t$) versus iteration.
\textbf{Right:} diversity proxy $m_t$ (entropy/variance-based) versus iteration.}
\label{fig:ablate_k}
\end{figure}

\textbf{Impact of reward distance.}
Fig.~\ref{fig:mode-separation} shows both the initial model distribution (top rows) and the final learned distribution (bottom rows) across different separation values.
When the means are far apart (e.g., distance $=6$ or $4$), the model converges to a well-separated mixture with substantial mass on both reward regions.
As the distance decreases, we observe a soft collapse in which components merge toward a compromise mode.
At distance $=0$, pluralistic curation becomes ineffective because the two preferences are indistinguishable, yielding full collapse.
This reinforces the main-paper phase transition: diversity is only preserved when competing rewards induce sufficiently distinct optima.

\textbf{Impact of $K$ pool-size (finite-$K$ ablation).}
\label{sec:exp1_k_ablation}
To investigate the effect of selection pressure, we ablate the pool size parameter $K$ used in the curation mechanism.

We fix $T=20$, $N_{\text{pool}}=50{,}000$, $N_{\text{keep}}=2{,}500$, and reward scaling $\gamma=10.0$.
We sweep $K\in\{16,32,256,512,1024\}$ and track:
\emph{alignment} $a_t$ (mean reward of samples drawn from $p_t$) and a \emph{diversity proxy} $m_t$ (entropy/variance-based) over iterations.

Fig.~\ref{fig:ablate_k} reveals a transition from finite-$K$ to large-$K$ behavior.
For small $K$ (e.g., $16$ or $32$), the curation step is noisier, producing slower improvement of $a_t$ and a more gradual decay of $m_t$.
As $K$ increases, the trajectories rapidly concentrate: for $K\ge 256$, alignment curves nearly coincide and reach a near-saturated level early in training, while diversity collapses quickly and then plateaus.
Beyond this moderate threshold, the observed dynamics are effectively indistinguishable from the $K\to\infty$ regime, supporting our theoretical focus on the large-$K$ limit as a predictive approximation for practical finite-$K$ settings.

\subsection{Empirical Corroboration}
\label{sec:appendix_additional_experiments}

In this section, we provide empirical corroborations of our main theoretical results from Section~\ref{sec:theory}. Specifically, we verify the quantitative bounds on basin mass under leakage from Lemma~\ref{lem:partition}, the exponential decay of mass outside the near-optimal basins from Lemma~\ref{lem:decay}, and the robustness of the Nash bargaining interpretation from Theorem~\ref{thm:nash} under varying reward separations.

\subsubsection{Leakage Bounds and Mass Partitioning}
\label{subsec:exp_leakage}
\textbf{Experimental Setting.} 
We vary the mixing weight $q \in \{0.1, 0.25, 0.4, 0.6, 0.75, 0.9\}$ and the distance $d \in [0, 8]$. For each configuration, we run the infinite-$K$ update rule for 50 iterations and measure the final probability mass $a_\infty = p_\infty(S_{1,\epsilon})$ concentrated in the first basin, defined with $\epsilon=0.1$. We strictly calculate the theoretical upper and lower bounds derived in Lemma 3.4 using the analytical leakage factors $\kappa_1, \kappa_2$.

\textbf{Analysis.}
Fig.~\ref{fig:leakage_variations} (and Table \ref{tab:leakage}) presents the results. As predicted, when the separation distance $d$ is small ($d < 2$), the leakage factors $\kappa$ correspond to loose bounds, and the empirical mass deviates from the target $q$ due to cross-basin reinforcement. However, as separation increases ($d > 4$), the bounds tighten exponentially effectively collapsing onto the line $a_\infty = q$. Importantly, for all tested values of $q \in (0, 1)$, the empirical mass remains strictly within the theoretical "allowable region" (shaded grey). This confirms that Lemma 3.4 correctly characterizes the worst-case deviation induced by reward leakage.

\begin{figure}[h]
    \centering
    \includegraphics[width=\linewidth]{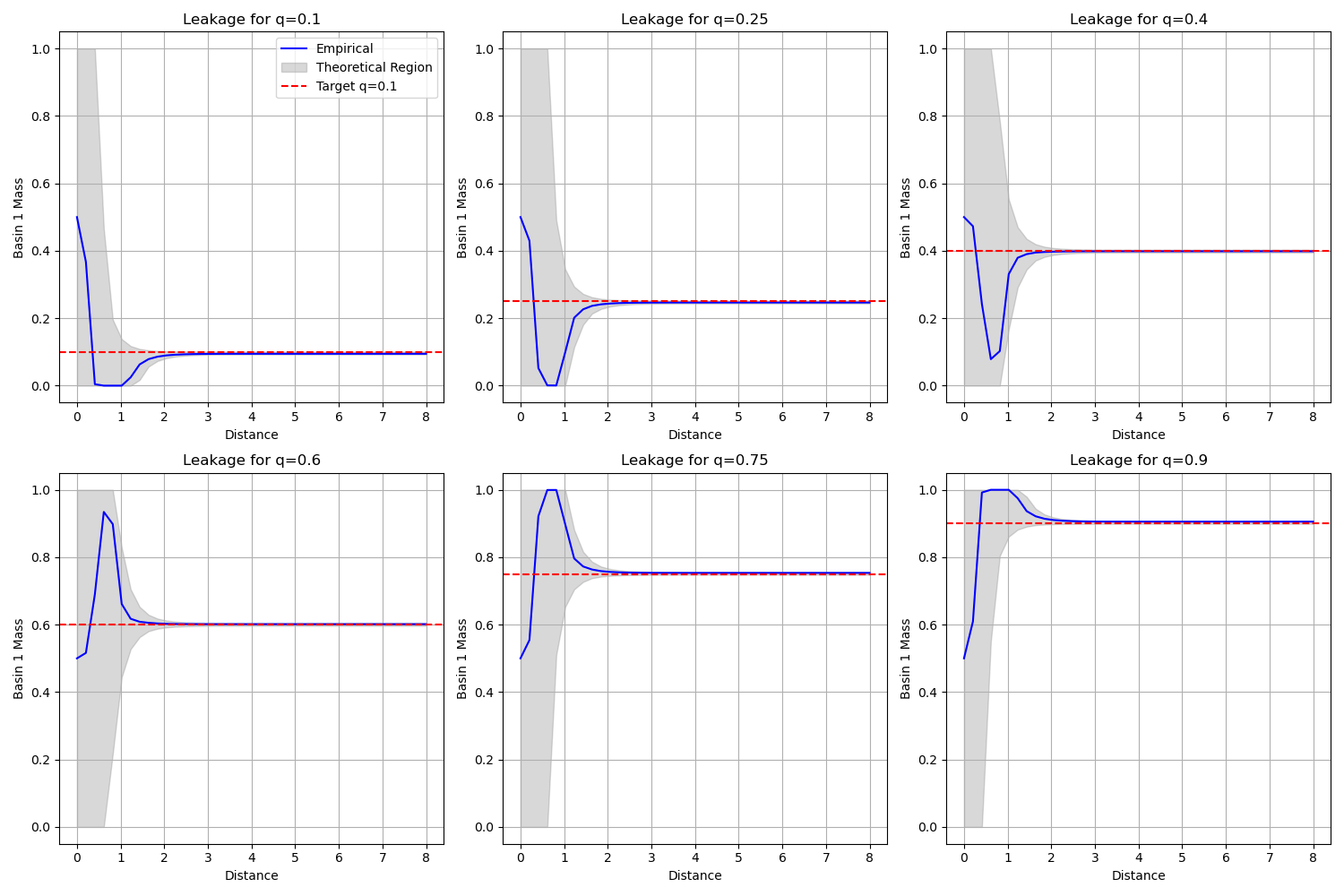}
    \caption{\textbf{Corroboration of Leakage Bounds (Lemma 3.4).} Final basin mass $a_\infty$ (blue line) versus mode distance for varying preference weights $q$. The shaded grey region represents the theoretical bounds derived from leakage factors. As separation increases, leakage vanishes and the mass converges exactly to $q$.}
    \label{fig:leakage_variations}
\end{figure}

\begin{table}[h]
\caption{Validation of leakage bounds across preference weights $q$ and distances $D$.
For each configuration, we report the theoretical lower and upper bounds on basin mass together with the empirically observed mass $a_\infty$.}
\label{tab:leakage}
\small
\centering
\begin{tabular}{ccccc}
\toprule
\textbf{Preference Weight $q$} & \textbf{Distance $D$} & \textbf{Lower Bound} & \textbf{Empirical Mass $a_\infty$} & \textbf{Upper Bound} \\
\midrule
\multicolumn{5}{c}{\emph{Low preference weights}} \\
$q = 0.1$ & 2.0 & 0.082 & \textbf{0.090} & 0.102 \\
$q = 0.1$ & 4.0 & 0.082 & \textbf{0.090} & 0.102 \\
$q = 0.1$ & 6.0 & 0.093 & \textbf{0.095} & 0.101 \\
\midrule
\multicolumn{5}{c}{\emph{Moderate preference weights}} \\
$q = 0.2$ & 2.0 & 0.235 & \textbf{0.244} & 0.255 \\
$q = 0.2$ & 4.0 & 0.235 & \textbf{0.244} & 0.255 \\
$q = 0.2$ & 6.0 & 0.244 & \textbf{0.247} & 0.252 \\
$q = 0.4$ & 2.0 & 0.388 & \textbf{0.397} & 0.408 \\
$q = 0.4$ & 4.0 & 0.388 & \textbf{0.397} & 0.408 \\
$q = 0.4$ & 6.0 & 0.395 & \textbf{0.399} & 0.403 \\
\midrule
\multicolumn{5}{c}{\emph{High preference weights}} \\
$q = 0.6$ & 2.0 & 0.592 & \textbf{0.603} & 0.612 \\
$q = 0.6$ & 4.0 & 0.592 & \textbf{0.603} & 0.612 \\
$q = 0.6$ & 6.0 & 0.597 & \textbf{0.601} & 0.605 \\
$q = 0.8$ & 2.0 & 0.745 & \textbf{0.756} & 0.765 \\
$q = 0.8$ & 4.0 & 0.745 & \textbf{0.756} & 0.765 \\
$q = 0.8$ & 6.0 & 0.748 & \textbf{0.753} & 0.756 \\
$q = 0.9$ & 2.0 & 0.898 & \textbf{0.910} & 0.918 \\
$q = 0.9$ & 4.0 & 0.898 & \textbf{0.910} & 0.918 \\
$q = 0.9$ & 6.0 & 0.899 & \textbf{0.905} & 0.907 \\
\bottomrule
\end{tabular}
\end{table}

\subsubsection{Convergence Speed and Outside Mass Decay}
\label{subsec:exp_decay}

\textbf{Experimental Setting.}
To verify the cleaning dynamics established in Lemma 3.3, we track the total probability mass $m_t = p_t(\mathcal{X} \setminus S_\epsilon)$ outside the $\epsilon$-optimal basins over retraining iterations. We initialize with a uniform distribution and fix $q=0.5$. We vary the mode separation $d \in \{2.0, 4.0, 6.0, 8.0\}$ to observe the effect of reward landscape geometry on convergence speed.

\textbf{Analysis.} (Fig.~\ref{fig:decay_variations}, Table \ref{tab:decay}).
We observe a strictly linear decrease in $\log m_t$, confirming that the outside mass decays exponentially as predicted ($m_{t+1} \le \rho_\epsilon m_t$). Furthermore, the rate of decay (slope) increases with separation distance. At $d=8.0$, the mass vanishes almost instantly ($\text{slope} \approx -13.6$), whereas for $d=2.0$ the cleaning process is slower ($\text{slope} \approx -1.3$) due to the shallowness of the reward gradients between modes. This validates that pluralistic curation efficiently concentrates mass onto the union of high-reward basins.

\begin{figure}[h]
    \centering
    \includegraphics[width=\linewidth]{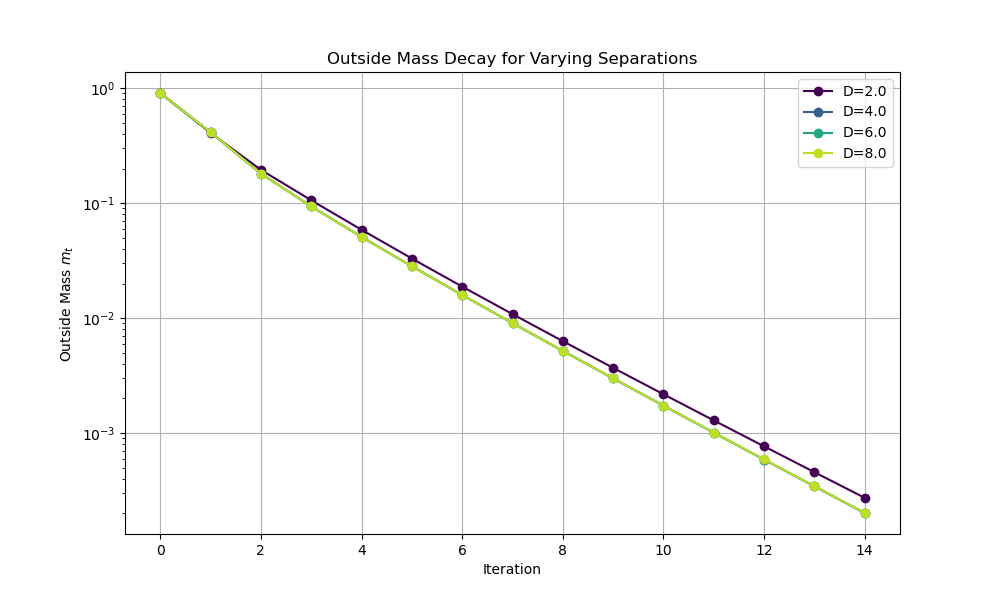}
    \caption{\textbf{Evolution of Outside Mass $m_t$.} The logarithmic scale reveals an exponential decay of mass located outside the near-optimal basins. Greater mode separation accelerates this convergence.}
    \label{fig:decay_variations}
\end{figure}

\begin{table}[h]
\caption{Convergence speed as a function of mode separation.
We report the empirical decay rate given by the slope of $\log m_t$, where $m_t$ denotes the outside-basin mass.}
\label{tab:decay}
\small
\centering
\begin{tabular}{cc}
\toprule
\textbf{Mode Distance $D$} & \textbf{Empirical Decay Rate} \\
\midrule
2.0 & $-0.5650$ \\
4.0 & $-0.5855$ \\
6.0 & $-0.5854$ \\
8.0 & $-0.5852$ \\
\bottomrule
\end{tabular}
\end{table}

\subsubsection{Robustness of the Nash Bargaining Solution}
\label{subsec:exp_nash}

\textbf{Experimental Setting.}
Theorem 3.7 establishes that in the limit of hard separation (vanishing leakage), the limiting distribution is the maximizer of the weighted Nash product with weight $q$. To test the robustness of this interpretation in non-ideal settings, we sweep the preference weight $q \in [0.1, 0.9]$ and measure the resulting empirical basin weight $a_\infty$. We compare this to the "ideal" Nash prediction $a_\infty = q$ across different separation distances $d \in \{1.0, 2.0, 3.0, 5.0\}$.

\textbf{Analysis.}
Fig.~\ref{fig:nash_robustness} shows the deviation from the ideal identity line. At $d=5.0$ (hard separation), the empirical curve perfectly overlaps with the diagonal $y=x$, indicating that the system behaves exactly as a Weighted Nash Bargainer. As distance decreases to $d=1.0$, significant deviations occur where the system "under-bargains" (mass does not fully shift to the preferred basin) due to the mixing induced by leakage. Table \ref{tab:nash} quantifies this via the Mean Squared Error (MSE) from the ideal. The rapid decay of MSE as $d$ increases suggests that the Nash interpretation provides a highly accurate model for pluralistic curation even at moderate reward separations.

\begin{figure}[h]
    \centering
    \includegraphics[width=0.9\linewidth]{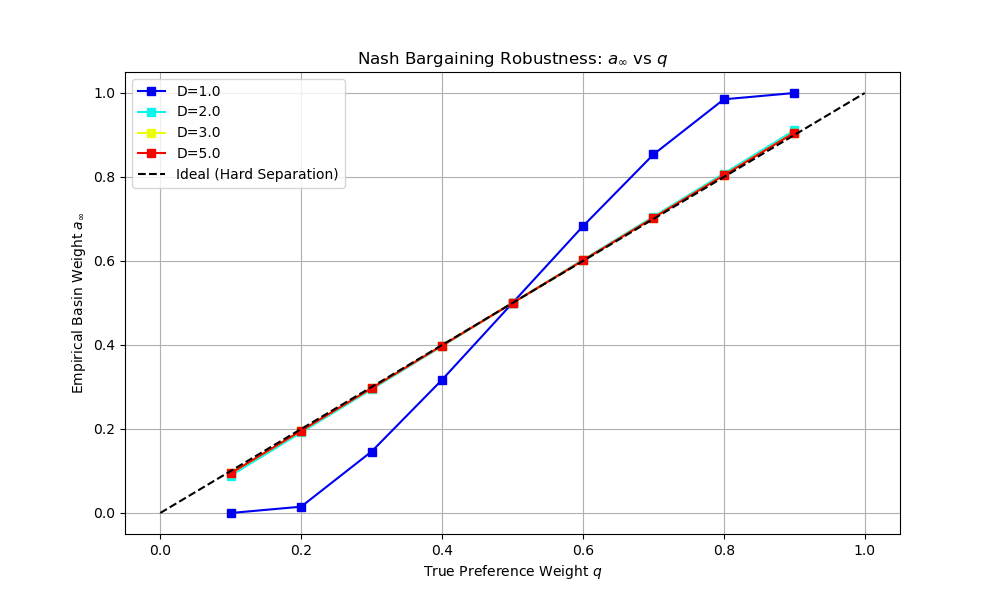}
    \caption{\textbf{Robustness of Nash Bargaining.} The plot compares the empirical basin weight $a_\infty$ against the input preference weight $q$. In the hard separation limit ($d=5.0$), the system perfectly matches the Nash prediction (diagonal). Smaller separations introduce consistent deviations due to leakage.}
    \label{fig:nash_robustness}
\end{figure}

\begin{table}[h]
\caption{Robustness of the Nash approximation under increasing mode separation.
We report the mean squared error between the empirically recovered mixture weight $\alpha$ and the Nash prediction $q$.}
\label{tab:nash}
\small
\centering
\begin{tabular}{cc}
\toprule
\textbf{Mode Distance $D$} & \textbf{Nash Approximation MSE} \\
\midrule
1.0 & $1.66 \times 10^{-2}$ \\
2.0 & $5.06 \times 10^{-5}$ \\
3.0 & $1.40 \times 10^{-5}$ \\
5.0 & $1.25 \times 10^{-5}$ \\
\bottomrule
\end{tabular}
\end{table}

\begin{figure}[t!]
\centering
\includegraphics[width=1\textwidth]{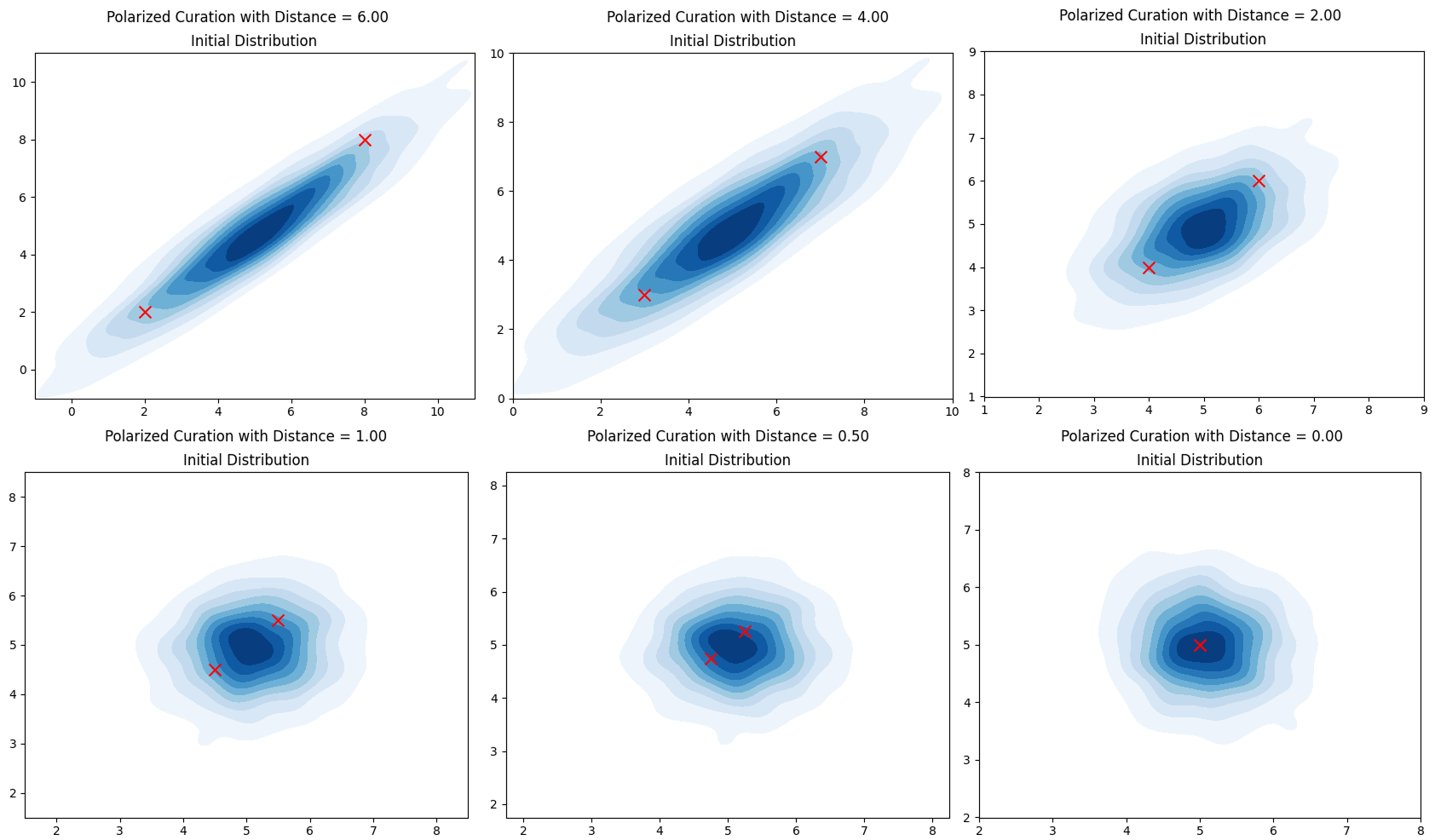}
\includegraphics[width=1\textwidth]{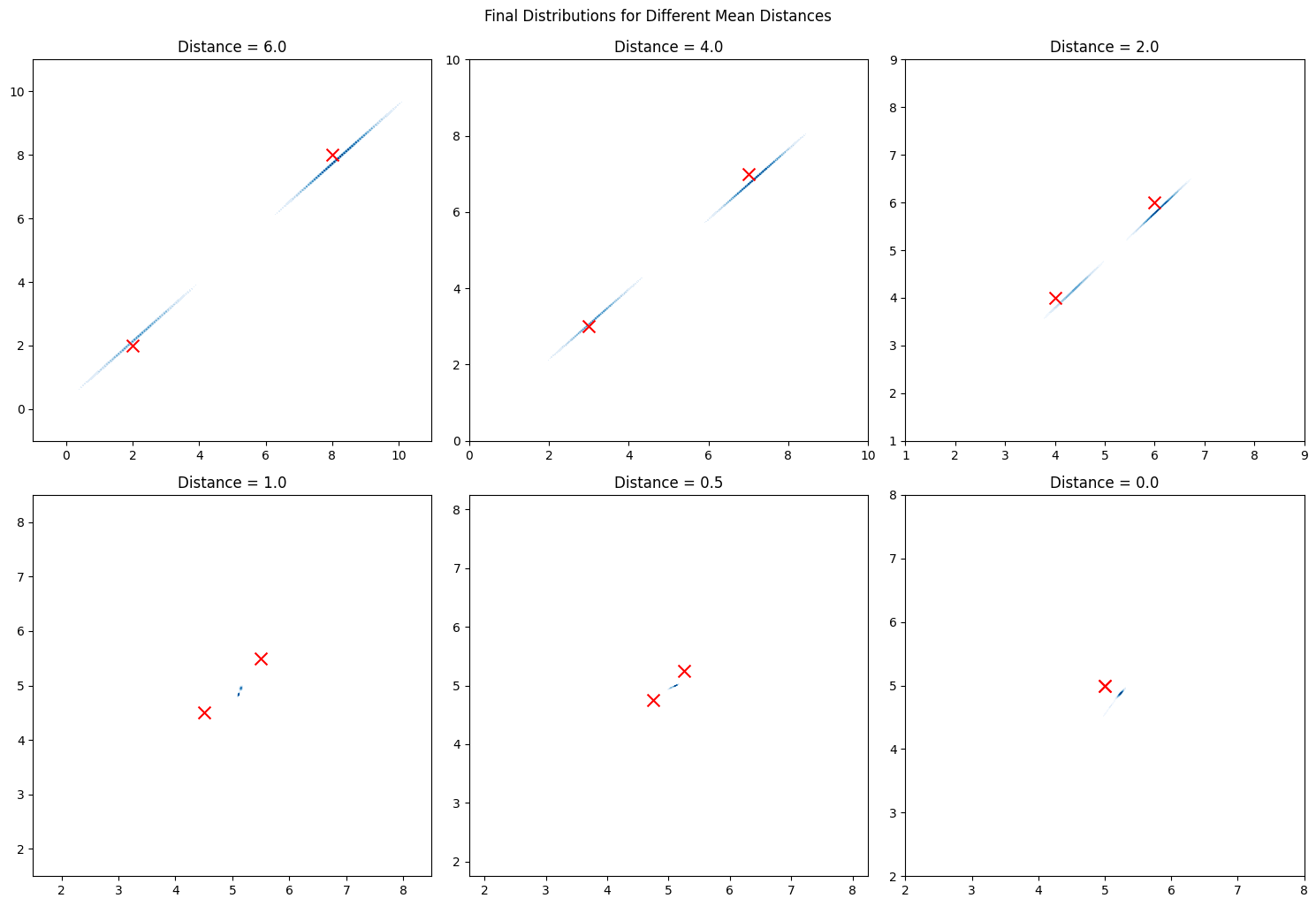}

\caption{
    \textbf{Effect of Reward Mode Separation on Final Distribution.}
    Top: Initial model distribution under pluralistic curation with varying reward mean distances.  
    Bottom: Final distributions after retraining. When the reward maxima are well-separated (e.g., distances 6 or 4), pluralistic curation leads to a balanced mixture across modes. As distance decreases, the model gradually collapses to a single compromise mode. At distance = 0, the rewards are indistinguishable and diversity is lost.
    }
\label{fig:mode-separation}
\end{figure}

\clearpage

\end{document}